\documentclass{article}

\usepackage{amsmath,amsfonts,bm}

\def\eqref#1{equation~\ref{#1}}

\def\1{\bm{1}}

\def\vtheta{{\bm{\theta}}}

\def\vs{{\bm{s}}}

\def\vx{{\bm{x}}}
\def\vy{{\bm{y}}}

\DeclareMathAlphabet{\mathsfit}{\encodingdefault}{\sfdefault}{m}{sl}
\SetMathAlphabet{\mathsfit}{bold}{\encodingdefault}{\sfdefault}{bx}{n}

\usepackage{microtype}
\usepackage{graphicx}
\usepackage{subcaption}
\usepackage{booktabs} 

\usepackage{multirow} 
\usepackage{makecell}
\usepackage[table]{xcolor} 
\definecolor{rcblue}{RGB}{65, 105, 225} 
\usepackage{listings}
\usepackage{tabularx}
\usepackage{enumitem}
\usepackage{float} 
\usepackage{wrapfig} 
\usepackage{adjustbox}
\usepackage{pifont} 

\usepackage{hyperref}

\usepackage[accepted]{icml2026}

\usepackage{amsmath}
\usepackage{amssymb}
\usepackage{mathtools}
\usepackage{amsthm}

\usepackage[capitalize,noabbrev]{cleveref}
\crefname{section}{Sec.}{Secs.}
\Crefname{section}{Section}{Sections}
\crefname{table}{Tab.}{Tabs.}
\Crefname{table}{Table}{Tables}
\crefname{figure}{Fig.}{Figs.}
\Crefname{figure}{Figure}{Figures}
\crefname{equation}{Eq.}{Eqs.}
\Crefname{equation}{Equation}{Equations}
\crefname{assumption}{Assumption}{Assumptions}
\Crefname{assumption}{Assumption}{Assumptions}

\definecolor{myblue}{RGB}{33,171,204} 
\definecolor{mygreen}{RGB}{135,168,107} 

\theoremstyle{plain}
\usepackage{thm-restate}

\theoremstyle{definition}

\theoremstyle{remark}

\usepackage[textsize=tiny]{todonotes}

\icmltitlerunning{Towards Disentangled Preference Optimization Dynamics: Suppress the Loser, Preserve the Winner}

\begin{document}

\twocolumn[
  \icmltitle{
  Towards Disentangled Preference Optimization Dynamics:\\ Suppress the Loser, Preserve the Winner
  }



  \icmlsetsymbol{equal}{*}

  \begin{icmlauthorlist}
    \icmlauthor{Wei Chen}{equal,scut,aip} 
    \icmlauthor{Yubing Wu}{equal,scut} 
    \icmlauthor{Junmei Yang}{scut} 
    \icmlauthor{Delu Zeng}{scut,waterloo} \\
    \icmlauthor{Qibin Zhao}{aip}
    \icmlauthor{John Paisley}{columbia}
    \icmlauthor{Min Chen}{scut}
    \icmlauthor{Zhou Wang}{waterloo}
  \end{icmlauthorlist}

  \icmlaffiliation{scut}{South China University of Technology}
  \icmlaffiliation{aip}{AIP, RIKEN}
  \icmlaffiliation{columbia}{Columbia University}
  \icmlaffiliation{waterloo}{University of Waterloo}

  \icmlcorrespondingauthor{Delu Zeng}{dlzeng@scut.edu.cn}

  \icmlkeywords{Machine Learning, ICML}

  \vskip 0.3in
]



\printAffiliationsAndNotice{\icmlEqualContribution}

\begin{abstract}
    Preference optimization is widely used to align large language models (LLMs) with human preferences. However, many margin-based methods also suppress the chosen response when they try to suppress the rejected one, and there is no general way to prevent this across different objectives.
    We address this issue with a unified \emph{incentive-score decomposition} of preference optimization, revealing that different objectives share the same local update directions and differ only in their scalar weights.  
    This decomposition provides a common framework for  analyzing objectives that were previously studied in separate settings.
    Building on this decomposition, by analyzing the dynamics of the chosen/rejected likelihoods, we identify the \emph{disentanglement band} (DB), a simple, testable condition that tells us when training can follow the desired path: suppress the loser while preserving the winner, possibly after an early stage.
    Using the DB, we propose \emph{reward calibration} (RC), a plug‑and‑play method that adaptively rebalances the updates for chosen and rejected likelihoods to satisfy the DB, without redesigning the base objective.
    Empirical results show that RC leads to more disentangled dynamics, with better downstream performance observed across several settings.  Our code is available at \href{https://github.com/IceyWuu/DisentangledPreferenceOptimization}{Code}.
\end{abstract}

\begin{figure*}[ht]
    \centering
    \includegraphics[width=\linewidth]{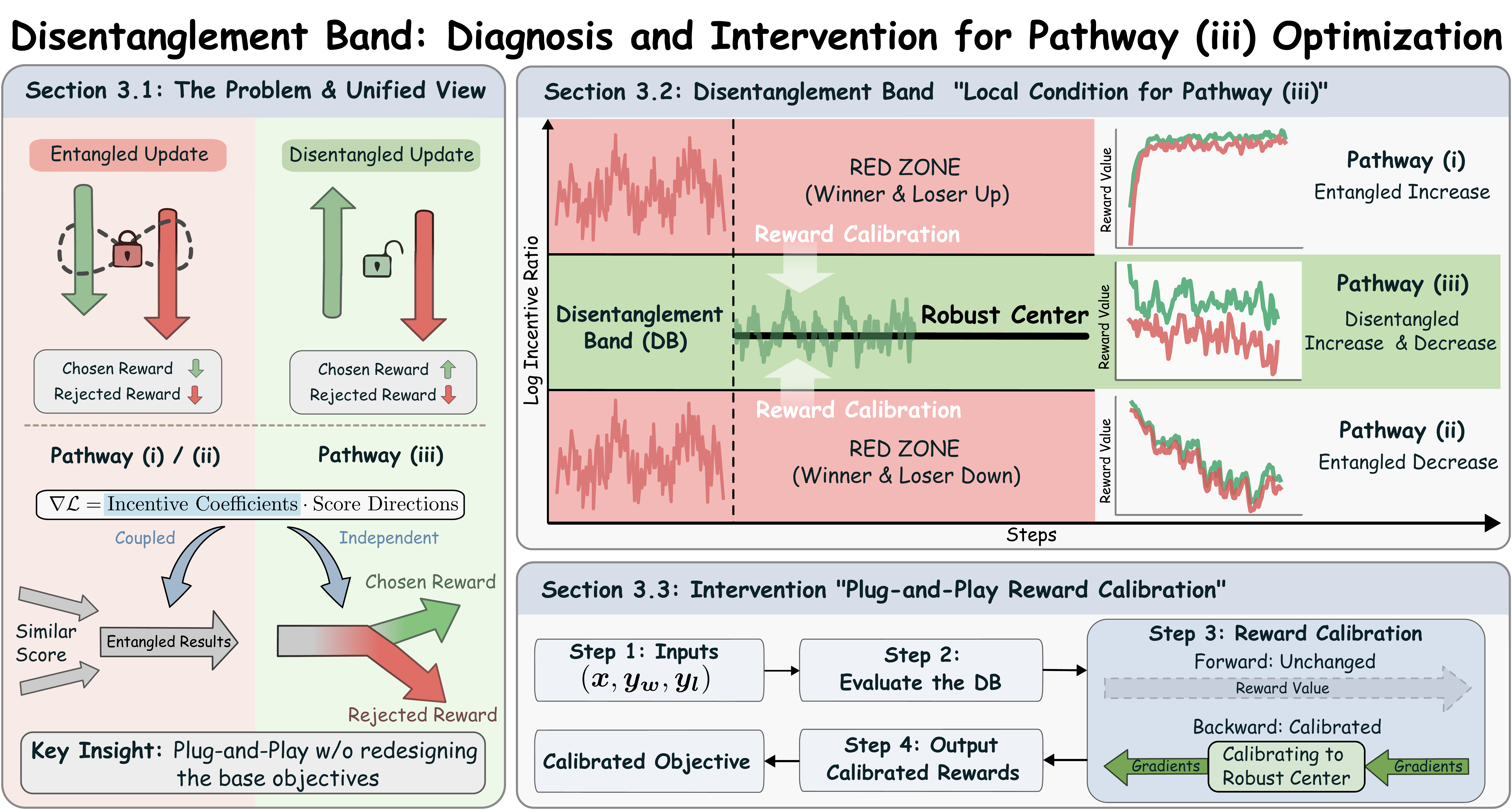}
    \caption{Overview of diagnosis and intervention for disentangled preference optimization dynamics. 
    \textbf{Left:} Entangled objectives can couple winner and loser updates, leading to Pathway (i) (both likelihoods increase) or Pathway (ii) (both decrease).
    \textbf{Upper Right:} The disentanglement band (DB) provides a local, testable condition for entering Pathway (iii) (suppress the loser, preserve the winner).
    \textbf{Lower Right:} Reward calibration (RC) is a plug-and-play intervention that rescales backward gradients without changing the forward pass, making training more likely to fall into the DB and exhibit Pathway (iii), without redesigning the base objective.
    }
    \label{fig:rc_overview}
\end{figure*}
\setlength{\dbltextfloatsep}{5pt}

\vspace{-3mm}
\section{Introduction}
\vspace{-2mm}
Aligning large language models (LLMs) with human preferences has shifted from online reinforcement learning from human feedback (RLHF) pipelines~\citep{christiano2017deep,bai2022training,ouyang2022training} to stable offline optimization. Direct preference optimization (DPO)~\citep{rafailov2023direct} catalyzed this shift by reframing alignment as a classification problem \citep{xiao2025connection}, spawning a wide family of objectives, e.g., improving efficiency \citep{tang2024generalized,wang2024beyond}, mitigating length bias \citep{park2024disentangling,meng2024simpo}, and incorporating constraints~\citep{xu2024contrastive,azar2024general}.

\vspace{-1mm}

However, increasing the preference margin alone does not determine how the chosen and rejected likelihoods evolve. Ideally, training suppresses the rejected response without degrading the chosen one \citep{yuan2025a}, which we formalize as Pathway (iii) (\cref{subsec:incentive_score_unified}). In contrast, many margin‑based objectives, like DPO, exhibit entangled gradients \citep{yuan2025a}, where suppressing the loser also drags down the winner, causing likelihood displacement \citep{razin2025unintentional}, the squeezing effect \citep{ren2025learning}, and 3D‑properties \citep{yan2025dproperties}.

\vspace{-1mm}

Some recent objectives treat the chosen and rejected terms more asymmetrically, often motivated by density ratio estimation~\citep{sugiyama2012density}. In our diagnostics (e.g., \cref{fig:zwzl-step-mistral-7b-DPO-BCE}), objectives such as DIL~\citep{xiao2025connection}, DDRO~\citep{higuchi2025direct}, and BPO~\citep{kim2025preference} more often avoid the lockstep winner–loser drift seen in margin-based objectives. We refer to this regime as \emph{disentangled} dynamics (vs.\ \emph{entangled} dynamics of margin-based objectives). Yet these objectives are often discussed as distinct families, leaving a broader question open:
\vspace{-2mm}
\begin{center}
    \emph{What objective property governs the induced likelihood dynamics, and how can we steer training toward Pathway (iii) even for entangled objectives?}
\end{center}
\vspace{-2mm}
We address this with an \emph{incentive-score decomposition}. Regardless of objective form, the gradient update decomposes along the same score directions; objective design mainly enters through two scalar incentives that weight them (\cref{tab:coefficients}). This yields a continuous-time analysis (\cref{thm:zwzl_ode}) and a simple, testable condition---the \textbf{\emph{disentanglement band}} (DB)---that characterizes when training can enter Pathway (iii) (\cref{tab:ratio_axis_three_regimes}), possibly after an early stage.

\vspace{-1mm}

Finally, we translate this condition into a plug-and-play intervention. 
We propose \emph{reward calibration} (RC), which adaptively rebalances chosen vs.\ rejected updates to make Pathway (iii) more likely, without redesigning the base objective (overview in \cref{fig:rc_overview}).
RC often steers training toward the Pathway (iii) regime across objectives and model scales (see \cref{fig:main_dynamics_pythia2b,app:additional_dynamics}), with gains that transfer to downstream benchmarks (see \cref{tab:evaluation_results_final,app:additional_benchmark}).

\vspace{-1mm}

To summarize, our contributions are:
\vspace{-1mm}
\begin{enumerate}[leftmargin=*, nosep]
    \item We introduce an incentive-score decomposition that unifies entangled and disentangled objectives, isolating scalar incentives as key controls  (\cref{tab:coefficients}).
    \item We derive likelihood dynamics and identify the DB, a simple condition for the desired Pathway (iii): suppress the loser while preserve the winner.
    \item We propose plug‑and‑play RC, which rebalances chosen vs.\ rejected updates to promote Pathway (iii) without redesigning the base objective.
    \item We verify that RC improves likelihood dynamics across objectives and scales, with downstream improvements in several settings (\cref{tab:evaluation_results_final}, \cref{app:additional_benchmark}).
\end{enumerate}

\vspace{-3mm}
\section{Related Works and Background}

\vspace{-1mm}

\subsection{Related Works}

\vspace{-1mm}

\paragraph{LLM Alignment: From PPO to a Zoo of Objectives.}
For a long time, aligning LLMs with human preferences meant relying on the standard RLHF pipeline: train a reward model, then optimize the policy using PPO~\citep{christiano2017deep,schulman2017proximal,bai2022training,ouyang2022training}. 
In practice, however, getting PPO to work reliably is tough. It is computationally expensive and notoriously sensitive to hyperparameters~\citep{engstrom2020implementation,zheng2023secrets}. Consequently, the field shifted toward offline methods that skip the explicit reward modeling step. DPO~\citep{rafailov2023direct} really kicked this door open, framing alignment as a stable classification problem.

Since then, numerous DPO variants have been proposed. Some address specific issues such as length bias (e.g., SimPO~\citep{meng2024simpo}, R-DPO~\citep{park2024disentangling}), while others introduce constraints to limit model drift (e.g., CPO~\citep{xu2024contrastive}, IPO~\citep{azar2024general}). 
Additional work targets better sample efficiency~\citep{tang2024generalized,wang2024beyond}, robustness against noisy data~\citep{chowdhury2024provably} and diverse LLM architectures and efficiency techniques~\citep{li2024coupled,li2025loramixercoordinatemodularlora,li2026largelanguagemodeltoken}. Yet, while we have plenty of new objectives, we often lack a clear and unified picture of what they are actually doing to the model's internal probability updates.

\vspace{-4mm}

\paragraph{Entangled vs. Disentangled Objectives.}
Broadly, preference optimization objectives fall into two categories.  
The first, and currently dominant, class relies on an entangled margin derived from the Bradley--Terry assumption~\citep{bradley1952rank}, as exemplified by DPO~\citep{rafailov2023direct}, IPO~\citep{azar2024general}, and CPO~\citep{xu2024contrastive}. 
By coupling the chosen and rejected likelihoods through a single margin, these objectives limit asymmetric control~\citep{yuan2025a}. 
As a result, suppressing the rejected likelihood often also depresses the chosen one, leading to phenomena such as the likelihood displacement~\citep{razin2025unintentional} and gradient entanglement~\citep{yuan2025a}.

\vspace{-1mm}

This limitation has motivated a newer line of work explicitly decouples winner and loser rewards. 
Such disentanglement appears in methods for unpaired data, such as KTO~\citep{ethayarajh2024kto}, as well as in density ratio-based objectives. 
For example, DIL~\citep{xiao2025connection} is derived from imitation learning bounds, while DDRO~\citep{higuchi2025direct} and BPO~\citep{kim2025preference} build on statistical consistency and Bregman divergences~\citep{bregman1967relaxation,sugiyama2012density}. 
By disentangling chosen and rejected rewards, these methods enable asymmetric control over updates, often yielding more stable and selective suppression of rejected responses.

\vspace{-3mm}
\paragraph{Interpretability of Preference Optimization and Learning Dynamics.}
Proposing a new objective is one thing; explaining why it succeeds or fails during optimization is another. 
Recent work has increasingly sought to interpret preference optimization by examining its assumptions and learning dynamics. 
Some studies question the rigidity of the Bradley--Terry assumption~\citep{azar2024general,munos2024nash,ethayarajh2024kto} or analyze the impact of noisy preference labels~\citep{chowdhury2024provably}, while others focus on downstream pathologies such as length bias exploitation~\citep{park2024disentangling}, reward over-optimization~\citep{gao2023scaling}, and likelihood displacement relative to the reference model~\citep{razin2025unintentional,pal2024smaug}.

Most closely related are dynamical analyses of preference optimization. 
\citet{ren2025learning} study likelihood shifts during fine-tuning, while \citet{yuan2025a} identify gradient entanglement in margin-based objectives. 
Although continuous-time dynamics have been used more broadly~\citep{li2024neural,li2025evolvinformer,li2024tighter}, existing work remains fragmented: margin-based objectives are analyzed dynamically, whereas density ratio methods (e.g., BPO \citep{kim2025preference}, DIL \citep{xiao2025connection}, DDRO \citep{higuchi2025direct}) are not.
We bridge this gap by unifying entangled and disentangled objectives under a common dynamical framework via incentive-score decomposition.

\vspace{-2mm}
\subsection{Background}
\label{subsec:background}
\vspace{-1mm}
\paragraph{Notations.}
Each datapoint is a triplet $(\vx,\vy_w,\vy_l)$ with prompt $\vx$ and responses $\vy_w$ (``winner'') and $\vy_l$ (``loser'').
Let $\pi_{\vtheta}$ be the policy parameterized by $\vtheta$, and $\pi_{\text{ref}}$ an optional fixed reference policy.
We define the sequence-level log-probabilities
$z_w(\vtheta)\triangleq \log \pi_{\vtheta}(\vy_w\mid \vx)$ and $z_l(\vtheta)\triangleq \log \pi_{\vtheta}(\vy_l\mid \vx)$,
the preference margin $m(\vtheta)\triangleq z_w(\vtheta)-z_l(\vtheta)$,
and the parameter scores
$\vs_w(\vtheta)\triangleq \nabla_{\vtheta} z_w(\vtheta)$, $\vs_l(\vtheta)\triangleq \nabla_{\vtheta} z_l(\vtheta)$.
When a reference policy is used, we further define the reference-normalized log-probabilities
$\tilde z_w(\vtheta)\triangleq z_w(\vtheta)-\log\pi_{\text{ref}}(\vy_w\mid\vx)$
and $\tilde z_l(\vtheta)\triangleq z_l(\vtheta)-\log\pi_{\text{ref}}(\vy_l\mid\vx)$,
and the effective margin
$\tilde m(\vtheta)\triangleq m(\vtheta) - m_{\text{ref}}$
where $m_{\text{ref}}\triangleq \log \pi_{\text{ref}}(\vy_w\mid \vx)-\log \pi_{\text{ref}}(\vy_l\mid \vx)$.
All quantities are defined pointwise for a given triplet.
See \cref{tab:notation} in \cref{app:notations} for a full summary.

\vspace{-2mm}
\paragraph{Entangled Margin-Based Objectives.}
For a given triplet $(\vx, \vy_w, \vy_l)$, a broad class of margin-based preference objectives adopt a \emph{entangled} template~\citep{yuan2025a}:
\vspace{-2mm}
\begin{equation}
\label{eq:coupled_template_unified}
\mathcal{L}(z_w,z_l)
=\ell\left(h_w(z_w)-h_l(z_l)-c\right)+\Lambda(z_w),
\end{equation}
where $\ell(\cdot)$ is an outer shaping function,
$h_w$ and $h_l$ are transformations,
$c$ is a constant offset,
and $\Lambda(z_w)$ is an optional chosen-only term, e.g., a likelihood regularizer in SlicHF~\citep{zhao2023slic}.
Margin-based variants can be interpreted as instantiations of \cref{eq:coupled_template_unified} by setting different $\ell$, $h_w$, $h_l$, $c$, and $\Lambda$ (see \cref{app:coefficients} for details).

In this case, DPO~\citep{rafailov2023direct} is an entangled objective with $\ell(\cdot)=-\log \sigma(\cdot), h_{w|l}(z_{w|l})=\beta z_{w|l}$:
\begin{equation}
\label{eq:background_dpo}
\mathcal{L}_{\text{DPO}}(z_w,z_l)
=-\log \sigma\left(\beta\left(z_w-z_l-z_w^{\text{ref}}+z_l^{\text{ref}}\right)\right),
\end{equation}
where $\beta>0$ is a hyperparameter,  $z_w^{\text{ref}}$ and $z_l^{\text{ref}}$ are constants for a given reference model $\pi_{\text{ref}}$ and triplet $(\vx, \vy_w, \vy_l)$.

\vspace{-2mm}
\section{Disentanglement Band and Plug-and-Play Reward Calibration}
\label{sec:unified_dynamics}

In this section, we analyze preference optimization through training-time likelihood dynamics.
Motivated by empirical differences between entangled and disentangled objectives, we study how objective structure shapes likelihood trajectories and enables the desired behavior, and how this analysis yields a simple, practical way to steer these dynamics.

\vspace{-2mm}
\subsection{Unifying Objectives via Incentives}
\label{subsec:incentive_score_unified}

\paragraph{The Three Pathways of Margin Growth.}
Under the Bradley-Terry assumption, most alignment objectives aim to increase the preference margin $m=z_{w}-z_{l}$.
However, a larger margin is a scalar outcome that can be achieved via three qualitatively distinct pathways in the $(z_w, z_l)$ plane:
\begin{itemize}[leftmargin=*, nosep]
    \item (i) \emph{Entangled increase}: both increase, with $z_{w}$ faster;
    \item (ii) \emph{Entangled decrease}: both decrease, with $z_{w}$ slower;
    \item (iii) \emph{Disentangled updates}: $z_w$ is not driven downward (and may increase) while $z_l$ is suppressed.
\end{itemize}

\textbf{Pathway (iii)} is a natural desideratum in the post-SFT regime~\citep{yuan2025a}, as it avoids likelihood displacement and gradient entanglement without degrading the chosen response. Although not universally optimal, it provides a principled direction for training dynamics.

However, many margin-based objectives entangle the updates of the chosen and rejected rewards, preventing them from changing independently. This gradient entanglement~\citep{yuan2025a} causes pressure to decrease $z_l$ often also decreases $z_w$, yielding the Pathway (ii) (see \cref{fig:zwzl-step-mistral-7b-DPO-BCE} for illustration). Such objectives fit the entangled template in \cref{eq:coupled_template_unified}, with different settings (see \cref{app:coefficients} for details):
\vspace{-1mm}
\begin{equation}
\label{eq:coupled_template_unified_repeat}
\mathcal{L}(z_w,z_l)
=\ell\left(h_w(z_w)-h_l(z_l)-c\right)+\Lambda(z_w).
\end{equation}

\vspace{-4mm}

Recently, a growing class of disentangled, density ratio-based objectives~\citep{xiao2025connection,higuchi2025direct} has been observed to often exhibit more disentangled reward dynamics than margin-based objectives, frequently suppressing $z_l$ without strongly degrading $z_w$ in many settings (see \cref{fig:zwzl-step-mistral-7b-DPO-BCE} for a comparison of DPO and  DIL-BCE~\citep{xiao2025connection}).
These objectives are disentangled in the sense that they act on two separate scalars,
\begin{equation}
\label{eq:decoupled_template_unified}
\mathcal{L}(z_w,z_l)=\ell_w(z_w-z_w^{\text{ref}})+\ell_l(z_l-z_l^{\text{ref}}),
\end{equation}
where $\ell_w$ and $\ell_l$ are outer shaping functions corresponding to $z_w$ and $z_l$, respectively. 
This enables more flexible and potentially asymmetric control over maintaining $z_w$ versus suppressing $z_l$, i.e. Pathway (iii).
We summarize this contrast with a schematic comparison in \cref{fig:zwzl_margin-mistral-7b-DPO-BCE}.

\vspace{-2mm}

\begin{figure}[ht]
    \centering
    \begin{subfigure}[b]{0.48\linewidth}
		\centering
		\includegraphics[width=\linewidth]{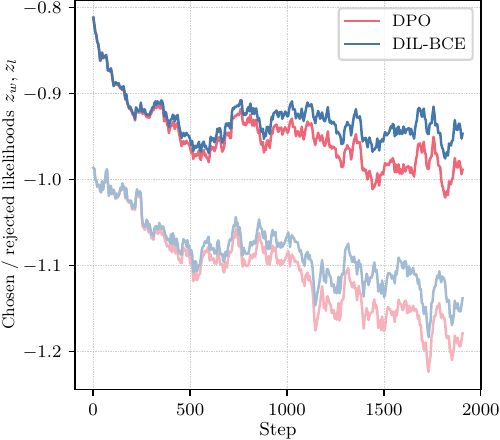}
		\caption{Likelihoods over steps.}
		\label{fig:zwzl-step-mistral-7b-DPO-BCE}
	\end{subfigure}
	\hfill
	\begin{subfigure}[b]{0.48\linewidth}
		\centering
		\includegraphics[width=\linewidth]{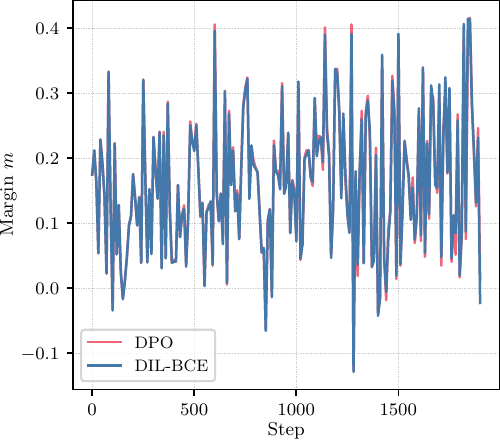}
		\caption{Margin over steps.}
		\label{fig:m-step-mistral-7b-DPO-BCE}
	\end{subfigure}
    \vspace{-2mm}
    \caption{ \textbf{(a)} Likelihoods over steps under entangled (DPO \citep{rafailov2023direct}) and disentangled (DIL-BCE \citep{xiao2025connection}) objectives. \textbf{(b)} Both increase the margin during training, but induce different pathways. Mistral-7B is used here.}
    \label{fig:zwzl_margin-mistral-7b-DPO-BCE}
\end{figure}

\vspace{-2mm}

However, the underlying mechanism remains unclear.
These observations raise a key question: \emph{which objective properties determine when training enters the desired Pathway (iii)?}

\vspace{-3mm}

\paragraph{Decomposing the Gradient: Incentives and Scores. }
To explain these differences, we look at the gradients. Regardless of whether the objective is entangled (\cref{eq:coupled_template_unified_repeat})
or disentangled (\cref{eq:decoupled_template_unified}), the gradient with respect to parameters $\vtheta$ always decomposes along the same two score directions $\vs_w$ and $\vs_l$.
Formally, for any objective $\mathcal{L}(z_w,z_l)$, the chain rule yields a unified decomposition of the negative gradient
\vspace{-1mm}
\begin{equation} \label{eq:universal_grad}
\begin{aligned}
    -\!\nabla_{\!\vtheta} \mathcal{L}(z_w(\vtheta),z_l(\vtheta))
    & \!=\! -\frac{\partial  \mathcal{L}}{\partial z_w}\!\nabla_{\!\vtheta} z_w(\vtheta) \!-\!\frac{\partial \mathcal{L}}{\partial z_l}\!\nabla_{\!\vtheta} z_l(\vtheta) \\
    & \! \triangleq d_w(\vtheta)\vs_w(\vtheta) \!- \!d_l(\vtheta)\vs_l(\vtheta),
\end{aligned}
\end{equation}
where $d_w(\vtheta)\triangleq -\frac{\partial \mathcal{L}}{\partial z_w}, d_l(\vtheta)\triangleq \frac{\partial \mathcal{L}}{\partial z_l}$ are the \emph{incentive coefficients}.
\cref{eq:universal_grad} reveals a crucial insight: the update directions $(\vs_w,\vs_l)$ are largely fixed by the model and data (via $\vs_w, \vs_l$), while the objective design is entirely distilled into the scalars $(d_w, d_l)$.
These coefficients act as the knobs that control the magnitude and balance of the update.
Detailed derivations of $(d_w, d_l)$ for common objectives are provided in \cref{app:coefficients} and summarized in \cref{tab:coefficients}.
In the next subsection, we connect the unified decomposition  in \cref{eq:universal_grad} to a continuous-time dynamical system for $(z_{w},z_{l})$, which yields an explicit condition for realizing  Pathway (iii).

\vspace{-2mm}
\begin{table}[ht]
\centering
\caption{
Incentive coefficients $(d_w,d_l)$ for preference optimization objectives (notations follow \cref{subsec:background}).
Hyperparameters include $\beta,\lambda,\gamma,\alpha$ and $\sigma$ denotes the sigmoid.
See \cref{app:coefficients} for derivations.
}
\label{tab:coefficients}
\setlength{\tabcolsep}{1.2mm}
\small
\resizebox{\linewidth}{!}{
\begin{tabular}{lcccc}
\toprule
\textbf{Method} & \textbf{Chosen incentive} $d_w$ & \textbf{Rejected incentive} $d_l$ & $d_w=d_l$ \\ 
\midrule
\multicolumn{4}{c}{\textit{Entangled Objectives}} \\ \midrule
DPO &
$\beta \sigma(-\beta \tilde m)$ &
$\beta \sigma(-\beta \tilde m)$ & \ding{51} \\

TI-DPO & 
$\beta \sigma(-\beta\tilde{m})$ &
$\beta \sigma(-\beta\tilde{m} )$ & \ding{51} \\

IPO &
$2(\lambda - \tilde m)$ &
$2(\lambda - \tilde m)$ & \ding{51} \\

R-DPO & 
$\beta\sigma\left(-\beta \tilde m-\alpha\Delta|\vy|\right)$ &
$\beta\sigma\left(-\beta \tilde m-\alpha\Delta|\vy|\right)$ & \ding{51} \\

RRHF &
$\mathbb{I}(m<0)+\lambda$ &
$\mathbb{I}(m<0)$ & \ding{55} \\

SlicHF &
$\mathbb{I}(m < \gamma)+\lambda$ &
$\mathbb{I}(m < \gamma)$ & \ding{55} \\

SimPO &
$\frac{\beta}{|\vy_w|}\sigma(\gamma-m_{\text{norm}})$ &
$\frac{\beta}{|\vy_l|}\sigma(\gamma-m_{\text{norm}})$ & \ding{55} \\

CPO &
$\beta\sigma(-\beta m)+\lambda$ &
$\beta\sigma(-\beta m)$ & \ding{55} \\

\midrule
\multicolumn{4}{c}{\textit{Disentangled Objectives}} \\ \midrule

KTO &
$\lambda_w\sigma^{\prime}\left(-\tilde{z}_w \right)$ &
$\lambda_l\sigma^{\prime}\left( \tilde{z}_l \right)$ & \ding{55} \\

DDRO & 
$1$ &
$\frac{\exp(\tilde{z}_l)}{2 - \exp(\tilde{z}_l)}$ & \ding{55} \\

DIL-BCE &
$\sigma(-\tilde{z}_w)$ &
$\sigma\left(\tilde{z}_l\right)$ & \ding{55} \\

DIL-UKL &
$1$ &
$\exp\left(\tilde{z}_l\right)$ & \ding{55} \\

DIL-LSIF &
$\exp\left(\tilde{z}_w\right)$ &
$\exp\left(2 \tilde{z}_l\right)$ & \ding{55} \\

\bottomrule
\end{tabular}
}
\end{table}

\vspace{-4mm}
\paragraph{Case Studies: Entangled vs. Disentangled Incentives.}
The difference between entangled and disentangled objectives is reflected in their incentive structures (see \cref{tab:coefficients}).  
Entangled objectives like DPO and IPO have tightly coupled incentives, often symmetric ($d_w = d_l$) or constrained by the shared margin $m$, so that increasing the winner inherently affects the loser.  
Disentangled objectives like DIL-BCE, by contrast, assign independent incentives: $d_w = \sigma(-\tilde{z}_w)$ depends only on the chosen response, while $d_l = \sigma(\tilde{z}_l)$ depends only on the rejected one, thereby avoiding the entangled dynamics of margin-based objectives.

\vspace{-2mm}
\subsection{Likelihood Dynamics and Pathway Conditions}
\label{subsec:margin_dynamics}
\vspace{-1mm}

By expressing an objective via its incentive coefficients $(d_w,d_l)$, likelihood analysis reduces to these scalars. Viewing gradient descent as a continuous flow then yields conditions on $(d_w,d_l)$ for Pathway (iii), revealing structural requirements of the base objective.

\begin{restatable}{theorem}{ContinuousZDynamics}
\label{thm:zwzl_ode}
Consider a parameter trajectory $\vtheta_t$ driven by the gradient flow of \cref{eq:universal_grad}.
Along $\vtheta_t$, define $z_{w|l,t}\triangleq z_{w|l}(\vtheta_t)$, $\vs_{w|l,t}\triangleq \nabla_{\vtheta}z_{w|l}(\vtheta_t)$, and $d_{w|l,t}\triangleq d_{w|l}(\vtheta_t)$, with $w|l$ being $w$ or $l$.
Then, the continuous-time dynamics of the chosen and rejected likelihoods are governed by:
\vspace{-2mm}
\begin{equation}
\label{eq:zw_zl_ode}
\begin{aligned}
\frac{\partial z_{w,t}}{\partial t}=\dot z_{w,t}
&= d_{w,t}\|\vs_{w,t}\|^2 - d_{l,t}\langle \vs_{w,t},\vs_{l,t}\rangle, \\
\frac{\partial z_{l,t}}{\partial t}=\dot z_{l,t}
&= d_{w,t}\langle \vs_{w,t},\vs_{l,t}\rangle - d_{l,t}\|\vs_{l,t}\|^2.
\end{aligned}
\end{equation}
\end{restatable}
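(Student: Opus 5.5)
The argument is a direct chain-rule computation along the gradient-flow trajectory. By the hypothesis of \cref{thm:zwzl_ode}, $\vtheta_t$ solves the ODE
\begin{equation*}
\dot\vtheta_t = -\nabla_{\vtheta}\mathcal{L}(z_w(\vtheta_t),z_l(\vtheta_t)) = d_{w,t}\vs_{w,t} - d_{l,t}\vs_{l,t},
\end{equation*}
where the second equality is \cref{eq:universal_grad} evaluated at $\vtheta_t$.

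\textbf{Step 1: the assumptions.} I will take $\vtheta\mapsto z_{w|l}(\vtheta)$ to be continuously differentiable; this holds for softmax-parameterized LLMs with smooth activations, since $z_{w|l}$ is a finite sum of log-softmax terms. I will also take $\mathcal{L}$ to be differentiable in $(z_w,z_l)$ along the trajectory, so that $d_{w|l,t}$ are well defined. Under these assumptions, $t\mapsto\vtheta_t$ is $C^1$. For objectives with indicator terms, such as RRHF and SlicHF, I interpret the gradient a.e. or as the chosen (sub)gradient selection, so the identity holds wherever the flow is differentiable.

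\textbf{Step 2: the chain rule for $z_w$.} Apply the chain rule to $t\mapsto z_w(\vtheta_t)$:
\begin{equation*}
\dot z_{w,t}=\langle\nabla_{\vtheta}z_w(\vtheta_t),\dot\vtheta_t\rangle=\langle \vs_{w,t},\, d_{w,t}\vs_{w,t}-d_{l,t}\vs_{l,t}\rangle .
\end{equation*}
Since $d_{w,t}$ and $d_{l,t}$ are scalars, bilinearity of the inner product gives $d_{w,t}\|\vs_{w,t}\|^2-d_{l,t}\langle\vs_{w,t},\vs_{l,t}\rangle$. This is the first line of \cref{eq:zw_zl_ode}.

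\textbf{Step 3: the chain rule for $z_l$.} Repeat the computation with $\vs_{l,t}$ in place of $\vs_{w,t}$. By symmetry of the inner product,
\begin{equation*}
\dot z_{l,t}=d_{w,t}\langle\vs_{w,t},\vs_{l,t}\rangle-d_{l,t}\|\vs_{l,t}\|^2,
\end{equation*}
which is the second line.

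There is no real obstacle. The only delicate point is making the regularity assumptions explicit so that the chain rule is justified along the trajectory, especially for the non-smooth objectives in \cref{tab:coefficients}. For those, I would state the result for almost every $t$, using absolute continuity of $t\mapsto z_{w|l}(\vtheta_t)$.
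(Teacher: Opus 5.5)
Your proof is correct and is essentially the paper's argument. The paper also computes $\langle \vs_{w|l,t},\, d_{w,t}\vs_{w,t}-d_{l,t}\vs_{l,t}\rangle$ and expands it by bilinearity, but it phrases the chain rule as a first-order Taylor expansion along a gradient-descent step $\vtheta_{t+\eta}-\vtheta_t=\eta(d_{w,t}\vs_{w,t}-d_{l,t}\vs_{l,t})$, then divides by $\eta$ and lets $\eta\to 0$. Your direct application of the chain rule to the flow $\dot\vtheta_t$ is slightly cleaner because it avoids conflating the Euler step with the flow, and your regularity remarks for the hinge-type objectives go beyond what the paper states.
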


\begin{restatable}{corollary}{ContinuousMarginDynamics}
\label{coro:margin_ode}
Define the time-indexed margin $m_t \triangleq z_{w,t}-z_{l,t}$. Then, the continuous-time dynamic of $m_t$ satisfy
\begin{equation}
\label{eq:general_ode_from_zwzl}
\begin{aligned}
    \dot m_t \!=\! d_{w,t}\|\vs_{w,t}\|^2 \!+ \!d_{l,t}\|\vs_{l,t}\|^2 
    \!-\! (d_{w,t} \!+\! d_{l,t})\langle \vs_{w,t}, \vs_{l,t}\rangle.  \notag
\end{aligned}
\end{equation}
\end{restatable}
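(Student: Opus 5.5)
The statement to prove is \cref{coro:margin_ode}, which follows directly from \cref{thm:zwzl_ode}. We will therefore first recall how \cref{thm:zwzl_ode} itself is obtained, and then subtract the two equations.

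\textbf{Step 1: recover \cref{thm:zwzl_ode}.} The gradient flow of \cref{eq:universal_grad} is
\begin{equation*}
\dot\vtheta_t = -\nabla_{\vtheta}\mathcal{L}(z_w(\vtheta_t),z_l(\vtheta_t)) = d_{w,t}\vs_{w,t}-d_{l,t}\vs_{l,t}.
\end{equation*}
Because $z_w$ and $z_l$ are differentiable functions of $\vtheta$, the chain rule along the trajectory gives $\dot z_{w,t}=\langle \nabla_{\vtheta}z_w(\vtheta_t),\dot\vtheta_t\rangle=\langle \vs_{w,t},\dot\vtheta_t\rangle$, and likewise $\dot z_{l,t}=\langle \vs_{l,t},\dot\vtheta_t\rangle$. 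Substituting $\dot\vtheta_t$ and expanding the inner products bilinearly yields \cref{eq:zw_zl_ode}.

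\textbf{Step 2: the margin.} By linearity of the time derivative, $\dot m_t=\dot z_{w,t}-\dot z_{l,t}$. Subtracting the two lines of \cref{eq:zw_zl_ode} gives
\begin{equation*}
\dot m_t = d_{w,t}\|\vs_{w,t}\|^2 - d_{l,t}\langle \vs_{w,t},\vs_{l,t}\rangle - d_{w,t}\langle \vs_{w,t},\vs_{l,t}\rangle + d_{l,t}\|\vs_{l,t}\|^2.
\end{equation*}
Grouping the two cross terms into $-(d_{w,t}+d_{l,t})\langle \vs_{w,t},\vs_{l,t}\rangle$ gives exactly the claimed expression. As a consistency check, this equals $\langle \vs_{w,t}-\vs_{l,t},\dot\vtheta_t\rangle$, since $\nabla_{\vtheta} m=\vs_w-\vs_l$.

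There is no real obstacle; the only care needed is in the regularity assumptions. We need $z_w$ and $z_l$ to be $C^1$ in $\vtheta$ and the flow to be well defined, so that the chain rule applies. We also need to treat the incentives $d_{w,t}$ and $d_{l,t}$ as scalars evaluated at $\vtheta_t$, so they factor out of the inner products. For objectives with indicator-type incentives, such as RRHF and SlicHF, we will state the identity almost everywhere in $t$.
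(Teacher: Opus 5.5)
Your proof is correct and matches the paper's: it writes $\dot m_t=\dot z_{w,t}-\dot z_{l,t}$, substitutes the two lines of \cref{eq:zw_zl_ode}, and groups the cross terms into $-(d_{w,t}+d_{l,t})\langle \vs_{w,t},\vs_{l,t}\rangle$. Your Step~1 re-derives \cref{thm:zwzl_ode} by applying the chain rule along the flow, whereas the paper uses a first-order Taylor expansion of a gradient step and lets $\eta\to 0$; since the theorem can simply be cited, this difference does not matter for the corollary.
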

See \cref{proof:zwzl_ode} and \cref{proof:margin_ode} for the detailed proofs of \cref{thm:zwzl_ode} and \cref{coro:margin_ode}, respectively. 

\vspace{-2mm}
\paragraph{Mechanistic Interpretation.}
\cref{thm:zwzl_ode} provides a clear mechanistic interpretation of gradient entanglement, revealing that the dynamic of the chosen likelihood, $\dot z_{w,t}$, is governed by two opposing terms. The self-reinforcement term, $d_{w,t}\|\vs_{w,t}\|^2$, pushes $z_{w,t}$ upward in proportion to its own incentive and score magnitude. In contrast, the coupling drag term, $-d_{l,t}\langle \vs_{w,t}, \vs_{l,t}\rangle$, pulls it downward due to interference from updates targeting the rejected response.

In practice, scores are typically positively correlated ($\langle \vs_{w,t}, \vs_{l,t}\rangle > 0$, as illustrated in \cref{fig:rho-step}). If the objective assigns too much weight to the rejected incentive, e.g., $d_{l,t} = d_{w,t}$ as in DPO, the drag can dominate, causing $\dot z_{w,t} < 0$ even when the desired behavior is to avoid driving $z_{w,t}$ downward. This explains the squeezing effect~\citep{ren2025learning}.
These dynamics are local-in-time, so training may enter the desired regime only after an early stage, consistent with our empirical results (\cref{fig:zwzl-step-mistral-7b-DPO-BCE}).

\cref{coro:margin_ode} shows that margin growth is driven by $(d_{w,t}, d_{l,t})$ but regulated by the coupling $\langle \vs_{w,t}, \vs_{l,t}\rangle$, which quantifies winner–loser interaction.
We use \cref{thm:zwzl_ode} and \cref{coro:margin_ode} as analytical abstractions to expose objective-induced drift and derive sign conditions, rather than to predict exact trajectories.

\vspace{-2mm}
\paragraph{Conditions of the Three Pathways.}
We now translate the qualitative Pathways (i)--(iii) in \cref{subsec:incentive_score_unified} into local sign conditions implied by the likelihood dynamics in \cref{thm:zwzl_ode}.
In particular, the Pathway (iii) regime corresponds to
$\dot z_{w,t}\ge 0$ and $\dot z_{l,t}\le 0$.

The key observation is that these signs depend on how strongly the objective updates the winner versus the loser relative to their score coupling.
We therefore normalize the coupling term via the cosine similarity:
\vspace{-2mm}
\begin{equation}
\rho_t \triangleq \frac{\langle \vs_{w,t}, \vs_{l,t}\rangle}{\|\vs_{w,t}\|\|\vs_{l,t}\|}\in[-1,1].
\end{equation}

\begin{figure*}[ht]
\centering
\begin{subfigure}[b]{0.24\linewidth}
    \centering
    \captionsetup[subfigure]{aboveskip=0.pt, belowskip=0.pt}
    \includegraphics[width=\linewidth]{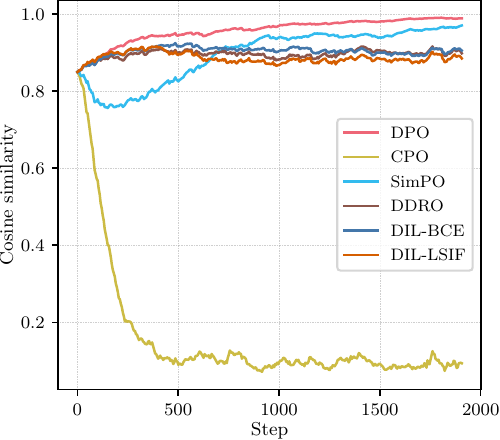}
    \caption{$\rho_t$ over steps.}
    \label{fig:rho-step}
\end{subfigure}
\hfill
\begin{subfigure}[b]{0.24\linewidth}
    \centering
    \includegraphics[width=\linewidth]{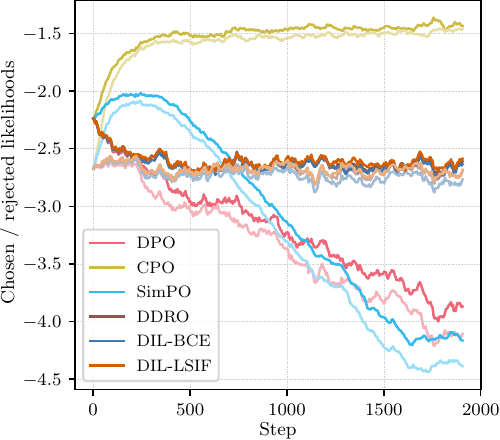}
    \caption{Likelihoods over steps.}
    \label{fig:zwzl-step}
\end{subfigure}
    \hfill
    \begin{subfigure}[b]{0.24\linewidth}
    \centering
    \includegraphics[width=\linewidth]{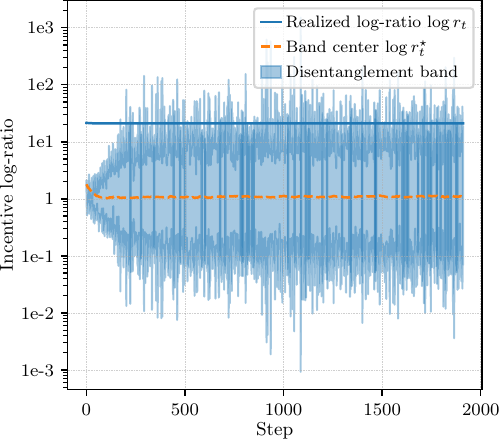}
    \caption{CPO: DB over steps.}
    \label{fig:band-cpo}
\end{subfigure}
\hfill
\begin{subfigure}[b]{0.24\linewidth}
    \centering
    \includegraphics[width=\linewidth]{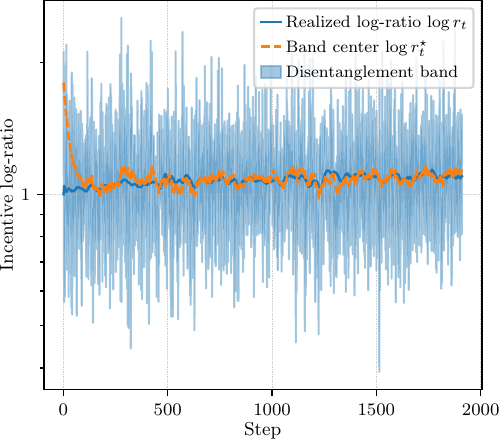}
    \caption{DIL-BCE: DB over steps.}
    \label{fig:band-bce}
\end{subfigure}
\caption{
    \textbf{(a)} Cosine similarity $\rho_t$ between score directions $\vs_{w,t}$ and $\vs_{l,t}$; larger $\rho_t$ indicates stronger winner-loser coupling and a narrower DB in \cref{eq:disentangle_band_b1_new}. 
    \textbf{(b)} Corresponding likelihood trajectories $(z_{w,t},z_{l,t})$, indicating whether training reaches the Pathway (iii) regime (suppress $z_{l,t}$, preserve $z_{w,t}$), or follows entangled Pathways (i)/(ii).
    \textbf{(c)(d)} A wide DB alone is insufficient: even with a wide DB (small $\rho_t$), training can remain entangled when $\log r_t$ stays outside the band (CPO, \cref{fig:band-cpo}). 
   In contrast, keeping $\log r_t$ inside the DB is closely associated with entering the Pathway (iii) regime in our experiments (DIL-BCE, \cref{fig:band-bce}). 
   See \cref{app:additional_dynamics} for full results across models (Pythia-410M, Pythia-2B, Mistral-7B, and Qwen2.5-7B) and objectives.
}
\label{fig:rho_zwzl}
\end{figure*}
\setlength{\dbltextfloatsep}{5pt}

\vspace{-2mm}

We focus on the challenging regime where $\rho_t > 0$, 
which is typical in fine-tuning tasks (see \cref{fig:rho-step}). For standard preference objectives, we typically have $d_{w,t}>0$ and $d_{l,t}>0$ (see \cref{tab:coefficients}).
Substituting $\langle \vs_{w,t},\vs_{l,t}\rangle=\rho_t\|\vs_{w,t}\|\|\vs_{l,t}\|$ into \cref{eq:zw_zl_ode} in \cref{thm:zwzl_ode} and factoring out $d_{l,t}$ yields
\begin{equation}
\label{eq:zwzl_ratio_form_b1}
\begin{aligned}
\dot z_{w,t}
&= d_{l,t}\|\vs_{w,t}\|^2
\left(\frac{d_{w,t}}{d_{l,t}}-\rho_t\frac{\|\vs_{l,t}\|}{\|\vs_{w,t}\|}\right), \\
\dot z_{l,t}
&= d_{l,t}\|\vs_{l,t}\|^2
\left(\rho_t\frac{\|\vs_{w,t}\|}{\|\vs_{l,t}\|}\frac{d_{w,t}}{d_{l,t}}-1\right).
\end{aligned}
\end{equation}

Hence the signs of $(\dot z_{w,t},\dot z_{l,t})$ are governed by where the ratio $d_{w,t}/d_{l,t}$ falls relative to two thresholds, $\rho_t\tfrac{\|\vs_{l,t}\|}{\|\vs_{w,t}\|}$ and $\tfrac{1}{\rho_t}\tfrac{\|\vs_{l,t}\|}{\|\vs_{w,t}\|}$, which partition the ratio axis into three regimes that exactly match Pathways (i)-(iii), as summaried in \cref{tab:ratio_axis_three_regimes}.

\begin{table}[ht]
\centering
\small
\caption{For $\rho_t>0$, the band thresholds partition the ratio axis into three regimes that exactly match Pathways (i)-(iii) in \cref{subsec:incentive_score_unified}.}
\label{tab:ratio_axis_three_regimes}
\setlength{\tabcolsep}{2pt}
\resizebox{\linewidth}{!}{%
\begin{tabular}{@{}l c l@{}}
\toprule
Regime on $d_{w,t} / d_{l,t}$ &
$\left(\dot z_{w,t},\dot z_{l,t}\right)$ &
Pathway  \\
\midrule
$\left( 0, \rho_t\tfrac{\|\vs_{l,t}\|}{\|\vs_{w,t}\|}\right)$ & $(<0, <0)$ & (ii) both decrease \\
$\left[\rho_t\tfrac{\|\vs_{l,t}\|}{\|\vs_{w,t}\|}, \tfrac{1}{\rho_t}\tfrac{\|\vs_{l,t}\|}{\|\vs_{w,t}\|}\right]$ & $(\ge 0,\le 0)$ & (iii) suppress $z_{l,t}$, preserve $z_{w,t}$ \\
$\left(\tfrac{1}{\rho_t}\tfrac{\|\vs_{l,t}\|}{\|\vs_{w,t}\|},+\infty\right)$ & $(>0, >0)$ & (i) both increase \\
\bottomrule
\end{tabular}
}
\end{table}
\setlength{\textfloatsep}{2pt} 

\paragraph{Disentanglement Band.}
Only the second regime in \cref{tab:ratio_axis_three_regimes} permits the  Pathway (iii).
Equivalently, the conditions of Pathway (iii), i.e., $\left(\dot z_{w,t},\dot z_{l,t}\right)=(\ge0,\le0)$ hold if
\begin{equation}
\label{eq:disentangle_band_b1_new}
     \log\frac{\|\vs_{l,t}\|}{\|\vs_{w,t}\|} + \log \rho_t \le \log r_t \le \log \frac{\|\vs_{l,t}\|}{\|\vs_{w,t}\|} - \log \rho_t,
\end{equation}
where $r_t\triangleq d_{w,t} / d_{l,t}$ for simplicity.
We term this interval the \textbf{disentanglement band (DB)}, since any $\log r_t$ within this band makes the sign pattern $(\dot z_{w,t},\dot z_{l,t})=(\ge 0,\le 0)$ feasible under \cref{eq:zwzl_ratio_form_b1}, i.e., it admits Pathway (iii) at time $t$.

\vspace{-2mm}
\paragraph{Interpreting the Band.}
The DB provides a unified explanation for the behavior of different objectives (\cref{tab:ratio_axis_three_regimes}):
\begin{itemize}[leftmargin=*, nosep]
    \item \textit{Inside DB}: The incentive ratio $r_t$ is balanced to overcome the correlation $\rho_t$. The update suppresses the loser and preserves the winner (Pathway (iii)).
    \item \textit{Below DB} ($ r_t < \rho_t\tfrac{\|\vs_{l,t}\|}{\|\vs_{w,t}\|}$): The chosen incentive $d_{w,t}$ is too weak relative to $d_{l,t}$. The ``drag'' from the loser update pulls $z_w$ down (Pathway (ii)).
    \item \textit{Above DB} ($r_t > \frac{1}{\rho_t}\frac{\|\vs_{l,t}\|}{\|\vs_{w,t}\|}$): The chosen incentive is too strong, causing both $z_w$ and $z_l$ to rise (Pathway (i)).
\end{itemize}
Crucially, the DB width is $2|\log \rho_t|$. 
As $\rho_t \to 1$, the band collapses, leaving little room to satisfy $(\dot z_{w,t},\dot z_{l,t})=(\ge 0,\le 0)$ under first-order analysis. Yet $\rho_t < 1$ in practice, yielding a feasible optimization window (\cref{fig:rho-step}).

\paragraph{When Disentanglement Succeeds?}
Two factors determine disentanglement success: 
(1) $\rho_t$ sets the feasible DB width in \cref{eq:disentangle_band_b1_new}, and  
(2) the objective determines where the realized log-ratio $\log r_t$ lands within DB.  

Thus, \textit{a wide DB alone is insufficient}: entanglement persists if $\log r_t$ consistently lies outside the band or clusters near its thresholds. This behavior is evident in \cref{fig:rho_zwzl}. For instance, CPO achieves small $\rho_t$ (hence a wide DB) but often pushes $r_t$ to the upper edge or beyond (\cref{fig:band-cpo}), following Pathway (i) (\cref{fig:zwzl-step}). In contrast, disentangled objectives like DIL-BCE keep $r_t$ stably inside the DB and closer to its center (\cref{fig:band-bce}), aligning with Pathway (iii).

Moreover, from \cref{eq:zwzl_ratio_form_b1}, the lower edge of the DB corresponds exactly to the locus where $\dot z_{w,t} = 0$. Slightly above this edge, the winner remains nearly stable while $\dot z_{l,t} < 0$ is still enforced. 
In practice, however, $r_t$ is not a deterministic knob: it fluctuates over steps and across triplets under mini-batch sampling and adaptive optimization.
Consequently, a robust objective should keep $r_t$ stably inside DB rather than merely avoiding crossing the lower edge.
This motivates analyzing where $\log r_t$ sits within DB, beyond feasibility.

\vspace{-2mm}
\paragraph{Where the Log-Ratio Should Sit?}

To maximize robustness against multiplicative fluctuations in the ratio, we seek the point that maximizes the log-distance to the nearest threshold.
Let $\Delta_t(\log r_t)$ denote the minimum distance from $\log r_t$ to the lower and upper thresholds of the DB:
\begin{equation}
\label{eq:log_slack}
\!\Delta_t(\log r_t) \!\triangleq \!
\min\left(
\log\frac{r_t\|\vs_{w,t}\|}{\rho_t\|\vs_{l,t}\|}, \log\frac{\|\vs_{l,t}\|}{\rho_t r_t\|\vs_{w,t}\|} 
\right).
\end{equation}
Then $\Delta_t(\log r_t)\ge 0$ if and only if $r_t$ lies inside DB, and larger $\Delta_t(\log r_t)$ means more tolerance to multiplicative fluctuations of $\log r_t$ while preserving pathway (iii). 

\begin{restatable}{proposition}{MaxSafetyMargin}
\label{prop:max_safety_margin}
For given  $\rho_t$ and  $\|\vs_{l,t}\|/\|\vs_{w,t}\|$, the  maximizer of $\Delta_t(\log r_t)$ over all $\log r_t$ inside DB satisfies 
\vspace{-1mm}
\begin{equation}
    \log r_t^\star = \arg\max_{\log r_t\in\text{DB}}\Delta_t(\log r_t)=\log \frac{\|\vs_{l,t}\|}{\|\vs_{w,t}\|}.
\end{equation}
\end{restatable}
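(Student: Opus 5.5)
We reduce the statement to maximizing the minimum of two affine functions of a single real variable. Write $u \triangleq \log r_t$, $a \triangleq \log\frac{\|\vs_{l,t}\|}{\|\vs_{w,t}\|}$, and $b \triangleq -\log\rho_t$. Since we are in the regime $\rho_t\in(0,1]$, we have $b\ge 0$. Expanding the logarithms in \cref{eq:log_slack} gives
\begin{equation*}
\Delta_t(u)=\min\bigl(u-a+b,\; a+b-u\bigr).
\end{equation*}
In these variables the DB of \cref{eq:disentangle_band_b1_new} is the interval $[a-b,\,a+b]$. We first check the claim made before the proposition: the first argument is nonnegative iff $u\ge a-b$, and the second is nonnegative iff $u\le a+b$. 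Hence $\Delta_t(u)\ge 0$ exactly on the DB.

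Next we maximize. The function $g_1(u)=u-a+b$ is strictly increasing in $u$, and $g_2(u)=a+b-u$ is strictly decreasing. Their pointwise minimum is therefore a concave, tent-shaped function. Its maximum is attained where $g_1=g_2$, that is, at $u^\star=a$, with value $\Delta_t(u^\star)=b=-\log\rho_t\ge 0$.

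To make this rigorous, we argue by cases, using the identity $\min(x,y)\le \frac{x+y}{2}$. For any $u$,
\begin{equation*}
\Delta_t(u)\le \tfrac12\bigl(g_1(u)+g_2(u)\bigr)=b,
\end{equation*}
with equality iff $g_1(u)=g_2(u)$, i.e.\ iff $u=a$. The point $u=a$ lies inside the DB since $b\ge0$. So $a$ is the unique maximizer over the DB, and in fact over all of $\mathbb{R}$. Translating back gives $\log r_t^\star=\log\frac{\|\vs_{l,t}\|}{\|\vs_{w,t}\|}$, which is the statement of \cref{prop:max_safety_margin}. Geometrically, this is the midpoint of the band in log-space.

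There is essentially no hard step. The only point needing care is the degenerate case $\rho_t=1$. Then $b=0$, the DB collapses to the single point $\{a\}$, and the argmax is trivially $a$. The uniqueness argument still holds there. We will also note explicitly that the statement presupposes $\rho_t>0$, which is the regime assumed throughout this subsection, so that $\log\rho_t$ is defined.
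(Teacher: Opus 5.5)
Your proof is correct, and it reaches the result by a slightly different route than the paper's.

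The paper works in ratio space. It sets $a_t=\rho_t\|\vs_{l,t}\|/\|\vs_{w,t}\|$, $b_t=\rho_t^{-1}\|\vs_{l,t}\|/\|\vs_{w,t}\|$ and $c_t=\sqrt{a_tb_t}$, and splits the band into $[a_t,c_t]$ and $[c_t,b_t]$. It finds which term of the minimum is active on each piece by comparing $r_t^2$ with $a_tb_t$, and shows that $\Delta_t$ is strictly increasing on the first piece and strictly decreasing on the second. Hence $c_t$ is the unique maximizer, with $\log c_t=\tfrac12(\log a_t+\log b_t)=\log\frac{\|\vs_{l,t}\|}{\|\vs_{w,t}\|}$.

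You instead pass to log coordinates, where the two slacks are affine with slopes $\pm1$ and constant sum $2b$. The single inequality $\min(x,y)\le\tfrac12(x+y)$ then gives both the bound and uniqueness with no case split. Equivalently, you obtain the closed form $\Delta_t(\log r_t)=-\log\rho_t-\bigl|\log r_t-\log\frac{\|\vs_{l,t}\|}{\|\vs_{w,t}\|}\bigr|$.

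Your version is shorter and gives the optimal slack $-\log\rho_t$ (half the DB width) explicitly. It also shows the maximizer is unique over all of $\mathbb{R}$, not just over the band. In exchange, the structure of the paper's argument uses only that one slack increases and the other decreases. Your averaging identity relies on the two slacks having matched slopes, which holds here but would not survive a less symmetric slack definition.
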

\vspace{-2mm}
See \cref{proof:max_safety_margin} for a detailed proof.
The robust center $\log r_t^\star$ balances the score vector norms. To maintain stability, incentives should be inversely proportional to the gradient magnitudes of the respective likelihoods.

\begin{figure*}[ht]
    \centering
    \begin{subfigure}[b]{0.99\linewidth}
        \centering
        \includegraphics[width=\linewidth]{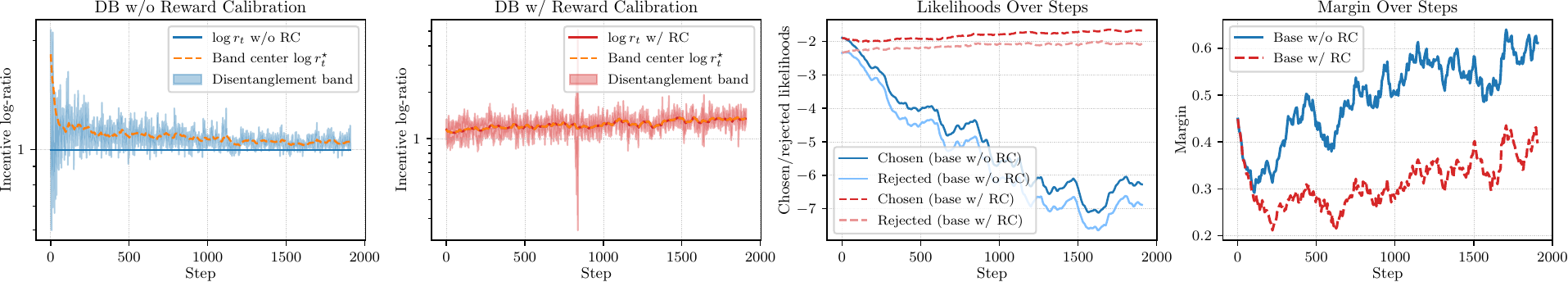}
        \caption{DPO dynamics on Pythia-2.8B.}
        \label{fig:dyn_pythia2b_dpo}
    \end{subfigure}
    
    \begin{subfigure}[b]{0.99\linewidth}
        \centering
        \includegraphics[width=\linewidth]{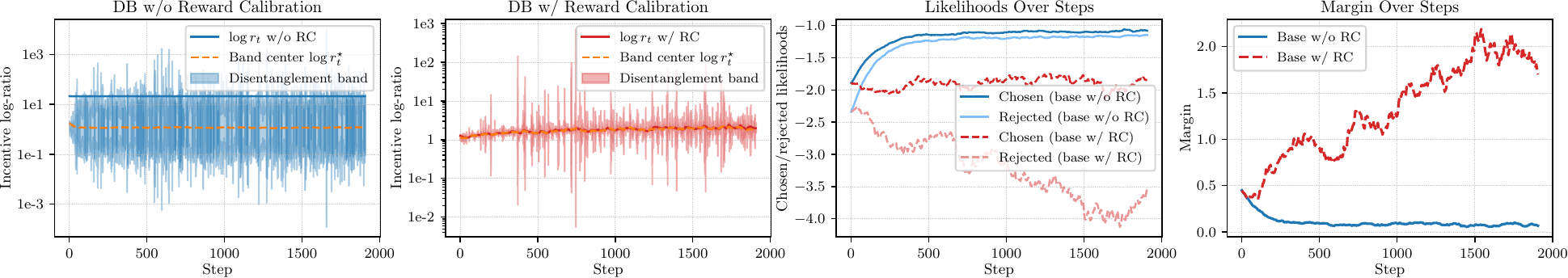}
        \caption{CPO dynamics on Pythia-2.8B.}
        \label{fig:dyn_pythia2b_cpo}
    \end{subfigure}
    \caption{
        Impact of RC on preference dynamics on Pythia-2.8B.
        Each panel (DPO/CPO) reports four coupled trajectories over training steps: DB w/ or w/o RC, likelihood trajectories, and margin growth. 
        Without RC, runs frequently drift toward DB boundaries or violate the band, which often coincides with undesired likelihood drifts.
        With RC, $\log r_t$ is pulled toward the DB center, making it more likely for training to enter and remain in the Pathway (iii) regime (typically after a short transient) while preserving margin improvement.
        See \cref{app:additional_dynamics} for full results across models (Pythia-410M, Pythia-2B and Mistral-7B) and objectives.
    }
    \label{fig:main_dynamics_pythia2b}
\end{figure*}

\vspace{-3mm}
\subsection{Plug-and-Play Reward Calibration}

A natural next question is: \emph{how can we make the realized log-ratio $\log r_t$ track $\log r_t^\star$ during stochastic training?}

Based on this robust centering, we propose a simple wrapper method: \textbf{reward calibration (RC)}. Instead of redesigning the objective function $\mathcal{L}$ from scratch, we can adaptively rescale the likelihood gradients to make $\log r_t$ track $\log r_t^\star$.

We define the calibrated likelihoods as follows:
\begin{equation}
\label{eq:rc_def}
\begin{aligned}
z_{w,t}^{\rm rc}
\triangleq
\Bigl( \frac{r_t^\star}{r_t}\Bigr)^{1/2} z_{w,t}
+
\Bigl(1-\Bigl( \frac{r_t^\star}{r_t}\Bigr)^{1/2}\Bigr)\operatorname{sg}(z_{w,t}),\\
z_{l,t}^{\rm rc}
\triangleq
\Bigl( \frac{r_t^\star}{r_t}\Bigr)^{-1/2} z_{l,t} + \Bigl(1-\Bigl( \frac{r_t^\star}{r_t}\Bigr)^{-1/2}\Bigr)\operatorname{sg}(z_{l,t}),
\end{aligned}
\end{equation}
where $\operatorname{sg}(\cdot)$ denotes a stop-gradient operator, satisfying $\operatorname{sg}(x)=x$ and $\frac{\partial}{\partial x}\operatorname{sg}(x)=0$.

Applying the base objective $\mathcal{L}$ to the calibrated likelihoods leaves the forward pass unchanged ($z^{\rm rc} = z$), but scales the backward gradients by $(r_t^\star / r_t)^{1/2}$ and $(r_t^\star / r_t)^{-1/2}$, respectively.  
This adaptively calibrates the incentive coefficients $(d_{w,t}, d_{l,t})$ so that the incentive log-ratio aligns with the target center $\log r_t^\star$, rectifying the dynamics on the fly. 
The next proposition formalizes this effect.

\begin{restatable}{proposition}{RatioCalibration}
\label{prop:ratio_calibration}
The log-ratio induced by the calibrated likelihoods in \cref{eq:rc_def} is exactly the log-ratio $\log r_t^\star$ defined in \cref{prop:max_safety_margin}.
\end{restatable}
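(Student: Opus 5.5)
The approach is a direct chain-rule computation on the calibrated objective $\mathcal{L}(z_{w,t}^{\rm rc}, z_{l,t}^{\rm rc})$. We then read off the new incentive coefficients through the decomposition in \cref{eq:universal_grad}.

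Write $\kappa_t \triangleq (r_t^\star/r_t)^{1/2}$. The first step is to compute the parameter gradients of the calibrated likelihoods in \cref{eq:rc_def}. Since $\operatorname{sg}$ has zero derivative, and $\kappa_t$ is computed from quantities treated as constants in the backward pass, we get
\begin{equation*}
\nabla_{\vtheta} z_{w,t}^{\rm rc} = \kappa_t \vs_{w,t}, \qquad \nabla_{\vtheta} z_{l,t}^{\rm rc} = \kappa_t^{-1}\vs_{l,t}.
\end{equation*}
In the forward pass, $z^{\rm rc}_{w|l,t}=z_{w|l,t}$ because $\operatorname{sg}(x)=x$. Hence the partial derivatives $\partial\mathcal{L}/\partial z_w$ and $\partial\mathcal{L}/\partial z_l$ are evaluated at the same point as in the uncalibrated objective, and they equal $-d_{w,t}$ and $d_{l,t}$ respectively.

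The second step applies the chain rule as in \cref{eq:universal_grad}:
\begin{equation*}
-\nabla_{\vtheta}\mathcal{L}(z_{w,t}^{\rm rc},z_{l,t}^{\rm rc}) = (\kappa_t d_{w,t})\,\vs_{w,t} - (\kappa_t^{-1} d_{l,t})\,\vs_{l,t}.
\end{equation*}
Matching this against the template $d_w\vs_w - d_l\vs_l$ shows that the effective incentives are $d_{w,t}^{\rm rc}=\kappa_t d_{w,t}$ and $d_{l,t}^{\rm rc}=\kappa_t^{-1} d_{l,t}$. Their ratio is $\kappa_t^2\, d_{w,t}/d_{l,t} = (r_t^\star/r_t)\, r_t = r_t^\star$. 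Taking logarithms gives $\log r_t^\star = \log(\|\vs_{l,t}\|/\|\vs_{w,t}\|)$ by \cref{prop:max_safety_margin}.

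The only delicate point is the interpretation, not the algebra. Two facts must be stated explicitly. First, $\kappa_t$ is itself detached, i.e.\ computed from $d_{w|l,t}$ and the score norms and not differentiated through; otherwise extra gradient terms would appear. Second, the ``induced log-ratio'' means the ratio of coefficients multiplying the fixed directions $\vs_{w,t}$ and $\vs_{l,t}$. We must also assume $d_{w,t},d_{l,t}>0$, as in the regime considered, so that $r_t$ and $\kappa_t$ are well defined and positive. Under these conventions the result is an exact identity.
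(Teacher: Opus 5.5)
Your proposal is correct and is essentially the paper's argument. The paper likewise uses that $\operatorname{sg}$ gives backward Jacobians $\alpha_t=(r_t^\star/r_t)^{1/2}$ and $\alpha_t^{-1}$ with vanishing cross terms, and that the forward values are unchanged so $\partial\mathcal{L}/\partial z$ is evaluated at the same point; hence $d^{\rm rc}_{w,t}=\alpha_t d_{w,t}$, $d^{\rm rc}_{l,t}=\alpha_t^{-1}d_{l,t}$, and their ratio is $r_t^\star$. The only cosmetic difference is that the paper defines the calibrated incentives directly as partial derivatives with respect to $z_{w,t},z_{l,t}$, rather than reading off the coefficients of $\vs_{w,t},\vs_{l,t}$ in the parameter gradient, which sidesteps any non-uniqueness when the two scores are parallel.
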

See \cref{proof:ratio_calibration} for a detailed proof.

Building on \cref{prop:ratio_calibration}, we introduce a plug-and-play calibration wrapper applicable to a broad class of preference optimization methods.
The procedure is summarized in \cref{alg:rc_single_step}, with a PyTorch implementation provided in \cref{alg:ratio_calibration_code}.
Importantly, the method requires no changes to the underlying optimizer and objective formulations; it merely rescales likelihood gradients via calibrated inputs $(z_{w,t}^{\rm rc}, z_{l,t}^{\rm rc})$ to promote the Pathway (iii) regime by keeping the incentive ratio in (and near the center of) the DB.

\vspace{-1mm}
\paragraph{Practical Implementations for Reward Calibration.}
To ensure efficiency and numerical stability, we adopt two implementation choices.
First, we approximate the scores $(\|\vs_{w,t}\|, \|\vs_{l,t}\|)$ using gradients from the output layer: the language model head in full fine-tuning, or the final adapter modules in LoRA~\citep{hu2022lora} or QLoRA~\citep{dettmers2023qlora}. 
Empirically, this avoids the cost of a full backward pass and yields similar DB values to full-parameter gradients (see \cref{fig:head_vs_full_db} in \cref{app:implementation_details_rc}).
Second, to reduce the high variance of stochastic gradients with batch size $1$, we use log-domain exponential moving averages (EMA) to estimate the geometric mean of $\log r_t^\star$ and $\log r_t$ for calibration.
Further details are in \cref{app:implementation_details_rc}.
\begin{algorithm}[ht]
   \caption{Plug-and-Play Reward Calibration}
   \label{alg:rc_single_step}
\begin{algorithmic}[1]
   \STATE \textbf{Input:} Triplet $(\vx, \vy_w, \vy_l)$, parameters $\vtheta_t$ at step $t$, learning rate $\eta$, policy model $\pi_{\vtheta_t}$
   \STATE $z_{w,t}, z_{l,t} \leftarrow \log \pi_{\vtheta_t}(\vy_w\mid\vx), \log\pi_{\vtheta_t}(\vy_l\mid\vx)$
   \STATE \textcolor{myblue}{// --- Start of Reward Calibration ---}
   \STATE \textcolor{myblue}{Evaluate $d_{w,t}$ and $d_{l,t}$ for a given objective (\cref{tab:coefficients}) } 
   \STATE \textcolor{myblue}{$\|\vs_{w,t}\|, \|\vs_{l,t}\| \leftarrow \|\nabla_{\vtheta_t} z_{w,t}\|,  \|\nabla_{\vtheta_t} z_{l,t}\|$}
   \STATE \textcolor{myblue}{$r_t^\star, r_t \leftarrow \|\vs_{l,t}\| / \|\vs_{w,t}\|,  d_{w,t} / d_{l,t}$} 
   \STATE \textcolor{myblue}{$z_{w,t}^{\rm rc} \!\leftarrow\! (r_t^\star / r_t)^{1/2} \cdot\! z_{w,t} + (1-(r_t^\star / r_t)^{1/2}) \cdot \operatorname{sg}(z_{w,t})$}
   \STATE \textcolor{myblue}{$z_{l,t}^{\rm rc} \!\leftarrow \! (r_t^\star / r_t)^{-1/2} \cdot \!z_{l,t} \!+ (1-(r_t^\star / r_t)^{-1/2}) \cdot \operatorname{sg}(z_{l,t})$}
   \STATE \textcolor{myblue}{// --- End of Reward Calibration ---}
   \STATE Computing objective $ \mathcal{L}(z_{w, t}^{\rm rc}, z_{l, t}^{\rm rc})$ for a given objective
   \STATE $\vtheta_{t+\eta} \leftarrow \vtheta_t - \eta \nabla_{\vtheta} \mathcal{L}(z_{w,t}^{\rm rc}, z_{l,t}^{\rm rc})$
\end{algorithmic}
\end{algorithm}

\vspace{-2mm}
\section{Experimental Verifications}
\subsection{Experimental Settings}

\paragraph{Model and Benchmarks.}
All models are trained for $1$ epoch using AdamW with a constant learning rate (no warm-up) and a global batch size of $32$. The learning rates are set per model scale and are consistent across baselines: Pythia-410M: $6e-7$, Pythia-2.8B: $1e-4$, Mistral-7B: $3e-7$, Qwen2.5-7B: $5e-7$. Experiments were run on two NVIDIA RTX 5090 D and six NVIDIA RTX A6000 GPUs.

We evaluate our RC method using the UltraFeedback Binarized dataset~\citep{cui2024ultrafeedback,tunstall2024zephyr} and Anthropic Helpful and Harmless‌ (Antropic-HH) \citep{bai2022training}. Our experiments employ Pythia-410M, Pythia-2.8B~\citep{Biderman2023PythiaAS},  Mistral-7B-Base~\citep{tunstall2024zephyr}, and Qwen2.5-7B-Instruct~\citep{Yang2024Qwen25TR} as the backbone architectures. 

To assess robustness across different training regimes, we apply full-parameter fine-tuning on Pythia-410M and use QLoRA~\citep{dettmers2023qlora} for the other larger models. This setup ensures that our calibration method works across both full-weight updates and parameter-efficient fine-tuning.

We evaluate methods on the Open LLM Leaderboard~\citep{eval-harness}, focusing on open-ended conversational alignment using AlpacaEval 2.0~\citep{dubois2024length} with DeepSeek V3.2~\citep{liu2025deepseek}, reporting the Length-Controlled Win Rate (LC Win Rate).

\vspace{-2mm}
\paragraph{Baselines and Implementation.} 
We compare our framework against two categories of baselines:

(1) Entangled objectives: margin-based methods that entangle the updates of chosen and rejected responses, such as DPO~\citep{rafailov2023direct}, IPO~\citep{azar2024general}, and their variants CPO~\citep{xu2024contrastive},  SimPO~\citep{meng2024simpo} and TI-DPO \citep{yang2026tokenimportance}.

(2) Disentangled objectives: density ratio estimation methods that offer more independent control over updates, including DDRO~\citep{higuchi2025direct} and the DIL family (BCE, LSIF, UKL)~\citep{xiao2025connection}.

\subsection{Ablation Studies}
We next examine two practical design choices in RC: approximating the score norms using output-side gradients, and stabilizing the ratio estimates with log-domain EMA.

\vspace{-3mm}
\paragraph{Efficient DB Estimation.}
Computing the exact score norms $\|\vs_{w,t}\|,\|\vs_{l,t}\|$ requires a full backward pass through all trainable parameters. To make RC practical, we approximate them using output-side parameters: the language modeling head in full fine-tuning, and the final trainable adapter layers in parameter-efficient fine-tuning (e.g., LoRA~\citep{hu2022lora} or QLoRA~\citep{dettmers2023qlora}). On Mistral-7B, the resulting DB closely matches the full-parameter version in both width and trend (\cref{fig:head_vs_full_db_main}), supporting this efficient approximation.
\begin{figure}[!ht]
    \centering
    \begin{subfigure}[b]{0.48\linewidth}
        \centering
    \includegraphics[width=\linewidth]{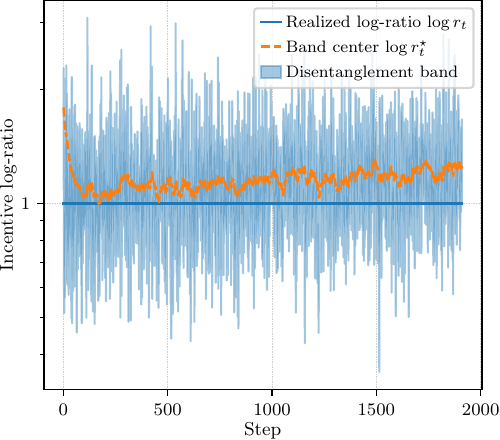} 
        \caption{DPO (head-only)}
        \label{fig:dpo_head_main}
    \end{subfigure}
    \hfill
    \begin{subfigure}[b]{0.48\linewidth}
        \centering
        \includegraphics[width=\linewidth]{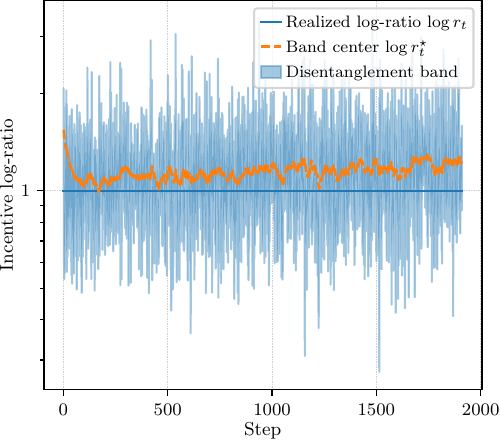}
        \caption{DPO (full parameters)}
        \label{fig:dpo_full_main}
    \end{subfigure}
    \vspace{-1.5mm}
    \caption{
        Validation of the head-only gradient approximation on Mistral-7B. Head-only gradients produce a DB similar to that from full-parameter gradients in both width and trend.
    }
    \label{fig:head_vs_full_db_main}
\end{figure}

\vspace{-5mm}
\paragraph{Sensitivity to EMA Momentum.}
RC uses log-domain EMA to stabilize ratio estimates during stochastic training. We ablate the EMA momentum on Pythia-2.8B with DPO using $\beta \in \{0.5, 0.9, 0.95, 0.98, 0.999\}$. As shown in \cref{tab:ema_ablation}, performance is very similar across this range, indicating that RC is not sensitive to the EMA momentum and remains effective over a broad range of smoothing strengths.
\vspace{-2mm}
\begin{table}[!ht]
    \centering
    \small
    \setlength{\tabcolsep}{1.8pt}
    \caption{Ablation study on EMA momentum used in RC with DPO on Pythia-2.8B. RC remains effective across a wide range of EMA momenta. Subscripts denote standard errors.}
    \label{tab:ema_ablation}
    \begin{tabular}{lccccc}
    \toprule
    \textbf{EMA} & \textbf{MMLU-Pro} & \textbf{BBH} & \textbf{MUSR} & \textbf{ARC} & \textbf{Average} \\
    \midrule
    w/o RC & $10.82_{.28}$ & $31.83_{.58}$ & $30.95_{1.61}$ & $34.04_{1.38}$ & $26.91$ \\
    \midrule
    $0.5$    & $11.15_{.29}$ & $32.07_{2.85}$ & $33.86_{1.68}$ & $38.31_{1.42}$ & $28.85$ \\
    $0.9$    & $10.97_{.28}$ & $32.54_{2.85}$ & $34.39_{1.68}$ & $37.63_{1.42}$ & $28.88$ \\
    $0.95$ & $11.37_{.29}$ & $31.43_{2.81}$ & $33.47_{1.66}$ & $37.46_{1.41}$ & $28.43$ \\
    $0.98$   & $11.15_{.29}$ & $32.12_{2.82}$ & $34.66_{1.68}$ & $37.88_{1.42}$ & $28.95$ \\
    $0.999$  & $11.49_{.29}$ & $31.46_{2.82}$ & $34.52_{1.69}$ & $37.97_{1.42}$ & $28.86$ \\
    \bottomrule
    \end{tabular}
\end{table}

\vspace{-5mm}
\paragraph{Training-Time Overhead of Reward Calibration.}
RC is a lightweight backward-pass wrapper that introduces no auxiliary model or extra forward pass. Its wall-clock overhead is negligible under LoRA fine-tuning (Mistral-7B: $\sim$11h $\rightarrow$ $\sim$11.4h; Pythia-2.8B: $\sim$4.5h with little observed delay) and modest under full-parameter tuning on Pythia-410M ($\sim$50m $\rightarrow$ $\sim$59m). Detailed timings are reported in \cref{app:time_overhead_details_full}.

\vspace{-3mm}
\subsection{Experimental Results for Reward Calibration}

\paragraph{Verifying Pathway (iii) via Reward Calibration.}

RC is designed to enforce Pathway (iii) by keeping the realized log-ratio $\log r_t$ inside the DB and near the center $\log r_t^\star$.
On Pythia-2.8B (\cref{fig:main_dynamics_pythia2b}), entangled baselines align with the DB-based diagnosis: DPO/IPO frequently exhibit winner degradation together with loser suppression (Pathway (ii)), while CPO can instead drift toward joint increase (Pathway (i)).
Importantly, objective form alone does not guarantee Pathway (iii): DDRO on Pythia-2.8B can also deviate from the DB-feasible regime and correspondingly fail to realize Pathway (iii) (see \cref{subfig:comprehensive_ddro_pythia-2b} in \cref{app:additional_dynamics}).
After applying RC, trajectories more often enter and remain in the Pathway (iii) regime across objectives and model scales from Pythia-410M to Mistral-7B (see \cref{fig:comprehensive-disentangled-pythia-410m,fig:comprehensive-entangled-pythia-2b,fig:comprehensive-disentangled-pythia-2b,fig:comprehensive-entangled-mistral-7b,fig:comprehensive-disentangled-mistral-7b} for the full results).

\vspace{-3mm}
\paragraph{Downstream Performance Comparison On Benchmarks.}
\cref{tab:evaluation_results_final,tab:evaluation_results_pythia_final} compares downstream performance with (w/) and without (w/o) RC.
Overall, RC is more helpful when the uncalibrated run exhibits entangled or unstable likelihood dynamics, as rebalancing chosen vs.\ rejected updates facilitates training toward Pathway (iii).

On Pythia-2.8B, the clearest gains appear on ARC and BBH.
For example, DPO improves ARC from $34.04$ to $37.12$ ($+3.08$), SimPO improves ARC from $32.59$ to $37.03$ ($+4.44$) and BBH from $30.49$ to $31.34$ ($+0.85$), and TI-DPO improves the average score from $27.13$ to $28.44$ ($+1.31$).
In contrast, the effect is more task-dependent on MMLU-PRO and MUSR, where we observe both gains and regressions (e.g., MMLU-PRO improves for DPO/BCE/SimPO/TI-DPO but drops for CPO), consistent with RC being a dynamics-oriented intervention rather than a guaranteed monotonic booster.

On Mistral-7B, RC mainly acts as a stabilizer for weak runs.
DPO improves from $24.26$ to $36.44$ on average ($+12.18$), with MMLU-Pro increasing from $11.66$ to $30.31$ and ARC from $22.70$ to $60.92$.
DDRO shows a similar recovery, improving from $16.53$ to $36.34$ on average ($+19.81$).
When the baseline is already strong, the changes are small, e.g., SimPO ($+0.01$), BCE ($-0.05$), LSIF ($+0.15$), and TI-DPO ($+0.00$) in average score. 
When the baseline is already strong (e.g., BCE/SimPO/LSIF), RC is non-disruptive, with changes typically near zero.

On Qwen2.5-7B, remains non-disruptive and brings modest improvements for most objectives.
The average score improves for DPO ($+0.48$), SimPO ($+0.26$), BCE ($+1.33$), LSIF ($+0.23$), and TI-DPO ($+0.11$), while CPO is nearly unchanged ($+0.01$) and DDRO shows a small drop ($-0.05$).
The gains are smaller than those on Mistral-7B, but the overall trend is still consistent with our main claim: promoting the Pathway (iii) regime is a useful and generally safe intervention, with clear gains in many entangled/unstable settings.  
See \cref{app:additional_benchmark} for detailed analysis.

\begin{table*}[ht]
    \centering
    \small 
    \setlength{\tabcolsep}{5pt}
    \caption{Evaluation results on benchmarks across models and objectives. Subscripts denote standard errors. For each objective, we report results without (w/o) RC and with (w/) RC. See \cref{tab:evaluation_results_pythia_final} for the results on Pythia-2.8B.}
    \label{tab:evaluation_results_final}
    
    \begin{tabular}{ll cccccc l}
    \toprule
    \textbf{Method} & \textbf{RC} & \textbf{MMLU-Pro} & \textbf{BBH} & \textbf{MUSR} & \textbf{ARC} & \textbf{MATH} & \textbf{GSM8K} & \textbf{Average ($\Delta$)} \\
    \midrule
    \multicolumn{9}{c}{\textbf{Mistral-7B}} \\
    \midrule
    DPO    & w/o & $11.66_{.29}$ & $29.12_{2.65}$ & $41.53_{1.74}$ & $22.70_{1.22}$ & $2.93_{.46}$ & $37.60_{1.33}$ & $24.26$ \\
           & w/  & $30.31_{.42}$ & $44.83_{3.01}$ & $41.67_{1.74}$ & $60.92_{1.43}$ & $2.70_{.45}$ & $38.21_{1.34}$ & $36.44$ \scriptsize{$\mathbf{\textcolor{myblue}{(+12.18)}}$} \\
    SimPO  & w/o & $30.28_{.42}$ & $44.65_{3.01}$ & $41.80_{1.74}$ & $60.92_{1.43}$ & $2.80_{.46}$ & $37.68_{1.33}$ & $36.36$ \\
           & w/  & $30.19_{.42}$ & $45.03_{3.01}$ & $41.67_{1.74}$ & $61.18_{1.42}$ & $2.32_{.42}$ & $37.83_{1.34}$ & $36.37$ \scriptsize{$\mathbf{\textcolor{myblue}{(+0.01)}}$} \\
    CPO    & w/o & $30.12_{.42}$ & $44.73_{3.00}$ & $41.53_{1.75}$ & $60.24_{1.43}$ & $2.14_{.40}$ & $36.39_{1.33}$ & $35.86$ \\
           & w/  & $30.24_{.42}$ & $44.73_{3.01}$ & $41.53_{1.74}$ & $60.75_{1.43}$ & $2.74_{.45}$ & $37.53_{1.33}$ & $36.25$ \scriptsize{$\mathbf{\textcolor{myblue}{(+0.39)}}$} \\
    BCE    & w/o & $30.33_{.42}$ & $44.72_{3.01}$ & $41.67_{1.74}$ & $60.75_{1.43}$ & $3.03_{.47}$ & $37.83_{1.34}$ & $36.39$ \\
           & w/  & $30.26_{.42}$ & $44.84_{3.01}$ & $41.67_{1.74}$ & $60.84_{1.43}$ & $2.81_{.46}$ & $37.60_{1.33}$ & $36.34$ \scriptsize{$\mathbf{\textcolor{mygreen}{(-0.05)}}$} \\
    DDRO   & w/o & $11.66_{.29}$ & $29.12_{2.65}$ & $35.71_{1.71}$ & $22.70_{1.22}$ & $0.00_{.00}$ & $0.00_{.00}$ & $16.53$ \\
           & w/  & $30.26_{.42}$ & $44.78_{3.01}$ & $41.40_{1.74}$ & $60.84_{1.43}$ & $2.63_{.44}$ & $38.13_{1.34}$ & $36.34$ \scriptsize{$\mathbf{\textcolor{myblue}{(+19.81)}}$} \\
    LSIF   & w/o & $30.30_{.42}$ & $44.80_{3.01}$ & $41.40_{1.74}$ & $60.75_{1.43}$ & $2.60_{.44}$ & $37.53_{1.33}$ & $36.23$ \\
           & w/  & $30.25_{.42}$ & $44.82_{3.00}$ & $41.40_{1.74}$ & $61.01_{1.43}$ & $2.81_{.46}$ & $37.98_{1.34}$ & $36.38$ \scriptsize{$\mathbf{\textcolor{myblue}{(+0.15)}}$} \\
    TI-DPO & w/o & $30.35_{.42}$ & $45.02_{3.01}$ & $41.40_{1.74}$ & $60.58_{1.43}$ & $2.55_{1.13}$ & $37.30_{1.33}$ & $36.20$ \\
           & w/  & $30.30_{.42}$ & $44.95_{3.01}$ & $41.67_{1.74}$ & $60.49_{1.43}$ & $2.41_{1.02}$ & $37.38_{1.33}$ & $36.20$ \scriptsize{$\mathbf{\textcolor{myblue}{(+0.00)}}$} \\

    \midrule
    \multicolumn{9}{c}{\textbf{Qwen2.5-7B}} \\
    \midrule
    DPO    & w/o & $44.32_{.45}$ & $55.74_{3.05}$ & $42.06_{1.76}$ & $67.49_{1.37}$ & $24.47_{3.02}$ & $34.72_{1.31}$ & $44.80$ \\
           & w/  & $44.79_{.45}$ & $55.76_{3.06}$ & $42.72_{1.78}$ & $67.32_{1.37}$ & $25.35_{3.07}$ & $35.71_{1.32}$ & $45.28$ \scriptsize{$\mathbf{\textcolor{myblue}{(+0.48)}}$} \\
    SimPO  & w/o & $44.82_{.45}$ & $55.50_{3.06}$ & $43.12_{1.78}$ & $67.49_{1.37}$ & $26.07_{3.10}$ & $36.09_{1.32}$ & $45.52$ \\
           & w/  & $44.86_{.45}$ & $55.64_{3.06}$ & $42.86_{1.78}$ & $67.49_{1.37}$ & $25.98_{3.05}$ & $37.83_{1.34}$ & $45.78$ \scriptsize{$\mathbf{\textcolor{myblue}{(+0.26)}}$} \\
    CPO    & w/o & $44.79_{.45}$ & $55.50_{3.06}$ & $43.52_{1.79}$ & $65.61_{1.39}$ & $26.20_{3.12}$ & $36.20_{1.38}$ & $45.30$ \\
           & w/  & $44.74_{.45}$ & $55.57_{3.06}$ & $43.25_{1.78}$ & $67.49_{1.37}$ & $25.42_{3.04}$ & $35.41_{1.32}$ & $45.31$ \scriptsize{$\mathbf{\textcolor{myblue}{(+0.01)}}$} \\
    BCE    & w/o & $41.33_{.45}$ & $53.90_{3.06}$ & $40.34_{1.74}$ & $67.41_{1.37}$ & $23.95_{2.97}$ & $35.41_{1.32}$ & $43.72$ \\
           & w/  & $44.77_{.45}$ & $55.61_{3.06}$ & $42.86_{1.78}$ & $67.32_{1.37}$ & $24.19_{2.99}$ & $35.56_{1.32}$ & $45.05$ \scriptsize{$\mathbf{\textcolor{myblue}{(+1.33)}}$} \\
    DDRO   & w/o & $44.80_{.45}$ & $55.71_{3.06}$ & $42.59_{1.78}$ & $67.24_{1.37}$ & $25.23_{3.04}$ & $35.94_{1.32}$ & $45.25$ \\
           & w/  & $44.80_{.45}$ & $55.71_{3.06}$ & $42.72_{1.78}$ & $67.24_{1.37}$ & $24.89_{3.04}$ & $35.86_{1.32}$ & $45.20$ \scriptsize{$\mathbf{\textcolor{mygreen}{(-0.05)}}$} \\
    LSIF   & w/o & $44.78_{.45}$ & $55.68_{3.06}$ & $42.86_{1.78}$ & $67.32_{1.37}$ & $24.45_{3.02}$ & $35.63_{1.32}$ & $45.12$ \\
           & w/  & $44.80_{.45}$ & $55.65_{3.06}$ & $42.86_{1.78}$ & $67.24_{1.37}$ & $25.03_{3.04}$ & $36.54_{1.33}$ & $45.35$ \scriptsize{$\mathbf{\textcolor{myblue}{(+0.23)}}$} \\
    TI-DPO & w/o & $44.76_{.45}$ & $55.55_{3.06}$ & $42.72_{1.78}$ & $67.15_{1.37}$ & $24.37_{3.00}$ & $35.86_{1.32}$ & $45.07$ \\
           & w/  & $44.77_{.45}$ & $55.66_{3.06}$ & $42.72_{1.78}$ & $67.24_{1.37}$ & $24.51_{3.04}$ & $36.16_{1.32}$ & $45.18$ \scriptsize{$\mathbf{\textcolor{myblue}{(+0.11)}}$} \\
    \bottomrule
    \end{tabular}
\end{table*}

\paragraph{Downstream Performance Comparison On Open-Ended Generation.}
To assess open-ended generation, we further evaluate RC-calibrated Pythia-2.8B models against baselines on AlpacaEval 2.0 judged by DeepSeek V3.2~\citep{liu2025deepseek}.
RC improves the entangled DPO variant to a $\textbf{70.56\%}$ win rate (similar to 70.12\% via GPT-4-Turbo\footnote{\url{https://openai.com/api/}}), while remaining non-regressive for the already-disentangled BCE variant ($51.43$\%).
This suggests that RC is particularly helpful for entangled objectives, while remaining non-regressive for the disentangled BCE variant in Pathway (iii) in this experiment.
\begin{table}[ht]
    \centering
    \small
    \setlength{\tabcolsep}{2pt}
    \caption{Evaluation results on Anthropic Helpful and Harmless (Anthropic-HH) benchmarks. Subscripts denote standard errors.}
    \label{tab:hh_results}
    \resizebox{\linewidth}{!}{
    \begin{tabular}{ll cccccc c}
        \toprule
        \textbf{Method} & \textbf{RC} & \textbf{MMLU-Pro} & \textbf{BBH} & \textbf{MUSR} & \textbf{ARC} & \textbf{MATH} & \textbf{GSM8K} & \textbf{Average} \\
        \midrule
        \multirow{2}{*}{\textbf{DPO}} & w/o  & $30.14_{.42}$ & $44.82_{.61}$ & $41.40_{1.74}$ & $60.32_{1.43}$ & $2.42_{.43}$ & $37.15_{1.33}$ & $36.04$ \\
        & w/ & $30.17_{.42}$ & $44.63_{.61}$ & $41.67_{1.74}$ & $60.32_{1.43}$ & $2.17_{.42}$ & $36.62_{1.33}$ & $35.93$ \\
        \midrule
        \multirow{2}{*}{\textbf{BCE}} & w/o & $30.14_{.42}$ & $44.70_{.61}$ & $41.53_{1.74}$ & $60.41_{1.43}$ & $2.22_{.43}$ & $36.92_{1.33}$ & $35.99$ \\
        & w/ & $30.20_{.42}$ & $44.73_{.61}$ & $41.53_{1.74}$ & $60.49_{1.43}$ & $2.49_{.44}$ & $37.07_{1.33}$ & $36.09$ \\
        \bottomrule
    \end{tabular}
    }
\end{table}

\vspace{-3mm}
\section{Conclusion and Future Works}
\paragraph{Conclusion.}
We presented a unified framework for diagnosing and steering the reward dynamics of preference optimization.
Through an incentive-score decomposition, we showed that diverse objectives share identical update directions and differ only in scalar incentive coefficients. This decomposition itself provides a common analytical lens for comparing objectives that have previously been studied in separate frameworks.
This led to the disentanglement band (DB), a testable condition that characterizes when training enters the Pathway (iii) regime: suppress the loser while preserving the winner, possibly after an early stage.
To enforce this condition, we introduced reward calibration (RC), a plug-and-play wrapper that rebalances chosen vs.\ rejected updates without redesigning the base objective.
Empirically, RC often steers training toward Pathway (iii), with downstream improvements observed in several settings across diverse objectives and model scales.
\vspace{-3mm}

\vspace{-3mm}
\paragraph{Limitations and Future Work.}
Future work includes exploring other objective-agnostic ways to encourage Pathway (iii), extending the framework to iterative or online preference optimization, and generalizing the current pairwise setting to multi-response preference data.

\section*{Acknowledgements}
This work was supported in part by grants from National Natural Science Foundation of China (52539005), the China Scholarship Council (202306150167), the fundamental research program of Guangdong, China (2023A1515011281), Guangdong Basic and Applied Basic Research Foundation (24202107190000687), Foshan Science and Technology Research Project (2220001018608).



\section*{Impact Statement}

This paper presents work whose goal is to advance the field of preference optimization. 
While direct societal impacts are limited, future extensions to applied domains (e.g., via our open-source codebase) should incorporate domain-specific ethical reviews per deployment contexts.


\bibliography{reference}
\bibliographystyle{icml2026}

\newpage
\appendix
\onecolumn

\section*{Appendix}





\section{Notations and Preliminaries}
\label{app:notations-and-preliminaries}

\subsection{Notations}
\label{app:notations}

In this section, we summarize all the symbols used in this paper. See \cref{tab:notation} for details. 
\begin{table}[ht]
	\centering
	\caption{Notation Summary. }
	\label{tab:notation}
	\begin{tabular}{@{}lll@{}}
		\toprule
		\textbf{Symbol} & \textbf{Definition} & \textbf{Notes} \\
		\midrule
		$(\vx, \vy_w, \vy_l)$ & Preference triplet & $\vx$: prompt $\vy_w$: winner; $\vy_l$: loser \\
		$\pi_{\vtheta}$ & Trainable policy model & Parameterized by $\vtheta$ \\
		$\pi_{\text{ref}}$ & Fixed reference policy model & Optional \\
		\midrule
		$z_{w}(\vtheta)$ & $\log \pi_{\vtheta}(\vy_{w} \mid \vx)$ & Winner likelihood \\
        $z_{l}(\vtheta)$ & $\log \pi_{\vtheta}(\vy_{l} \mid \vx)$ & Loser likelihood \\
		$z_{w}^{\text{ref}}$ & $\log \pi_{\text{ref}}(\vy_{w} \mid \vx)$ & Reference winner likelihoods \\
        $z_{l}^{\text{ref}}$ & $\log \pi_{\text{ref}}(\vy_{l} \mid \vx)$ & Reference loser likelihoods \\
		$\tilde{z}_{w}(\vtheta)$ & $z_{w}(\vtheta) - z_{w}^{\text{ref}}$ & Reference-normalized likelihoods \\
        $\tilde{z}_{l}(\vtheta)$ & $z_{l}(\vtheta) - z_{l}^{\text{ref}}$ & Reference-normalized likelihoods \\
		\midrule
		$m(\vtheta)$ & $z_w(\vtheta) - z_l(\vtheta)$ & Margin \\
		$m_{\text{ref}}$ & $z_w^{\text{ref}} - z_l^{\text{ref}}$ & Reference margin \\
		$\tilde{m}(\vtheta)$ & $m(\vtheta) - m_{\text{ref}}$ & Effective margin \\
		\midrule
		$\vs_{w}(\vtheta)$ & $\nabla_{\vtheta} \log \pi_{\vtheta}(\vy_{w} \mid \vx)$ & Score vectors \\
        $\vs_{l}(\vtheta)$ & $\nabla_{\vtheta} \log \pi_{\vtheta}(\vy_{l} \mid \vx)$ & Score vectors \\
		\midrule
		$\vtheta_t$ & Continuous parameter  trajectory & Trainable parameters at $t$ \\
		$m_t$ & $m(\vtheta_t)$ & Continuous-time margin at $t$ \\
		\bottomrule
	\end{tabular}
\end{table}

\subsection{Supplementary Related Works}
\label{subsec:supplementaty-related-works}
\paragraph{Density Ratio Estimation.}
Density ratio estimation (DRE) is a classical statistical problem in machine learning~\citep{sugiyama2012density}.
Early discriminative/contrastive estimators include KLIEP~\citep{sugiyama2008direct} and NCE~\citep{gutmann2010noise,gutmann2012noise}, but they can be unstable under limited support overlap~\citep{liu2017trimmed}.
More recent score-based DRE introduces intermediate distributions, ranging from discrete TRE~\citep{rhodes2020telescoping} to continuous path formulations such as DRE-$\infty$~\citep{choi2022density}, with refinements via diffusion-bridge interpolants~\citep{chen2025dequantified}, adaptive iterative paths based on the minimum variance path principle~\citep{chen2026dont}, conditional probability path \citep{yu2025density}, and one-step score-based estimators~\citep{chen2026onestepscorebaseddensityratio}.
The broader field of probabilistic inference \citep{paisley2009hidden,xu2024sparset} and Gaussian processes \citep{xu2024sparse,xu2024neural},  robust logical and semantic capabilities~\citep{zhang2026semanticawarelogicalreasoningsemiotic,luo2026enhancing,bao2025enhancing} and generative modeling \citep{chen2025entropy,xu2025bayesian} has also advanced sampling and inference techniques that underpin several of these estimators for capturing complex data structures~\citep{li2023scire,li2025evodiff,lin2025reciprocalla}.
This literature provides context for density ratio-motivated preference objectives (e.g., DIL~\citep{xiao2025connection}, DDRO~\citep{higuchi2025direct}). In contrast, we do not design new DRE estimators but analyze when such shaping yields disentangled likelihood dynamics and propose a plug-and-play intervention.


\subsection{Incentive Coefficients for Entangled and Disentangled Objectives}
\label{app:coefficients}

In this appendix we derive the incentive coefficients $(d_w,d_l)$ that appear in the unified update
\begin{equation}
\begin{aligned}
    -\nabla_\vtheta \mathcal{L}(z_w(\vtheta),z_l(\vtheta))
    = -\frac{\partial \mathcal{L}}{\partial z_w}\nabla_\vtheta z_w(\vtheta) -\frac{\partial \mathcal{L}}{\partial z_l}\nabla_\vtheta z_l(\vtheta)  \triangleq d_w(\vtheta)\vs_w(\vtheta) - d_l(\vtheta)\vs_l(\vtheta).
\end{aligned}
\end{equation}
We define incentive coefficients as
\begin{equation}
\label{eq:incentive_def_app}
d_w \triangleq -\frac{\partial \mathcal{L}}{\partial z_w},
\qquad
d_l \triangleq \frac{\partial \mathcal{L}}{\partial z_l}.
\end{equation}

For derivations, it is convenient to distinguish objectives by whether they are \emph{entangled} (the loss ties $z_w$ and $z_l$ through a shared scalar, coupling $d_w$ and $d_l$) or \emph{disentangled} (separate shaping of chosen vs.\ rejected terms, allowing more independent $d_w$ and $d_l$).
This structural distinction on $(d_w,d_l)$ should not be conflated with density ratio-based vs.\ margin-based design (e.g., DIL~\citep{xiao2025connection} vs.\ KTO~\citep{ethayarajh2024kto,yuan2025a}).
When a reference policy $\pi_{\text{ref}}$ is used, $z_w^{\text{ref}},z_l^{\text{ref}}$ are treated as constants.

\subsubsection{Entangled Objectives}
\label{app:coupled}

We consider entangled objectives of the form~\citep{yuan2025a}
\begin{equation}
\label{eq:coupled_template}
\mathcal{L}(z_w,z_l) = \ell\left(h_w(z_w)-h_l(z_l)-c\right) + \Lambda(z_w),
\end{equation}
where $\ell(\cdot)$ is an outer shaping function,
$h_w, h_l$ are transformations,
$c$ is a constant offset (e.g., induced by a reference policy or length correction),
and $\Lambda(z_w)$ is an optional chosen-only term (e.g., a likelihood regularizer in SlicHF~\citep{zhao2023slic}). 
Substituting \cref{eq:coupled_template} into the definitions in \cref{eq:incentive_def_app} and applying the chain rule, we obtain
\begin{equation}
\label{eq:dw_dl_coupled_general}
\begin{aligned}
d_w &= - \ell^{\prime}\left(h_w(z_w)-h_l(z_l)-c\right)h_w^{\prime}(z_w) - \Lambda^{\prime}(z_w), \\
d_l &= \ell^{\prime}\left(h_w(z_w)-h_l(z_l)-c\right)h_l^{\prime}(z_l),
\end{aligned}
\end{equation}
where $\ell^{\prime}$ denotes the derivative of $\ell$ with respect to its scalar argument, and $h_w^{\prime}$, $h_l^{\prime}$, and $\Lambda^{\prime}$ denote the derivatives of $h_w$, $h_l$, and $\Lambda$ with respect to their respective inputs ($z_w$ or $z_l$). This notation is used throughout this subsection.

Below we instantiate \cref{eq:dw_dl_coupled_general} for common objectives.

\paragraph{DPO.}
DPO~\citep{rafailov2023direct} uses
$\mathcal{L}_{\text{DPO}}(z_w, z_l)=-\log\sigma\left(\beta \left( (z_w-z_l)-(z_w^{\text{ref}}-z_l^{\text{ref}}) \right)\right)$.
This corresponds to $h_w(z_w)=\beta z_w$, $h_l(z_l)=\beta z_l$, $c=\beta (z_w^{\text{ref}}-z_l^{\text{ref}})$, and $\Lambda(z_w)\equiv 0$.
Since $\frac{\mathrm{d}}{\mathrm{d}u}\left[-\log\sigma(u)\right]=-\sigma(-u)$, we obtain
\begin{equation}
d_w=d_l=\beta\sigma\left(-\beta\tilde m\right),
\end{equation}
where $\tilde m=(z_w-z_l)-(z_w^{\text{ref}}-z_l^{\text{ref}})$.

\paragraph{IPO.}
IPO~\citep[identity preference optimization]{azar2024general} uses $\mathcal{L}_{\text{IPO}}(z_w, z_l)=(\tilde m-\lambda)^2$.
Here $h_w(z_w)=z_w$, $h_l(z_l)=z_l$, $c=z_w^{\text{ref}}-z_l^{\text{ref}}$, $\Lambda(z_w) \equiv 0$, and $\ell(u)=(u-\lambda)^2$, hence $\ell^{\prime}(u)=2(u-\lambda)$.
Therefore, we have
\begin{equation}
    d_w=d_l=2(\lambda-\tilde m).
\end{equation}

\paragraph{R-DPO.}
R-DPO~\citep[regularized-DPO]{park2024disentangling} modifies the DPO argument by a length-difference correction:
$\mathcal{L}_{\text{R-DPO}}(z_w, z_l)=-\log\sigma(\beta \tilde m+\alpha\Delta|\vy|)$, where $\Delta|\vy|=|\vy_w|-|\vy_l|$.
Here, $h_w(z_w)=\beta z_w$, $h_l(z_l)=\beta z_l$, $\Lambda(z_w)\equiv 0$, and a modified offset $c=\beta (z_w^{\text{ref}}-z_l^{\text{ref}}) -\alpha\Delta|\vy|$.
The resulting incentive coefficients are
\begin{equation}
    d_w=d_l=\beta\sigma\left(-\beta \tilde m-\alpha\Delta|\vy|\right).
\end{equation}

\paragraph{SimPO.}
SimPO~\citep[simple preference optimization]{meng2024simpo} uses a length-normalized margin
$m_{\text{norm}} \!=\! \frac{\beta}{|\vy_w|}z_w \!-\! \frac{\beta}{|\vy_l|}z_l$ and loss
$\mathcal{L}_{\text{SimPO}}(z_w, z_l)\!=\!-\log\sigma(m_{\text{norm}}-\gamma)$.
With $h_w(z_w)\!=\!\frac{\beta}{|\vy_w|}z_w$, $h_l(z_l) \!=\! \frac{\beta}{|\vy_l|}z_l$, $c \!=\! \gamma$, and $\Lambda(z_w) \!\equiv\! 0$, we get
\begin{equation}
d_w=\frac{\beta}{|\vy_w|}\sigma(\gamma-m_{\text{norm}}),
\qquad
d_l=\frac{\beta}{|\vy_l|}\sigma(\gamma-m_{\text{norm}}).
\end{equation}

\paragraph{CPO.}
CPO~\citep[contrastive preference optimization]{xu2024contrastive} augments the standard margin loss with a chosen-only likelihood term:
$\mathcal{L}_{\text{CPO}}(z_w, z_l) = -\log\sigma(\beta m)-\lambda z_w$, where $m=z_w-z_l$.
This corresponds to $h_w(z_w)=\beta z_w$, $h_l(z_l)=\beta z_l$, and $c=0$.
Crucially, it sets $\Lambda(z_w)=-\lambda z_w$, so that minimizing $\mathcal{L}_{\text{CPO}}$ encourages larger $z_w$.
Therefore
\begin{equation}
d_w=\beta\sigma(-\beta m)+\lambda,
\qquad
d_l=\beta\sigma(-\beta m).
\end{equation}

\paragraph{RRHF / SlicHF.}
Hinge-style objectives such as RRHF~\citep{yuan2023rrhf} and SlicHF~\citep{zhao2023slic} use
\begin{equation}
\mathcal{L}_{\text{hinge}}(z_w, z_l)
=
\max(0,\gamma-m)-\lambda z_w,
\qquad m=z_w-z_l,
\end{equation}
where $\gamma=0$ recovers RRHF.
Here $h_w(z_w)=z_w$, $h_l(z_l)=z_l$, $c=0$, and $\Lambda(z_w)=-\lambda z_w$.
Using a subgradient for the hinge term, $\frac{\partial}{\partial m}\max(0,\gamma-m)=-\mathbb{I}(m<\gamma)$, we obtain
\begin{equation}
d_l=\mathbb{I}(m<\gamma),
\qquad
d_w=\mathbb{I}(m<\gamma)+\lambda.
\end{equation}

\paragraph{TDPO.}
For TDPO~\citep[token-level DPO]{zeng2024tokenlevel}, we keep the winner-side statistic unchanged and absorb the rejected-side token-level correction into the loser-side statistic: $z_l \triangleq \log \pi_\vtheta(\vy_l\mid \vx) + \alpha D_{\mathrm{SeqKL}}(\vx,\vy_l;\pi_{\text{ref}}\|\pi_\vtheta)$,
where 
\begin{equation}
D_{\mathrm{SeqKL}}(\vx,\vy;\pi_{\text{ref}}\|\pi_\theta)
= \sum_{t=1}^{|\vy|} D_{\mathrm{KL}} \left( \pi_{\text{ref}}(\cdot\mid [\vx,\vy_{<t}]) \middle\| \pi_\theta(\cdot\mid [\vx,\vy_{<t}]) \right).
\end{equation}

Define $D_w \triangleq D_{\mathrm{SeqKL}}(\vx,\vy_w;\pi_{\text{ref}}\|\pi_\vtheta)$, then TDPO can be written as $\mathcal{L}_{\mathrm{TDPO}}(z_w,z_l) = -\log \sigma(\beta [\tilde{m} + \alpha \operatorname{sg}(D_w)])$. Therefore,
\begin{equation}
d_w
= -\frac{\partial \mathcal{L}_{\mathrm{TDPO}}}{\partial z_w}
= \beta \sigma(- \beta [\tilde{m} + \alpha \operatorname{sg}(D_w)]),
\qquad
d_l = \frac{\partial \mathcal{L}_{\mathrm{TDPO}}}{\partial z_l}
= \beta \sigma(- \beta [\tilde{m} + \alpha \operatorname{sg}(D_w)]).
\end{equation}
The corresponding scores are $\vs_w = \nabla_{\vtheta}\log \pi_\vtheta(\vy_w\mid \vx)$ and $\vs_l = \nabla_{\vtheta}\log \pi_\vtheta(\vy_l \mid \vx) + \alpha \nabla_{\vtheta} D_{\mathrm{SeqKL}}(\vx,\vy_l;\pi_{\text{ref}}\|\pi_\vtheta)$.
Hence TDPO still admits the same two-branch decomposition $-\nabla_{\vtheta}\mathcal{L}_{\mathrm{TDPO}} = d_w \vs_w-d_l \vs_l$.
Therefore, TDPO remains entangled: it preserves the symmetric incentive coefficients $d_w=d_l$, while modifying the loser-side score direction through an additional token-level sequential KL term.

\paragraph{TI-DPO.}
For the weighted token-level preference term in TI-DPO \citep[token-importance DPO]{yang2026tokenimportance}, define the winner/loser branch statistics as weighted sums of token-level log-probabilities:
\begin{equation}
z_w
\triangleq
\sum_{t=1}^{|\vy_w|}
\omega_{w,t} \log \pi_\vtheta(y_{w,t}\mid [\vx,\vy_{w,<t}]),
\qquad
z_l
\triangleq
\sum_{t=1}^{|\vy_l|}
\omega_{l,t} \log \pi_\vtheta(y_{l,t}\mid [\vx,\vy_{l,<t}]),
\end{equation}
where $\omega_{w,t}$ and $\omega_{l,t}$ are the token-importance weights. Define $z_w^{\text{ref}}$ and $z_l^{\text{ref}}$ analogously by replacing $\pi_\vtheta$ with $\pi_{\text{ref}}$. Then the weighted DPO term in TI-DPO takes the standard entangled form $\mathcal{L}_{\mathrm{TI\text{-}DPO}} = -\log \sigma(\beta\tilde{m})$.
Its incentive coefficients are therefore
\begin{equation}
d_w = \beta \sigma(-\beta\tilde{m}),
\qquad
d_l = \beta \sigma(-\beta\tilde{m} ).
\end{equation}
The corresponding score directions become token-weighted sums:
\begin{equation}
\vs_w = \sum_{t=1}^{|\vy_w|} \omega_{w,t}
\nabla_{\vtheta}\log \pi_\vtheta(y_{w,t}\mid [\vx,\vy_{w,<t}]), \qquad \vs_l = \sum_{t=1}^{|\vy_l|} \omega_{l,t} \nabla_{\vtheta}\log \pi_\vtheta(y_{l,t}\mid [\vx,\vy_{l,<t}]).
\end{equation}

Therefore, the token-level weighting changes the branch statistics and the resulting score directions, but not the form of the incentive coefficients.

\subsubsection{Disentangled Objectives}
\label{app:decoupled}

Disentangled objectives decompose into independent chosen and rejected terms, typically through reference-normalized log density ratios
$\tilde{z}_w \triangleq z_w-z_w^{\text{ref}}$ and $\tilde{z}_l \triangleq z_l-z_l^{\text{ref}}$.
In entangled objectives, there is only one outer shaping function $\ell$, while a general disentangled form is
\begin{equation}
    \mathcal{L}(z_w,z_l)=\ell_w(\tilde{z}_w)+\ell_l(\tilde{z}_l),
\end{equation}
for which
\begin{equation}
\label{eq:dw_dl_decoupled_general}
d_w=-\ell_w^{\prime}(\tilde{z}_w)=-\frac{\partial }{\partial \tilde{z}_w}\ell_w(\tilde{z}_w),
\qquad
d_l=\ell_l^{\prime}(\tilde{z}_l)=\frac{\partial }{\partial \tilde{z}_l}\ell_l(\tilde{z}_l).
\end{equation}
Here $\ell_w$ and $\ell_l$ are outer shaping functions corresponding to $z_w$ and $z_l$, respectively. 

We now derive the incentive coefficients $(d_w, d_l)$ for disentangled cases.

\paragraph{KTO.}
KTO~\citep[Kahneman-Tversky optimization]{ethayarajh2024kto} applies pointwise shaping in log-probability space:
\begin{equation}
    \mathcal{L}_{\text{KTO}}(z_w, z_l) = \lambda_w\sigma(z_w^{\text{ref}}-z_w) + \lambda_l\sigma(z_l-z_l^{\text{ref}}).
\end{equation}
Let $\sigma^{\prime}(u)\triangleq \sigma(u)(1-\sigma(u))$.
Then
\begin{equation}
d_w=\lambda_w\sigma^{\prime}\left(z_w^{\text{ref}}-z_w\right)=\lambda_w\sigma^{\prime}(-\tilde{z}_w),
\qquad
d_l=\lambda_l\sigma^{\prime}\left(z_l-z_l^{\text{ref}}\right)=\lambda_l\sigma^{\prime}(\tilde{z}_l),
\end{equation}
so KTO satisfies $d_w\neq d_l$ in general.

\paragraph{DDRO.} DDRO (\citep{higuchi2025direct}, direct density ratio optimization) operates on an unpaired dataset introduced by KTO. The resulting objective of DDRO is defined as:
\begin{equation}
    \mathcal{L}_{\text{DDRO}}(z_w, z_l) = \log 2 - \tilde{z}_w + \log 2 - \log\left(2-e^{\tilde{z}_l}\right),
\end{equation}
where $\ell_w(\tilde{z}_w)=\log 2 - \tilde{z}_w$ and $\ell_l(\tilde{z}_l)=\log 2 - \log\left(2-e^{\tilde{z}_l}\right)$. Then, we have
\begin{equation}
    d_w = 1, \qquad d_l =  \frac{e^{\tilde{z}_l}}{2 - e^{\tilde{z}_l}}.
\end{equation}

\paragraph{DIL-BCE.}
The BCE~\citep{Hastie2001TheEO} variant in DIL~\citep{xiao2025connection} is
\begin{equation}
    \mathcal{L}_{\text{DIL-BCE}}(z_w, z_l) = \log\left(1+e^{-\tilde{z}_w}\right)+\log\left(1+e^{\tilde{z}_l}\right).
\end{equation}
Using $\frac{\mathrm{d}}{\mathrm{d}u}\log(1+e^{-u})=-\sigma(-u)$ and
$\frac{\mathrm{d}}{\mathrm{d}u}\log(1+e^{u})=\sigma(u)$, we obtain
\begin{equation}
d_w=\sigma(-\tilde{z}_w),
\qquad
d_l=\sigma(\tilde{z}_l).
\end{equation}

\paragraph{DIL-UKL.}
The UKL~\citep{nguyen2010estimating} variant in DIL~\citep{xiao2025connection} uses
$\mathcal{L}_{\text{DIL-UKL}}(z_w, z_l)=e^{\tilde{z}_l}-\tilde{z}_w$, yielding
\begin{equation}
d_w=1,
\qquad
d_l=e^{\tilde{z}_l}.
\end{equation}

\paragraph{DIL-LSIF.}
The LSIF~\citep{kanamori2009least} variant in DIL~\citep{xiao2025connection} uses
$\mathcal{L}_{\text{DIL-LSIF}}(z_w, z_l)=\frac12 e^{2\tilde{z}_l}-e^{\tilde{z}_w}$, yielding
\begin{equation}
d_w=e^{\tilde{z}_w},
\qquad
d_l=e^{2\tilde{z}_l}.
\end{equation}

\section{Proofs}
\label{app:proofs}
\subsection{Proof of \cref{thm:zwzl_ode}}
\label{proof:zwzl_ode}

\ContinuousZDynamics*
\begin{proof}[Proof of \cref{thm:zwzl_ode}]
Fix a time $t$ and consider the infinitesimal evolution induced by the local update direction at $\vtheta_t$.

By the assumed differentiability of $z_w(\vtheta)$ and $z_l(\vtheta)$, first-order Taylor expansions around $\vtheta_t$ yield
\begin{align}
z_{w,t+\eta}
&= z_{w,t} +\left\langle \nabla_{\vtheta} z_w(\vtheta_t),\ \vtheta_{t+\eta}-\vtheta_t \right\rangle +\mathcal{O}\left(\|\vtheta_{t+\eta}-\vtheta_t\|^2\right),
\label{eq:taylor_zw}\\
z_{l,t+\eta}
&=
z_{l,t} +\left\langle \nabla_{\vtheta} z_l(\vtheta_t),\ \vtheta_{t+\eta}-\vtheta_t \right\rangle +\mathcal{O}\left(\|\vtheta_{t+\eta}-\vtheta_t\|^2\right).
\label{eq:taylor_zl}
\end{align}
Introduce the score vectors at time $t$, $\vs_{w|l,t}\triangleq \nabla_{\vtheta} z_{w|l}(\vtheta_t)$.
With this notation, the linear terms in \cref{eq:taylor_zw,eq:taylor_zl} become $\langle \vs_{w,t},\vtheta_{t+\eta}-\vtheta_t\rangle$ and $\langle \vs_{l,t},\vtheta_{t+\eta}-\vtheta_t\rangle$ respectively.
Consider a gradient descent discretization with step size $\eta>0$,
\begin{align}
\vtheta_{t+\eta}-\vtheta_t = -\eta\nabla_{\vtheta}\mathcal{L}(\vtheta_t)  =\eta\left(d_{w,t}\vs_{w,t}-d_{l,t}\vs_{l,t}\right), \label{eq:disp_def}
\end{align}
where the second equation can be derived by the unified gradient form evaluated at $\vtheta_t$, $-\nabla_{\vtheta}\mathcal{L}(\vtheta_t)=d_{w,t}\vs_{w,t}-d_{l,t}\vs_{l,t}$, as defined in \cref{eq:universal_grad}.
In particular, This implies $\|\vtheta_{t+\eta}-\vtheta_t\|=\mathcal{O}(\eta)$ as $\eta\to 0$,
so the Taylor remainders in \cref{eq:taylor_zw,eq:taylor_zl} satisfy
$\mathcal{O}(\|\vtheta_{t+\eta}-\vtheta_t\|^2)=\mathcal{O}(\eta^2)$.

Substituting \cref{eq:disp_def} into the linear terms in \cref{eq:taylor_zw,eq:taylor_zl} gives
\begin{align}
z_{w,t+\eta}-z_{w,t}
&=
\left\langle \vs_{w,t},\ \eta\left(d_{w,t}\vs_{w,t}-d_{l,t}\vs_{l,t}\right)\right\rangle
+\mathcal{O}(\eta^2)
\nonumber\\
&=
\eta\left(d_{w,t}\langle \vs_{w,t},\vs_{w,t}\rangle
-d_{l,t}\langle \vs_{w,t},\vs_{l,t}\rangle\right)
+\mathcal{O}(\eta^2)
\nonumber\\
&=
\eta\left(d_{w,t}\|\vs_{w,t}\|^2
-d_{l,t}\langle \vs_{w,t},\vs_{l,t}\rangle\right)
+\mathcal{O}(\eta^2),
\label{eq:zw_increment}
\\[2mm]
z_{l,t+\eta}-z_{l,t}
&=
\left\langle \vs_{l,t},\ \eta\left(d_{w,t}\vs_{w,t}-d_{l,t}\vs_{l,t}\right)\right\rangle
+\mathcal{O}(\eta^2)
\nonumber\\
&=
\eta\left(d_{w,t}\langle \vs_{l,t},\vs_{w,t}\rangle
-d_{l,t}\langle \vs_{l,t},\vs_{l,t}\rangle\right)
+\mathcal{O}(\eta^2)
\nonumber\\
&=
\eta\left(d_{w,t}\langle \vs_{w,t},\vs_{l,t}\rangle
-d_{l,t}\|\vs_{l,t}\|^2\right)
+\mathcal{O}(\eta^2),
\label{eq:zl_increment}
\end{align}
where we used $\langle \vs_{l,t},\vs_{w,t}\rangle=\langle \vs_{w,t},\vs_{l,t}\rangle$.

Divide \cref{eq:zw_increment,eq:zl_increment} by $\eta$ and take $\eta\to 0$ to derive the instantaneous dynamics:
\begin{align}
\dot z_{w,t} &= \lim_{\eta \to 0}\frac{z_{w,t+\eta}-z_{w,t}}{\eta} = d_{w,t}\|\vs_{w,t}\|^2 -d_{l,t}\langle \vs_{w,t},\vs_{l,t}\rangle + \lim_{\eta \to 0}\mathcal{O}(\eta)=d_{w,t}\|\vs_{w,t}\|^2 -d_{l,t}\langle \vs_{w,t},\vs_{l,t}\rangle, \label{eq:zw_diffquot}\\
\dot z_{l,t}&=\lim_{\eta \to 0}\frac{z_{l,t+\eta}-z_{l,t}}{\eta} = d_{w,t}\langle \vs_{w,t},\vs_{l,t}\rangle -d_{l,t}\|\vs_{l,t}\|^2 + \lim_{\eta \to 0}\mathcal{O}(\eta) = d_{w,t}\langle \vs_{w,t},\vs_{l,t}\rangle -d_{l,t}\|\vs_{l,t}\|^2.
\label{eq:zl_diffquot}
\end{align}
which are exactly \cref{eq:zw_zl_ode}.
\end{proof}

\subsection{Proof of \cref{coro:margin_ode}}
\label{proof:margin_ode}
\ContinuousMarginDynamics*

\begin{proof}[Proof of \cref{coro:margin_ode}]
By definition, $m_t = z_{w,t} - z_{l,t}$, hence $\dot m_t = \dot z_{w,t} - \dot z_{l,t}$.
Substituting the dynamics in \cref{eq:zw_zl_ode}, we obtain
\begin{align*}
\dot m_t
&=
\Bigl(d_{w,t}\|\vs_{w,t}\|^2 - d_{l,t}\langle \vs_{w,t},\vs_{l,t}\rangle\Bigr)
-
\Bigl(d_{w,t}\langle \vs_{w,t},\vs_{l,t}\rangle - d_{l,t}\|\vs_{l,t}\|^2\Bigr) \\
&=
d_{w,t}\|\vs_{w,t}\|^2
+
d_{l,t}\|\vs_{l,t}\|^2
-
(d_{w,t}+d_{l,t})\langle \vs_{w,t},\vs_{l,t}\rangle.
\end{align*}
This finishes the proof.
\end{proof}

\subsection{Proof of \cref{prop:max_safety_margin}}
\label{proof:max_safety_margin}
\MaxSafetyMargin*

\begin{proof}[Proof of \cref{prop:max_safety_margin}]
Recall the DB written in log-ratio form:
\begin{equation}
\log\frac{\|\vs_{l,t}\|}{\|\vs_{w,t}\|} + \log \rho_t
\le \log r_t \le
\log\frac{\|\vs_{l,t}\|}{\|\vs_{w,t}\|} - \log \rho_t.
\end{equation}

Define the two DB thresholds in ratio space as
\begin{equation}
a_t \triangleq \rho_t\frac{\|\vs_{l,t}\|}{\|\vs_{w,t}\|},
\qquad
b_t \triangleq \frac{1}{\rho_t}\frac{\|\vs_{l,t}\|}{\|\vs_{w,t}\|},
\end{equation}
so that $\log r_t\in\mathrm{DB}$ is equivalent to $r_t\in[a_t,b_t]$. Let $c_t \triangleq \sqrt{a_t b_t}$.

By definition,
\begin{equation}
\Delta_t(\log r_t)
=
\min\left\{
\log\frac{r_t}{a_t},
\log\frac{b_t}{r_t}
\right\}.
\label{eq:delta_def_min}
\end{equation}

\paragraph{Inverval $r_t\in[a_t,c_t]$.}
We first consider $r_t\in[a_t,c_t]$. Since $r_t\le c_t$ and all quantities are positive,
\begin{align}
r_t\le \sqrt{a_t b_t}
&\Longrightarrow r_t^2\le a_t b_t
\Longrightarrow \frac{r_t}{a_t}\le \frac{b_t}{r_t}
\Longrightarrow \log\frac{r_t}{a_t}\le \log\frac{b_t}{r_t}.
\label{eq:left_branch_chain}
\end{align}
The last implication uses that $\log(\cdot)$ is strictly increasing on $(0,\infty)$.
Therefore, on $[a_t,c_t]$ the minimum in \cref{eq:delta_def_min} is attained by the first term, namely $\Delta_t(\log r_t)=\log\frac{r_t}{a_t}$ for all $r_t\in[a_t,c_t]$.

Moreover, for any $a_t\le r_t<r_t'\le c_t$,
\begin{align}
\Delta_t(\log r_t')-\Delta_t(\log r_t)
=\log\frac{r_t'}{a_t}-\log\frac{r_t}{a_t}
=\log\frac{r_t'}{r_t}
>0,
\end{align}
so $\Delta_t(\log r_t)$ is strictly increasing on $[a_t,c_t]$.

\paragraph{Inverval $r_t\in[c_t,b_t]$.}
Next consider $r_t\in[c_t,b_t]$. Since $r_t\ge c_t$,
\begin{align}
r_t\ge \sqrt{a_t b_t}
&\Longrightarrow r_t^2\ge a_t b_t
\Longrightarrow \frac{r_t}{a_t}\ge \frac{b_t}{r_t}
\Longrightarrow \log\frac{r_t}{a_t}\ge \log\frac{b_t}{r_t}.
\label{eq:right_branch_chain}
\end{align}
Hence, on $[c_t,b_t]$ the minimum in \cref{eq:delta_def_min} is attained by the second term: $\Delta_t(\log r_t)=\log\frac{b_t}{r_t}$ for all $r_t\in[c_t,b_t]$.

For any $c_t\le r_t<r_t'\le b_t$,
\begin{align}
\Delta_t(\log r_t')-\Delta_t(\log r_t)
=\log\frac{b_t}{r_t'}-\log\frac{b_t}{r_t}
=\log\frac{r_t}{r_t'}
<0,
\end{align}
so $\Delta_t(\log r_t)$ is strictly decreasing on $[c_t,b_t]$.

Together, since $\Delta_t(\log r_t)$ is strictly increasing on $[a_t,c_t]$ and strictly decreasing on $[c_t,b_t]$,
it has a unique maximizer on $[a_t,b_t]$ at $r_t=c_t$.
Finally,
\begin{align}
\log r_t^\star
=\log c_t
=\frac{1}{2}\big(\log a_t+\log b_t\big)
=\log\frac{\|\vs_{l,t}\|}{\|\vs_{w,t}\|}.
\end{align}
This completes the proof.

\end{proof}

\subsection{Proof of \cref{prop:ratio_calibration}}
\label{proof:ratio_calibration}
\RatioCalibration*
\begin{proof}[Proof of \cref{prop:ratio_calibration}]
Let $\operatorname{sg}(\cdot)$ denote the stop-gradient operator: for any scalar $x$,
$\operatorname{sg}(x)=x$ in the forward pass while $\frac{\partial}{\partial x}\operatorname{sg}(x)=0$ in the backward pass.

Fix time $t$. Let $r_t \triangleq d_{w,t}/d_{l,t}$ be the incentive ratio induced by the base objective $\mathcal{L}(z_w,z_l)$ at time $t$, and let $r_t^\star$ be the target ratio in \cref{prop:max_safety_margin}.
Define
\begin{equation}
    \alpha_t \triangleq \left( \frac{r_t^\star}{r_t}\right)^{1/2} = \exp\left(\frac{1}{2}(\log r_t^\star-\log r_t)\right).
\end{equation}

We construct calibrated likelihoods by
\begin{equation}
\label{eq:rc_stopgrad_def_app}
z_{w,t}^{\rm rc} \triangleq \alpha_t z_{w,t} + (1-\alpha_t)\operatorname{sg}(z_{w,t}),
\qquad
z_{l,t}^{\rm rc} \triangleq \alpha_t^{-1} z_{l,t} + (1-\alpha_t^{-1})\operatorname{sg}(z_{l,t}).
\end{equation}

By construction, in the forward pass we have
$z_{w,t}^{\rm rc}=z_{w,t}$ and $z_{l,t}^{\rm rc}=z_{l,t}$, but in the backward pass the Jacobians satisfy
\begin{equation}
\label{eq:rc_jacobian}
\frac{\partial z_{w,t}^{\rm rc}}{\partial z_{w,t}}=\alpha_t,
\qquad
\frac{\partial z_{l,t}^{\rm rc}}{\partial z_{l,t}}=\alpha_t^{-1},
\qquad
\frac{\partial z_{w,t}^{\rm rc}}{\partial z_{l,t}}=\frac{\partial z_{l,t}^{\rm rc}}{\partial z_{w,t}}=0,
\end{equation}
since $\frac{\partial}{\partial z}\operatorname{sg}(z)=0$.

Now consider the composed objective $\mathcal{L}(z_{w,t}^{\rm rc}, z_{l,t}^{\rm rc})$, viewed as a function of $(z_{w,t},z_{l,t})$ through the transformations above, and define the calibrated incentive coefficients by the same rule:
\begin{equation}
\label{eq:rc_incentive_def_app}
d_{w,t}^{\rm rc}\triangleq -\frac{\partial \mathcal{L}(z_{w,t}^{\rm rc},z_{l,t}^{\rm rc})}{\partial z_{w,t}},
\qquad
d_{l,t}^{\rm rc}\triangleq \phantom{-}\frac{\partial \mathcal{L}(z_{w,t}^{\rm rc},z_{l,t}^{\rm rc})}{\partial z_{l,t}}.
\end{equation}

Applying the chain rule and using \cref{eq:rc_jacobian}, we obtain
\begin{equation} \label{eq:chain_w_l_app}
    \begin{aligned}
        \frac{\partial \mathcal{L}(z_{w,t}^{\rm rc},z_{l,t}^{\rm rc})}{\partial z_{w,t}}
&=
\frac{\partial \mathcal{L}(z_{w,t}^{\rm rc},z_{l,t}^{\rm rc})}{\partial z_{w,t}^{\rm rc}}
\cdot
\frac{\partial z_{w,t}^{\rm rc}}{\partial z_{w,t}}
+
\frac{\partial \mathcal{L}(z_{w,t}^{\rm rc},z_{l,t}^{\rm rc})}{\partial z_{l,t}^{\rm rc}}
\cdot
\frac{\partial z_{l,t}^{\rm rc}}{\partial z_{w,t}}
=
\alpha_t \frac{\partial \mathcal{L}(z_{w,t}^{\rm rc},z_{l,t}^{\rm rc})}{\partial z_{w,t}^{\rm rc}},\\
\frac{\partial \mathcal{L}(z_{w,t}^{\rm rc},z_{l,t}^{\rm rc})}{\partial z_{l,t}}
&=
\frac{\partial \mathcal{L}(z_{w,t}^{\rm rc},z_{l,t}^{\rm rc})}{\partial z_{w,t}^{\rm rc}}
\cdot
\frac{\partial z_{w,t}^{\rm rc}}{\partial z_{l,t}}
+
\frac{\partial \mathcal{L}(z_{w,t}^{\rm rc},z_{l,t}^{\rm rc})}{\partial z_{l,t}^{\rm rc}}
\cdot
\frac{\partial z_{l,t}^{\rm rc}}{\partial z_{l,t}}
=
\alpha_t^{-1} \frac{\partial \mathcal{L}(z_{w,t}^{\rm rc},z_{l,t}^{\rm rc})}{\partial z_{l,t}^{\rm rc}}.
    \end{aligned}
\end{equation}

The following point is crucial: by \cref{eq:rc_stopgrad_def_app}, the numerical forward values satisfy
$(z_{w,t}^{\rm rc}, z_{l,t}^{\rm rc})=(z_{w,t}, z_{l,t})$.
Therefore, the partial derivatives of $\mathcal{L}$ with respect to its first argument, evaluated at these two notations, coincide:
\begin{equation}
\label{eq:partial_equal_app}
\begin{aligned}
    \left.\frac{\partial \mathcal{L}(z_{w,t}^{\rm rc},z_{l,t}^{\rm rc})}{\partial z_{w,t}^{\rm rc}}\right|_{(z_{w,t}^{\rm rc},z_{l,t}^{\rm rc})=(z_{w,t},z_{l,t})}
&=
\left.\frac{\partial \mathcal{L}(z_{w,t}, z_{l,t})}{\partial z_{w,t}}\right|_{(z_{w,t},z_{l,t})}, \\
\left.\frac{\partial \mathcal{L}(z_{w,t}^{\rm rc},z_{l,t}^{\rm rc})}{\partial z_{l,t}^{\rm rc}}\right|_{(z_{w,t}^{\rm rc},z_{l,t}^{\rm rc})=(z_{w,t},z_{l,t})}
&=
\left.\frac{\partial \mathcal{L}(z_{w,t},z_{l,t})}{\partial z_{l,t}}\right|_{(z_{w,t},z_{l,t})}.
\end{aligned}
\end{equation}

Substituting \cref{eq:chain_w_l_app,eq:partial_equal_app} into \cref{eq:rc_incentive_def_app} yields
\begin{equation}
\label{eq:scaled_incentives_app}
\begin{aligned}
    d_{w,t}^{\rm rc}
&=
-\frac{\partial \mathcal{L}(z_{w,t}^{\rm rc},z_{l,t}^{\rm rc})}{\partial z_{w,t}}
=
-\alpha_t\frac{\partial \mathcal{L}}{\partial z_{w,t}^{\rm rc}}
=
-\alpha_t\frac{\partial \mathcal{L}}{\partial z_{w,t}}
=
\alpha_t d_{w,t}, \\
d_{l,t}^{\rm rc}
&=
\frac{\partial \mathcal{L}(z_{w,t}^{\rm rc},z_{l,t}^{\rm rc})}{\partial z_{l,t}}
=
\alpha_t^{-1}\frac{\partial \mathcal{L}}{\partial z_{l,t}^{\rm rc}}
=
\alpha_t^{-1}\frac{\partial \mathcal{L}}{\partial z_{l,t}}
=
\alpha_t^{-1} d_{l,t}.
\end{aligned}
\end{equation}

Finally, letting $r_t^{\rm rc}\triangleq d_{w,t}^{\rm rc}/d_{l,t}^{\rm rc}$ and using \cref{eq:scaled_incentives_app}, we have
\[
r_t^{\rm rc}
=
\frac{\alpha_t d_{w,t}}{\alpha_t^{-1} d_{l,t}}
=
\alpha_t^2\frac{d_{w,t}}{d_{l,t}}
=
\alpha_t^2 r_t.
\]
Taking logs gives $\log r_t^{\rm rc}=\log r_t + 2\log\alpha_t$.
By the definition of $\alpha_t$, $2\log\alpha_t=\log r_t^\star-\log r_t$, hence
\[
\log r_t^{\rm rc}=\log r_t^\star,
\]
which completes the proof.

\end{proof}

\section{Experimental Settings and Results}
\label{app:experimental-settings}

\subsection{Dataset Details}
\label{app:datasets}

\textbf{UltraFeedback Binarized}~\citep{cui2024ultrafeedback,tunstall2024zephyr}.
We evaluate our method on the UltraFeedback Binarized dataset, a widely adopted benchmark comprising approximately 64k prompts derived from the UltraFeedback compilation. 
The dataset constructs binary preference pairs $(\vx, \vy_w, \vy_l)$ by leveraging scalar quality scores assigned by GPT-4 across multiple dimensions (e.g., helpfulness, honesty) for four distinct model completions. 
Specifically, the completion achieving the highest aggregate score is designated as the chosen response $\vy_w$, while the rejected response $\vy_l$ is stochastically sampled from the remaining candidates. 
This process effectively converts fine-grained scalar signals into the binary contrastive pairs required for preference optimization.

\subsection{Evaluation Protocols and Task Descriptions}
\label{app:tasks}

To comprehensively assess whether our reward calibration method improves alignment without compromising general capabilities, we employ a diverse set of benchmarks covering reasoning, mathematics, and general knowledge.
Specifically, models fine-tuned on the UltraFeedback Binarized dataset are evaluated on the HuggingFace Open LLM Leaderboard (v1 and v2)\footnote{\url{https://huggingface.co/spaces/open-llm-leaderboard/open_llm_leaderboard}}~\citep{open-llm-leaderboard-v1,open-llm-leaderboard-v2}.

Below, we detail the specific tasks included in our quantitative evaluation:

\begin{itemize}
    \item \textbf{MMLU-Pro~\citep{wang2024mmluprorobustchallengingmultitask}.} 
An advanced iteration of the Massive Multitask Language Understanding benchmark. MMLU-Pro introduces more complex multiple-choice questions and undergoes rigorous expert review to enhance difficulty and reduce data biases. It serves as a robust indicator of the model's broad knowledge and reasoning stability.

\item \textbf{BBH (Big Bench Hard)~\citep{suzgun2022challengingbigbenchtaskschainofthought}.}
A subset of $23$ challenging tasks selected from the BIG Bench suite, designed to test capabilities where standard language models typically struggle. It focuses on symbolic reasoning, reading comprehension, and logical deduction.

\item \textbf{MUSR (Multi-step Soft Reasoning)~\citep{sprague2024musrtestinglimitschainofthought}.} 
A benchmark designed to evaluate the limits of chain-of-thought reasoning. It presents complex scenarios that require the model to integrate information and reason across long contexts to derive correct conclusions.

\item \textbf{MATH~\citep{hendrycks2021measuringmathematicalproblemsolving}.} 
A compilation of challenging mathematics problems derived from high-school competitions. The dataset utilizes LaTeX and Asymptote formatting to ensure precision, serving as a rigorous test of the model's advanced mathematical problem-solving skills.

\item \textbf{GSM8k~\citep{cobbe2021training}.} 
A canonical benchmark consisting of high-quality grade school math problems. We employ a 5-shot setting to evaluate the model's capability to perform multi-step mathematical reasoning and navigate linguistic complexity in problem statements.

\item \textbf{ARC (AI2 Reasoning Challenge)~\citep{clark2018think}.} 
Focuses on grade school level science questions that require factual knowledge retrieval and logical reasoning. We utilize the $25$-shot evaluation protocol to assess the model's ability to generalize scientific concepts to novel questions.
\end{itemize}

\subsection{Implementation Details of Reward Calibration}
\label{app:implementation_details_rc}

In this section, we detail the algorithm used to estimate the target ratio $r_t^\star$ and the realized ratio $r_t$.

\paragraph{Gradient Approximation via the Output Layer.}

Calculating the exact score norms $\|\vs_{w|l,t}\| = \|\nabla_{\vtheta} z_{w|l}(\vtheta_t)\|$ involves a full backward pass through the model. To maintain training throughput, we approximate these norms using a subset of parameters $\vtheta_{\text{head}}$ closest to the output.
Specifically, our implementation selects parameters based on the training mode:
\begin{itemize}
\item \textbf{Full Fine-Tuning}: We use the weights of the language model head (the final linear projection to the vocabulary).
\item \textbf{Parameter-Efficient Fine-Tuning (e.g., LoRA~\citep{hu2022lora} or QLoRA~\citep{dettmers2023qlora})}: Since the head is typically frozen, we use the trainable parameters of the final LoRA adapter layers: $\|\vs_{w|l,t}\| \approx \|\nabla_{\vtheta_{\text{head}}} z_{w|l}(\vtheta_t)\|$.
\end{itemize}

To validate this approximation, we compare the DB computed using full-parameter gradients versus head-only gradients on Mistral-7B.
As shown in \cref{fig:head_vs_full_db}, the resulting bands overlap significantly for both DPO and DIL-BCE. This suggests that the main geometric information required for calibration ($\rho_t$ and norm ratios $\|\vs_{l,t}\| / \|\vs_{w,t}\|$) is captured sufficiently well for calibration.
\begin{figure}[ht]
    \centering
    \begin{subfigure}[b]{0.24\linewidth}
        \centering
    \includegraphics[width=\linewidth]{figures/Band_dw_over_dl_dpo_mistral-7b.pdf} 
        \caption{DPO (head-only)}
        \label{fig:dpo_head}
    \end{subfigure}
    \hfill
    \begin{subfigure}[b]{0.24\linewidth}
        \centering
        \includegraphics[width=\linewidth]{figures/Band_dw_over_dl_dpo-full_mistral-7b.pdf}
        \caption{DPO (full parameters)}
        \label{fig:dpo_full}
    \end{subfigure}
    \hfill
    \begin{subfigure}[b]{0.24\linewidth}
        \centering
        \includegraphics[width=\linewidth]{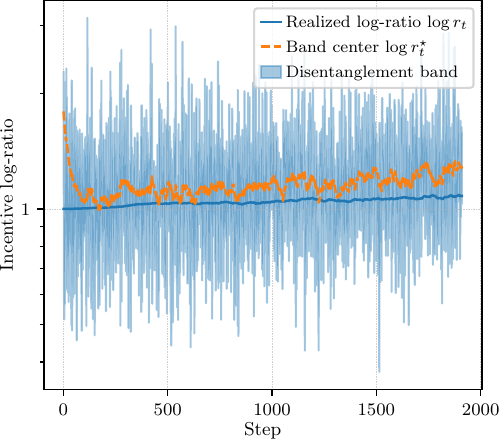}
        \caption{DIL-BCE (head-only)}
        \label{fig:bce_head}
    \end{subfigure}
    \hfill
    \begin{subfigure}[b]{0.24\linewidth}
        \centering
        \includegraphics[width=\linewidth]{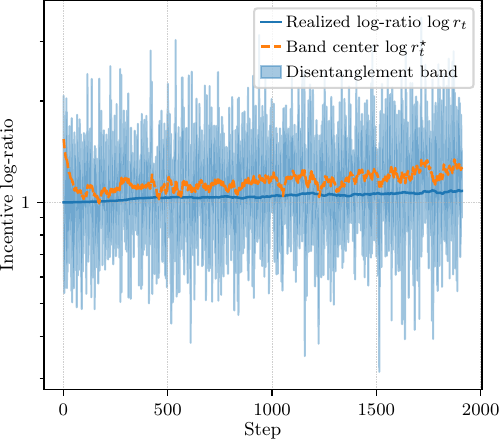}
        \caption{DIL-BCE (full parameters)}
        \label{fig:bce_full}
    \end{subfigure}
    
    \caption{
        Validation of the head-only gradient approximation on Mistral-7B. 
        We compare the DB computed using gradients from only the output layer (head-only) versus all trainable parameters (full parameters).
        For both entangled (DPO) and disentangled (DIL-BCE) objectives, the head-only approximation closely matches the width and trend of the DB, justifying its use for efficient calibration. 
    }
    \label{fig:head_vs_full_db}
\end{figure}

\paragraph{Log-Domain EMA.}
The incentive coefficients and gradient norms fluctuate due to mini-batch noise and optimization dynamics. To estimate a robust center, we maintain exponential moving averages (EMAs) of their logarithms. 
Specifically, for the incentive coefficients $\log d_{w,t}$ and $\log d_{l,t}$, we update:
\begin{equation}
\begin{aligned}
\text{EMA}[\log d_{w,t}] &= \beta \cdot \text{EMA}[\log d_{w,t-1}] + (1-\beta) \cdot \log(d_{w,t} + \epsilon), \\
\text{EMA}[\log d_{l,t}] &= \beta \cdot \text{EMA}[\log d_{l,t-1}] + (1-\beta) \cdot \log(d_{l,t} + \epsilon),
\end{aligned}
\end{equation}
where $\beta$ is the momentum parameter. The same update rule is applied to $\|\mathbf{s}_{w,t}\|$ and $\|\mathbf{s}_{l,t}\|$ to obtain $\text{EMA}[\log \|\vs_{w,t}\|]$ and $\text{EMA}[\log \|\mathbf{s}_{l,t}\|]$.

The smoothed log-ratios are then computed as:
\begin{equation}
\begin{aligned}
\text{EMA}[\log r_t] &= \text{EMA}[\log d_{w,t}] - \text{EMA}[\log d_{l,t}], \\
\text{EMA}[\log r_t^\star] &= \text{EMA}[\log \|\mathbf{s}_{l,t}\|] - \text{EMA}[\log \|\mathbf{s}_{w,t}\|].
\end{aligned}
\end{equation}

This approach smooths via the geometric mean, which is more robust to scale variations than the arithmetic mean.

\paragraph{Dynamic Clipping.}
To ensure the calibration factor 
\begin{equation}
    \left(\frac{\exp(\text{EMA}[\log r_t^\star])} { \exp(\text{EMA}[\log r_t]}\right)^{1/2} = \exp\left( \frac{1}{2} (\text{EMA}[\log r_t^\star] - \text{EMA}[\log r_t]) \right),
\end{equation}
respects the feasible region derived in \cref{eq:disentangle_band_b1_new}, we clip its log-domain value based on the instantaneous cosine similarity $\rho_t$.

Specifically, we constrain the log-calibration term $\text{EMA}[\log r_t^\star] - \text{EMA}[\log r_t]$ to the interval:
\begin{equation}
\left[ \text{EMA}[\log r_t^\star] - \log \frac{d_{w,t}}{d_{l,t}} + \log \rho_t, 
\text{EMA}[\log r_t^\star] - \log \frac{d_{w,t}}{d_{l,t}} - \log \rho_t \right].
\end{equation}

This guarantees that the calibrated ratio always lies within the DB bounds of the current batch.

\paragraph{Multi-Epoch Training.}
For longer training, a multi-epoch ablation on Mistral-7B with DPO suggests that RC does not degrade performance: at $2$ epochs, results are similar ($36.84$ $\to$  $36.93$), while at 1 epoch RC shows a larger effect ($24.26$ $\to$ $36.44$). This indicates RC may be more helpful when training is unstable and appears non-disruptive under longer runs.

\begin{table}[ht]
    \centering
    \small
    \setlength{\tabcolsep}{3pt}
    \renewcommand{\arraystretch}{1.2}
    \caption{Ablation study on training epochs for RC under DPO on Mistral-7B. Subscripts denote standard errors.}
    \label{tab:epoch_comparison_bce_dpo}
    \begin{tabular}{lcccccccc}
        \toprule
        \textbf{Method} & \textbf{Epoch} & \textbf{ARC} & \textbf{BBH} & \textbf{MuSR} & \textbf{MATH} & \textbf{GSM8K} & \textbf{MMLU-Pro} & \textbf{Average} \\
        \midrule
        \multicolumn{2}{l}{Baseline} & $60.32_{1.43}$ & $44.68_{.61}$ & $41.93_{1.74}$ & $2.92_{.47}$ & $36.69_{1.33}$ & $30.23_{.42}$ & $36.13$ \\
        \midrule
        \multirow{2}{*}{w/o RC} & 1 & $22.70_{1.22}$ & $29.12_{2.65}$ & $41.53_{1.74}$ & $2.93_{.46}$ & $37.60_{1.33}$ & $11.66_{.29}$ & 24.26 \\
                                & 2 & $62.12_{1.42}$ & $44.67_{.61}$ & $41.53_{1.74}$ & $2.57_{.44}$ & $39.73_{1.35}$ & $30.39_{.42}$ & $36.84$ \\
        \midrule
        \multirow{2}{*}{w/ RC}  & 1 & $60.92_{1.43}$ & $44.83_{3.01}$ & $41.67_{1.74}$ & $2.70_{.45}$ & $38.21_{1.34}$ & $30.31_{.42}$ & $36.44$ \\
                                & 2 & $62.20_{1.42}$ & $44.94_{.61}$ & $41.53_{1.75}$ & $2.64_{.44}$ & $39.88_{1.35}$ & $30.39_{.42}$ & $36.93$ \\
        \bottomrule
    \end{tabular}
\end{table}

\paragraph{Runtime Overhead of Reward Calibration.}
Table~\ref{app:time_overhead_details_full} summarizes the estimated end-to-end training duration w/o RC and w/ RC.
Overhead is small in practice: for LoRA fine-tuning on Mistral-7B and Pythia-2.8B, differences are minor and occasionally negative, consistent with normal system/runtime variance.
A more visible relative overhead appears for full-parameter fine-tuning on Pythia-410M, where the backward pass is lightweight and frequent gradient-norm computations are less amortized.
Overall, RC remains a plug-and-play dynamics intervention with minimal additional wall-clock cost in the main experimental regimes.

\begin{table}[ht]
    \centering
    \caption{Estimated Training Duration (w/o RC vs. w/ RC). Projected total wall-clock time from stable tqdm ETA for each model and objective; the table reports times w/o RC and w/ RC, together with absolute and relative differences.}
    \label{app:time_overhead_details_full}
    \small
    \setlength{\tabcolsep}{3.0pt}
    \begin{tabular}{l l c c c c c c c c c}
        \toprule
        \textbf{Model} & \textbf{Metric} & \textbf{BCE} & \textbf{CPO} & \textbf{DDRO} & \textbf{DPO} & \textbf{IPO} & \textbf{LSIF} & \textbf{SimPO} & \textbf{UKL} & \textbf{Average} \\
        \midrule

        \multirow{4}{*}{\textbf{Mistral-7B}}
        & w/o RC Time     & 10h 46m & 11h 00m & 11h 52m & 10h 51m & 11h 01m & 10h 46m & 10h 44m & 11h 00m & 11h 00m \\
        & w/ RC Time      & 11h 14m & 11h 19m & 11h 30m & 11h 15m & 11h 18m & 11h 14m & 11h 45m & 11h 19m & 11h 22m \\
        & $\Delta$ Time   & +28m    & +19m    & $-$22m  & +25m    & +17m    & +29m    & +1h 01m & +19m    & +22m    \\
        & $\Delta$ (\%)   & +4.3\%  & +2.9\%  & $-$3.0\%& +3.8\%  & +2.6\%  & +4.4\%  & +9.4\%  & +2.9\%  & +3.3\%  \\
        \midrule

        \multirow{4}{*}{\textbf{Pythia-2.8B}}
        & w/o RC Time     & 4h 34m  & 4h 30m  & 4h 32m  & 4h 19m  & 4h 30m  & 4h 19m  & 4h 32m  & 4h 32m  & 4h 29m  \\
        & w/ RC Time      & 4h 24m  & 4h 30m  & 4h 21m  & 4h 25m  & 4h 30m  & 4h 21m  & 4h 21m  & 4h 19m  & 4h 24m  \\
        & $\Delta$ Time   & $-$10m  & $-$0m   & $-$12m  & +06m    & $-$0m   & +02m    & $-$11m  & $-$13m  & $-$05m  \\
        & $\Delta$ (\%)   & $-$3.6\%& $-$0.1\%& $-$4.3\%& +2.3\%  & $-$0.1\%& +0.6\%  & $-$4.1\%& $-$4.7\%& $-$1.8\%\\
        \midrule

        \multirow{4}{*}{\textbf{Pythia-410M}}
        & w/o RC Time     & 48m     & 53m     & 43m     & 46m     & 52m     & 53m     & 50m     & 52m     & 50m     \\
        & w/ RC Time      & 1h 00m  & 54m     & 59m     & 1h 02m  & 58m     & 1h 00m  & 58m     & 1h 00m  & 59m     \\
        & $\Delta$ Time   & +12m    & +0m     & +16m    & +16m    & +06m    & +07m    & +08m    & +08m    & +09m    \\
        & $\Delta$ (\%)   & +24.9\% & +0.6\%  & +36.5\% & +34.8\% & +11.4\% & +13.5\% & +16.1\% & +15.0\% & +18.4\% \\
        \bottomrule
    \end{tabular}
\end{table}

\subsection{Additional Results of  Downstream Performance Comparison On Benchmarks}
\label{app:additional_benchmark}

\cref{tab:evaluation_results_final,tab:evaluation_results_pythia_final} report downstream benchmark scores w/ and w/o RC.
\begin{table*}[ht]
    \centering
    \small 
    \setlength{\tabcolsep}{8pt}
    \caption{Evaluation results on benchmarks across objectives using Pythia-2.8B. Subscripts denote standard errors.}
    \label{tab:evaluation_results_pythia_final}
    
    \begin{tabular}{ll cccc l}
    \toprule
    \textbf{Method} & \textbf{RC} & \textbf{MMLU-Pro} & \textbf{BBH} & \textbf{MUSR} & \textbf{ARC} & \textbf{Average ($\Delta$)} \\
    \midrule
    DPO    & w/o & $10.82_{.28}$ & $31.83_{.58}$ & $30.95_{1.61}$ & $34.04_{1.38}$ & $26.91$ \\
           & w/  & $11.27_{.29}$ & $32.31_{.58}$ & $34.39_{1.69}$ & $37.12_{1.41}$ & $28.77$ \scriptsize{$\mathbf{\textcolor{myblue}{(+1.86)}}$} \\
    SimPO  & w/o & $10.94_{.28}$ & $30.49_{.57}$ & $32.14_{1.65}$ & $32.59_{1.37}$ & $26.54$ \\
           & w/  & $11.54_{.29}$ & $31.34_{.57}$ & $35.45_{1.68}$ & $37.03_{1.41}$ & $28.84$ \scriptsize{$\mathbf{\textcolor{myblue}{(+2.30)}}$} \\
    CPO    & w/o & $11.78_{.29}$ & $31.70_{.58}$ & $35.32_{1.71}$ & $37.88_{1.42}$ & $29.17$ \\
           & w/  & $11.11_{.29}$ & $32.22_{.58}$ & $35.71_{1.71}$ & $39.08_{1.43}$ & $29.53$ \scriptsize{$\mathbf{\textcolor{myblue}{(+0.36)}}$} \\
    BCE    & w/o & $10.99_{.29}$ & $31.36_{.58}$ & $34.26_{1.67}$ & $36.60_{1.41}$ & $28.30$ \\
           & w/  & $11.54_{.29}$ & $32.12_{.58}$ & $33.86_{1.67}$ & $37.37_{1.41}$ & $28.72$ \scriptsize{$\mathbf{\textcolor{myblue}{(+0.42)}}$} \\
    DDRO   & w/o & $12.08_{.30}$ & $31.62_{.58}$ & $36.51_{1.71}$ & $38.31_{1.42}$ & $29.63$ \\
           & w/  & $11.34_{.29}$ & $32.40_{.59}$ & $32.67_{1.67}$ & $36.69_{1.41}$ & $28.28$ \scriptsize{$\mathbf{\textcolor{mygreen}{(-1.35)}}$} \\
    LSIF   & w/o & $11.29_{.29}$ & $31.62_{.58}$ & $31.22_{1.64}$ & $35.32_{1.40}$ & $27.36$ \\
           & w/  & $11.30_{.29}$ & $32.96_{.59}$ & $33.33_{1.67}$ & $37.54_{1.42}$ & $28.78$ \scriptsize{$\mathbf{\textcolor{myblue}{(+1.42)}}$} \\
    TI-DPO & w/o & $10.89_{.28}$ & $32.12_{2.85}$ & $30.95_{1.63}$ & $34.56_{1.39}$ & $27.13$ \\
           & w/  & $11.30_{.29}$ & $32.19_{2.83}$ & $32.80_{1.65}$ & $37.46_{1.41}$ & $28.44$ \scriptsize{$\mathbf{\textcolor{myblue}{(+1.31)}}$} \\
    IPO    & w/o & $11.03_{.29}$ & $30.64_{.56}$ & $31.08_{2.78}$ & $34.04_{1.38}$ & $26.70$ \\
           & w/  & $11.20_{.29}$ & $31.53_{.58}$ & $35.05_{2.91}$ & $38.14_{1.42}$ & $28.98$ \scriptsize{$\mathbf{\textcolor{myblue}{(+2.28)}}$} \\
    UKL    & w/o & $10.84_{.28}$ & $30.27_{.58}$ & $35.05_{2.93}$ & $37.03_{1.41}$ & $28.30$ \\
           & w/  & $11.32_{.29}$ & $30.91_{.57}$ & $34.66_{2.90}$ & $37.29_{1.41}$ & $28.55$ \scriptsize{$\mathbf{\textcolor{myblue}{(+0.25)}}$} \\
    \bottomrule
    \end{tabular}
\end{table*}

\definecolor{codecomment}{rgb}{0.1,0.5,0.2}  
\definecolor{codekeyword}{rgb}{0.1,0.1,0.7} 
\definecolor{codestring}{rgb}{0.8,0.4,0.0} 
\definecolor{codefunction}{rgb}{0.8,0.1,0.4} 
\definecolor{codenumber}{rgb}{0.5,0.5,0.5}   
\definecolor{codepurple}{rgb}{0.5,0,0.5}    

\lstdefinestyle{pystyle}{
	language=Python,
	basicstyle=\ttfamily\small,
	keywordstyle=\color{codekeyword}\bfseries,
	commentstyle=\itshape\color{codecomment},
	stringstyle=\color{codestring},
	showstringspaces=false,
	columns=fullflexible,
	breaklines=true,
	escapechar=|, 
	keepspaces=true, 
	tabsize=4,   
	xleftmargin=2em,
	framexleftmargin=1.5em,
	numbers=left,
	numberstyle=\tiny\color{codenumber},
	numbersep=10pt,
	literate=
	{__init__}{{\textcolor{codefunction}{\_\_init\_\_}}}{8}
	{forward}{{\textcolor{codefunction}{forward}}}{7}
	{softmax}{{\textcolor{blue}{softmax}}}{7}
	{softplus}{{\textcolor{blue}{softplus}}}{8}
	{log1p}{{\textcolor{blue}{log1p}}}{5}
	{exp}{{\textcolor{blue}{exp}}}{3}
	{clamp}{{\textcolor{blue}{clamp}}}{5}
	{_get_kmm_params}{{\textcolor{codepurple}{\_get\_kmm\_params}}}{16}
	{_stable_pow}{{\textcolor{codepurple}{\_stable\_pow}}}{11}
	{_kmm_cdf}{{\textcolor{codepurple}{\_kmm\_cdf}}}{8}
	{_kmm_pdf}{{\textcolor{codepurple}{\_kmm\_pdf}}}{8}
	{torch.}{{\textcolor{orange}{torch.}}}{6}
	{nn.}{{\textcolor{orange}{nn.}}}{3}
	{F.}{{\textcolor{orange}{F.}}}{2}
}

\lstset{style=pystyle}


\begin{figure}[ht!]
	\begin{algorithm}[H]
		\caption{PyTorch Implementation of Reward Calibration.}
		\label{alg:ratio_calibration_code}
		\begin{lstlisting}
def ratio_calibrated_loss(model, batch, base_loss_fn):
    # 1. Forward pass to get rewards
    # zw, zl: shape (B,), requires_grad=True
    zw, zl = model(batch['chosen']), model(batch['rejected'])
    
    # 2. Compute score norms and incentive coefficients
    # Using head-only gradients for efficiency, as discussed in |\cref{app:implementation_details_rc}|
    head_params = model.lm_head.parameters()
    sw_norm = torch.autograd.grad(zw.mean(), head_params, retain_graph=True)[0].norm()
    sl_norm = torch.autograd.grad(zl.mean(), head_params, retain_graph=True)[0].norm()

    # Evaluate incentive coefficients in |\cref{tab:coefficients}|
    dw, dl = get_incentive_coefficients(model, batch, base_loss_fn, zw, zl)
    
    # 3. Compute Calibration Factor 
    # Note: In practice, we use log-space EMA for stability, as discussed in |\cref{app:implementation_details_rc}|
    # Evaluate target ratio r* = ||sl|| / ||sw|| discussed in |\cref{prop:max_safety_margin}|
    r_star = sl_norm / sw_norm.clamp(min=1e-8)
    
    # Current ratio r is implicitly handled by the backward scaling
    r = dw / dl.clamp(min=1e-8)

    # 4. Construct Calibrated Rewards via  |\cref{eq:rc_def}|
    # Forward: unchanged; Backward: gradients scaled by alpha
    alpha = torch.sqrt(r_star / r) 
    zw_rc = alpha * zw + (1 - alpha) * zw.detach()
    zl_rc = (1 / alpha) * zl + (1 - 1 / alpha) * zl.detach()
    
    # 5. Compute Objective
    return base_loss_fn(zw_rc, zl_rc)
		\end{lstlisting}
	\end{algorithm}
\end{figure}

\clearpage
\subsection{Additional Results on Preference Optimization Dynamics w/ or w/o RC}
\label{app:additional_dynamics}

In this section, we present an exhaustive set of learning dynamics analyses for all evaluated objectives across four model scales: Pythia-410M, Pythia-2.8B, Mistral-7B, and Qwen2.5-7B. 
These visualizations provide evidence for our DB-based diagnosis: when training-time updates keep the realized log-ratio $\log r_t$ largely inside the DB and away from its boundaries, the trajectories tend to concentrate on the Pathway (iii): chosen not down, rejected suppressed. 
RC offers a simple mechanism that often moves the dynamics toward this regime by adaptively rebalancing the chosen vs. rejected updates.

\paragraph{Detailed Analysis of the Preference Optimization Dynamics on Pythia-410M.}
\label{app:dyn_pythia410m}

\cref{fig:comprehensive-entangled-pythia-410m,fig:comprehensive-disentangled-pythia-410m} visualizes, for each method, three coupled signals over steps: (1) whether $\log r_t$ lies inside DB, (2) likelihood trajectory, and (3) margin evolution.
Across methods, the observed likelihood pathways are largely consistent with the ratio axis regimes summarized in \cref{tab:ratio_axis_three_regimes}.
\begin{figure}[htbp]
    \centering
    \captionsetup[subfigure]{aboveskip=-0.2pt, belowskip=-0.5pt}
    \begin{subfigure}[b]{\textwidth}
        \centering
        \includegraphics[width=\textwidth]{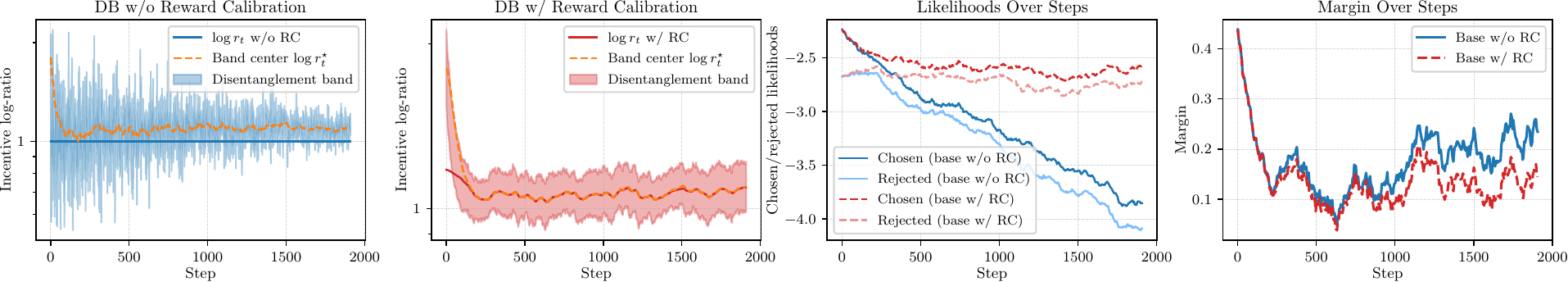}
        \caption{\small DPO dynamics on Pythia-410M.}
        \label{subfig:comprehensive_dpo_pythia-410m}
    \end{subfigure}

    \begin{subfigure}[b]{\textwidth}
        \centering
        \includegraphics[width=\textwidth]{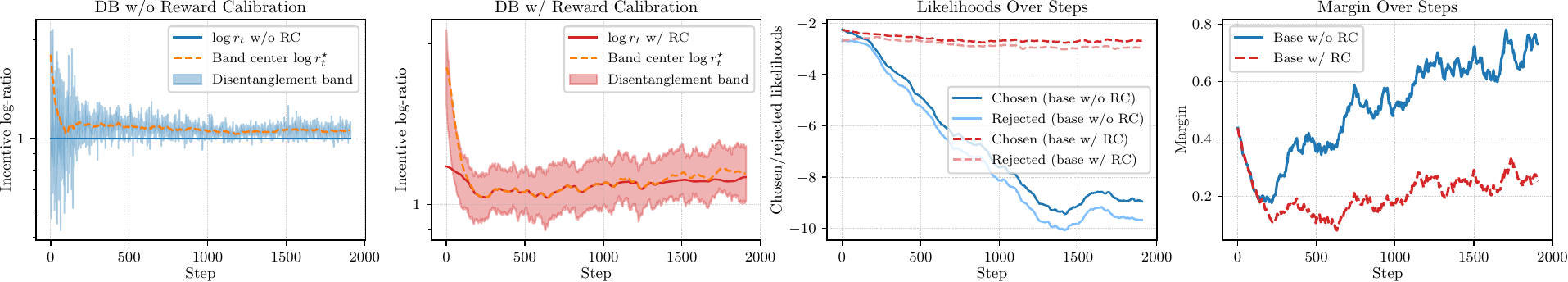}
        \caption{\small IPO dynamics on Pythia-410M.}
        \label{subfig:comprehensive_ipo_pythia-410m}
    \end{subfigure}

    \begin{subfigure}[b]{\textwidth}
        \centering
        \includegraphics[width=\textwidth]{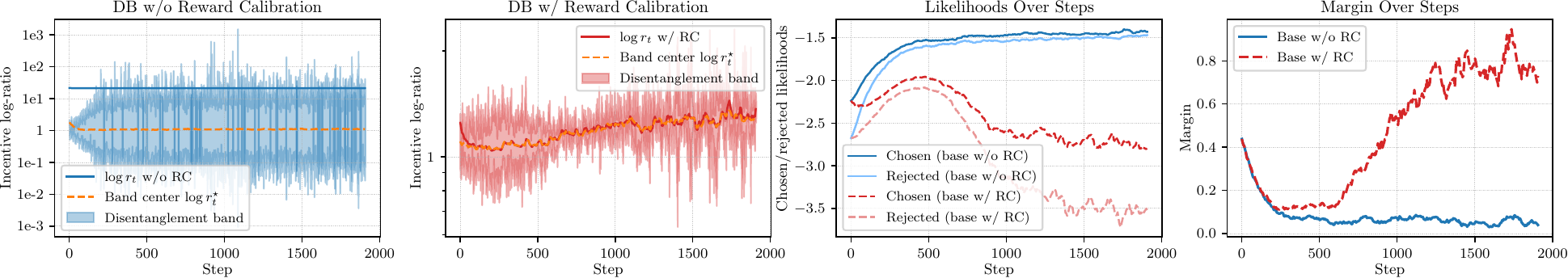}
        \caption{CPO dynamics on Pythia-410M.}
        \label{subfig:comprehensive_cpo_pythia-410m}
    \end{subfigure}

    \begin{subfigure}[b]{\textwidth}
        \centering
        \includegraphics[width=\textwidth]{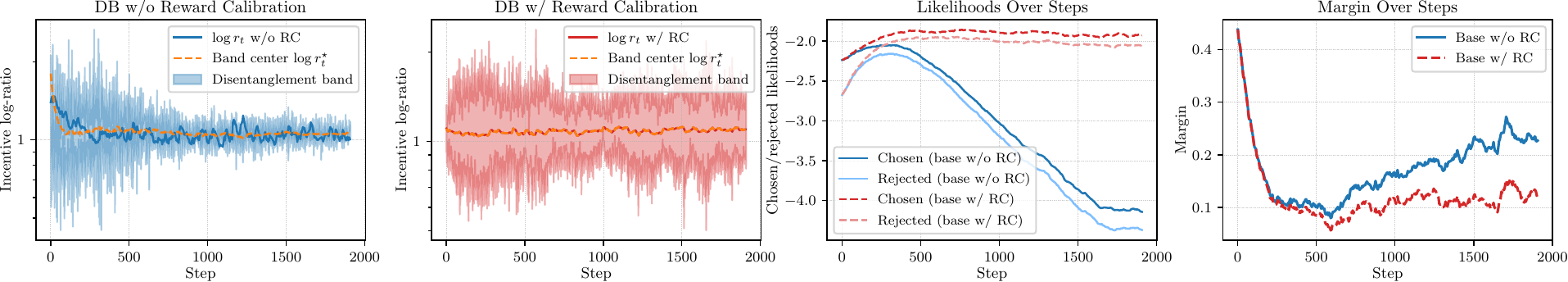}
        \caption{SimPO dynamics on Pythia-410M.}
        \label{subfig:comprehensive_simpo_pythia-410m}
    \end{subfigure}
    \label{fig:comprehensive-entangled-pythia-410m}
    \vspace{-2mm}
    \caption{Preference optimization dynamics of Pythia-410M under entangled margin-based objectives.}
\end{figure}

In the standard baseline runs (``Base w/o RC''), DB violations often coincide with Pathway (i)/(ii).
For DPO and IPO, $\log r_t$ repeatedly dips below the \emph{lower} DB boundary (\cref{subfig:comprehensive_dpo_pythia-410m,subfig:comprehensive_ipo_pythia-410m}).
As predicted, this coincides with Pathway (ii): both $z_{w,t}$ and $z_{l,t}$ decrease, indicating that suppressing the loser comes with an unintended degradation of the winner.
In contrast, CPO often pushes $\log r_t$ toward (or beyond) the upper DB boundary (\cref{subfig:comprehensive_cpo_pythia-410m}), yielding Pathway (i) where both likelihoods rise together, i.e., the update is insufficiently selective to suppress $z_{l,t}$ alone.
SimPO typically stays closer to DB but can still drift toward the boundaries depending on steps, leading to mixed behaviors (\cref{subfig:comprehensive_simpo_pythia-410m}).

Upon applying RC (``Base w/ RC''), ratio centering often promotes the Pathway (iii)-like regime.
After applying RC, $\log r_t$ is typically anchored closer to $\log r_t^\star$ and remains inside DB for a larger fraction of training across objectives.
Correspondingly, the likelihood trajectories more frequently exhibit the Pathway (iii) signature: $z_{w,t}$ is not down (often mildly increasing) while $z_{l,t}$ is suppressed.
Notably, RC is not limited to entangled margin baselines: for density ratio objectives (e.g., DIL-UKL and DDRO), RC can also reduce boundary-hugging and make the desired regime more persistent (\cref{fig:comprehensive-disentangled-pythia-410m}).
Overall, Pythia-410M provides a clean controlled setting where the association between DB feasibility and the induced pathway is visually clearest.
\begin{figure}[htbp]
    \centering

    \begin{subfigure}[b]{\textwidth}
        \centering
        \includegraphics[width=\textwidth]{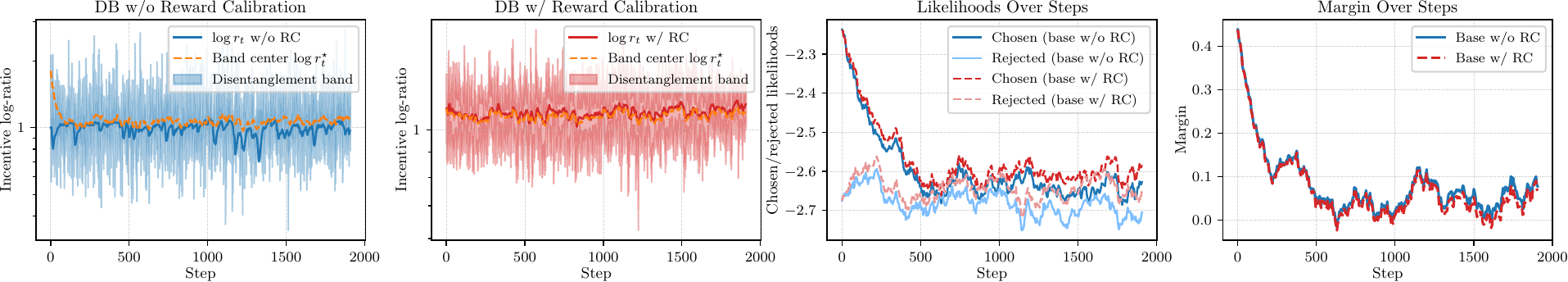}
        \caption{DDRO dynamics on Pythia-410M.}
        \label{subfig:comprehensive_ddro_pythia-410m}
    \end{subfigure}
    
    \begin{subfigure}[b]{\textwidth}
        \centering
        \includegraphics[width=\textwidth]{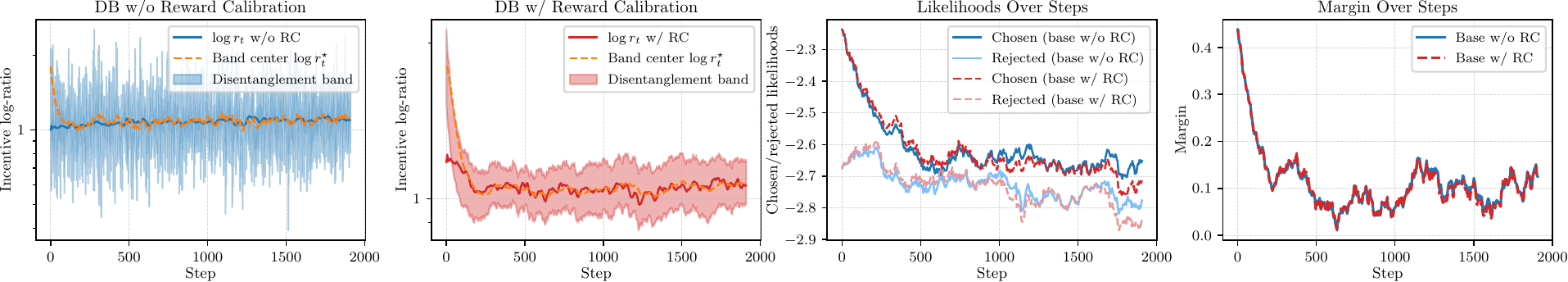}
        \caption{ DIL-BCE dynamics on Pythia-410M.}
        \label{subfig:comprehensive_bce_pythia-410m}
    \end{subfigure}

    \begin{subfigure}[b]{\textwidth}
        \centering
        \includegraphics[width=\textwidth]{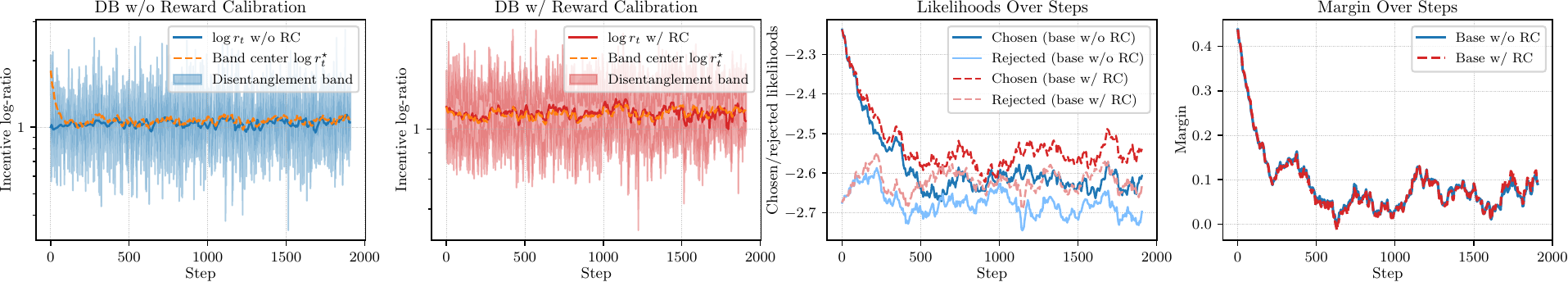}
        \caption{DIL-LSIF dynamics on Pythia-410M.}
        \label{subfig:comprehensive_lsif_pythia-410m}
    \end{subfigure}

    \begin{subfigure}[b]{\textwidth}
        \centering
        \includegraphics[width=\textwidth]{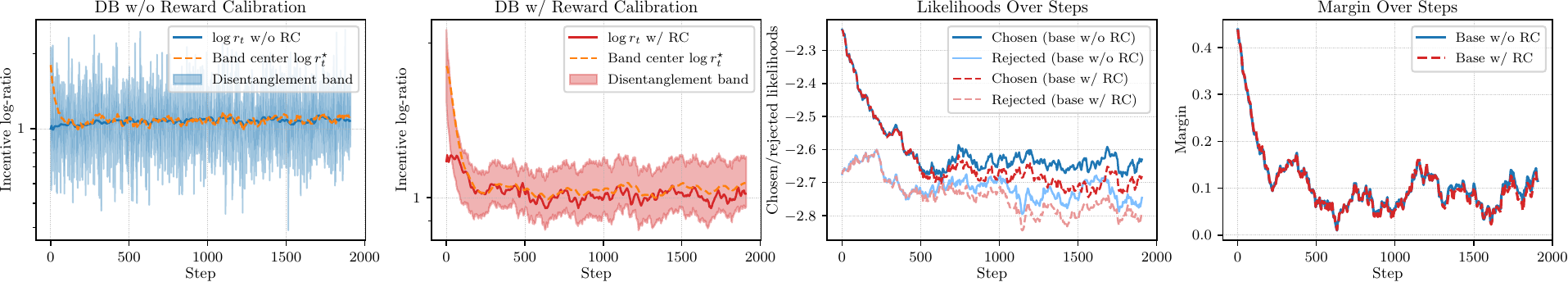}
        \caption{DIL-UKL dynamics on Pythia-410M.}
        \label{subfig:comprehensive_ukl_pythia-410m}
    \end{subfigure}
    \label{fig:comprehensive-disentangled-pythia-410m}
    \caption{Preference optimization dynamics of Pythia-410M under disentangled density ratio-based objectives.}
\end{figure}

\paragraph{Detailed Analysis of the Preference Optimization Dynamics on Pythia-2.8B.}
\label{app:dyn_pythia2b}

Scaling to Pythia-2.8B strengthens (rather than weakens) the separation between feasible and infeasible ratio regimes.
The same DB-based diagnosis continues to explain the observed pathways, but ratio excursions translate into more pronounced likelihood drifts.

For Entangled baselines (w/o RC), we observe that there are amplified Pathway (ii) and Pathway (i) failures.
DPO, IPO, SimPO and TI-DPO exhibit frequent excursions to the lower DB side (\cref{subfig:comprehensive_dpo_pythia-2b,subfig:comprehensive_ipo_pythia-2b,subfig:comprehensive_simpo_pythia-2b,subfig:comprehensive_tidpo_pythia-2b}), producing a clearer Pathway (ii) pattern than at 410M: $z_{w,t}$ degrades alongside $z_{l,t}$ over substantial training intervals.
CPO, on the other hand, can move $\log r_t$ toward the upper DB boundary (\cref{subfig:comprehensive_cpo_pythia-2b}), consistent with Pathway (i) where both likelihoods increase and loser suppression becomes less selective.

\begin{figure}[htbp]
    \centering
    \captionsetup[subfigure]{aboveskip=-0.2pt, belowskip=-0.5pt}
    \begin{subfigure}[b]{\textwidth}
        \centering
        \includegraphics[width=\textwidth]{figures/likelihood/comprehensive_dpo_pythia-2b}
        \caption{ DPO.}
        \label{subfig:comprehensive_dpo_pythia-2b}
    \end{subfigure}

    \begin{subfigure}[b]{\textwidth}
        \centering
        \includegraphics[width=\textwidth]{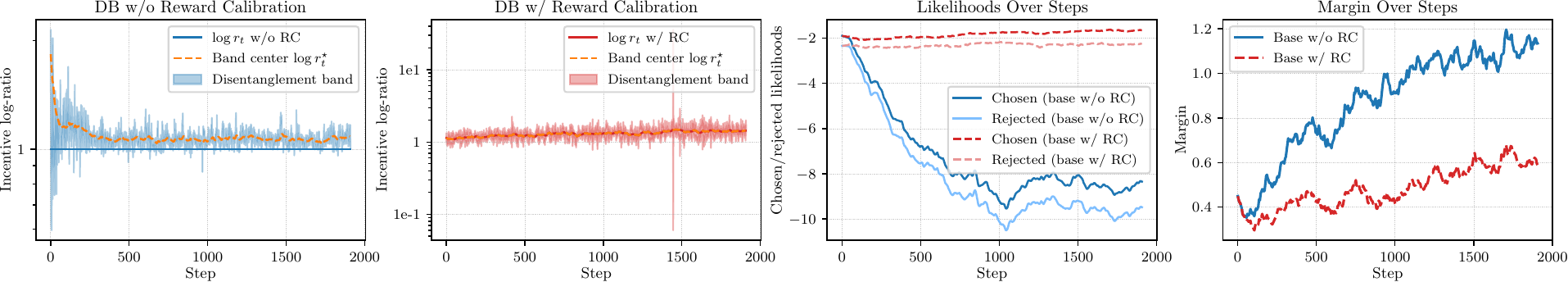}
        \caption{IPO.}
        \label{subfig:comprehensive_ipo_pythia-2b}
    \end{subfigure}

    \begin{subfigure}[b]{\textwidth}
        \centering
        \includegraphics[width=\textwidth]{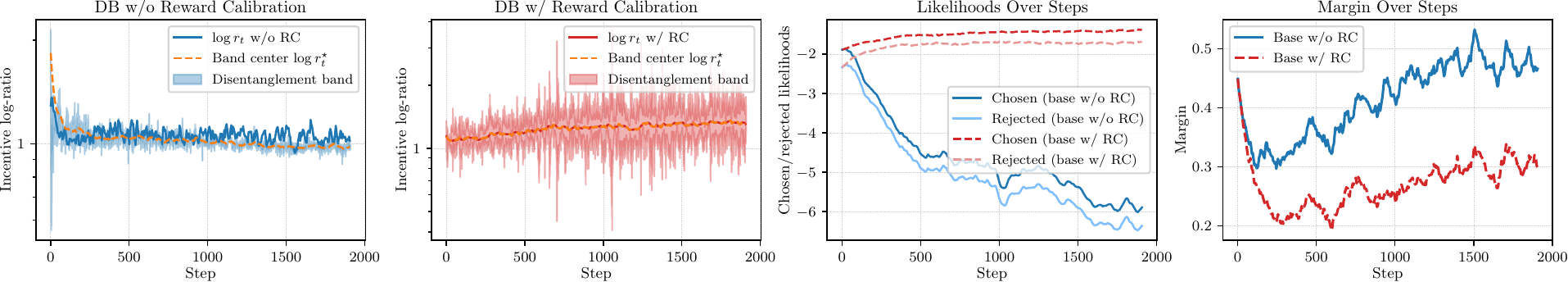}
        \caption{SimPO.}
        \label{subfig:comprehensive_simpo_pythia-2b}
    \end{subfigure}

    \begin{subfigure}[b]{\textwidth}
        \centering
        \includegraphics[width=\textwidth]{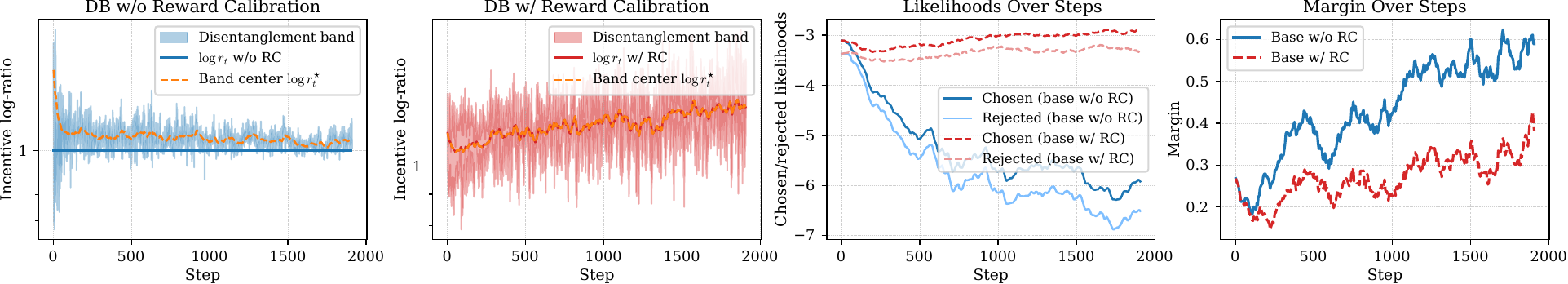}
        \caption{TI-DPO.}
        \label{subfig:comprehensive_tidpo_pythia-2b}
    \end{subfigure}

    \begin{subfigure}[b]{\textwidth}
        \centering
        \includegraphics[width=\textwidth]{figures/likelihood/comprehensive_cpo_pythia-2b}
        \caption{CPO.}
        \label{subfig:comprehensive_cpo_pythia-2b}
    \end{subfigure}
    \label{fig:comprehensive-entangled-pythia-2b}
    \vspace{-2mm}
    \caption{Preference optimization dynamics of Pythia-2.8B under entangled margin-based objectives.}
\end{figure}

More importantly, a ``disentangled form'' alone does not guarantee Pathway (iii).
While density ratio-based objectives often reduce lockstep coupling, they are not automatically DB-feasible under a given model/data/optimizer configuration.
DDRO on Pythia-2.8B provides a concrete example: its baseline $\log r_t$ can lie visibly outside DB (\cref{subfig:comprehensive_ddro_pythia-2b}), and the corresponding likelihood evolution deviates from Pathway (iii).
This supports our central premise: the realized training-time balance captured by DB feasibility is a more general predictor of the pathway than the high-level objective family label.

\begin{figure}[ht]
    \centering
    \begin{subfigure}[b]{\textwidth}
        \centering
        \includegraphics[width=\textwidth]{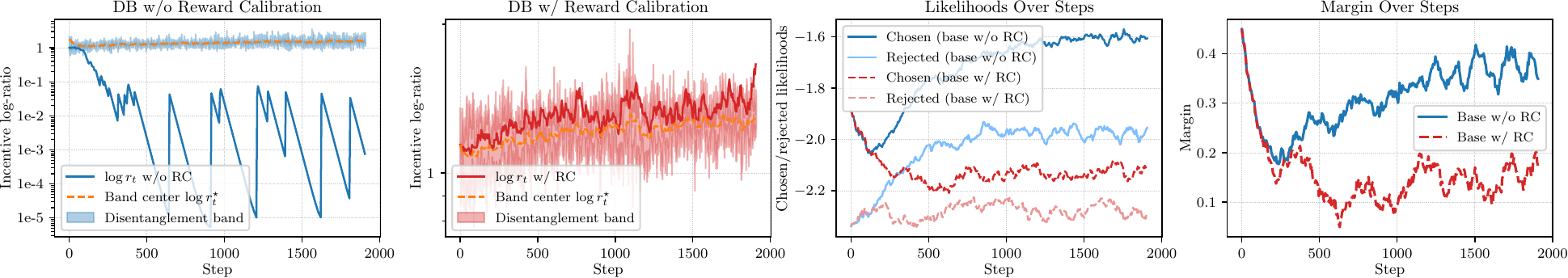}
        \caption{DDRO.}
        \label{subfig:comprehensive_ddro_pythia-2b}
    \end{subfigure}
    
    \begin{subfigure}[b]{\textwidth}
        \centering
        \includegraphics[width=\textwidth]{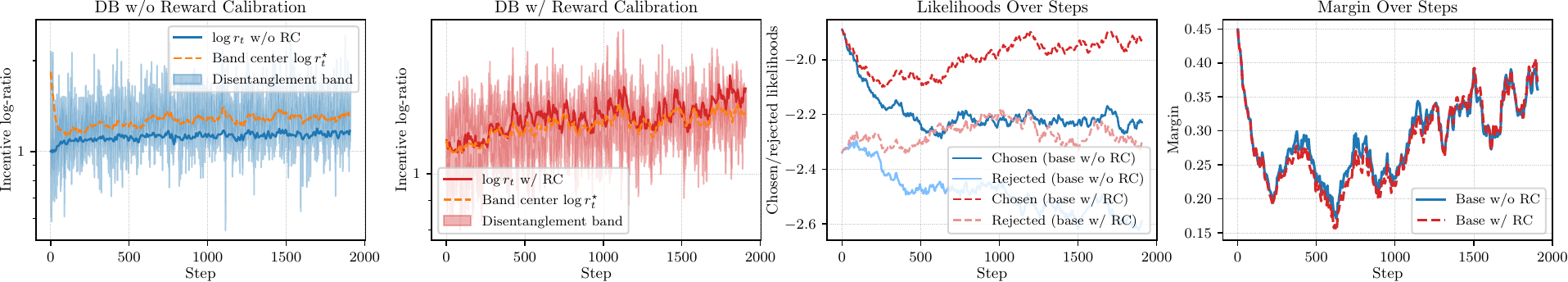}
        \caption{DIL-BCE.}
        \label{subfig:comprehensive_bce_pythia-2b}
    \end{subfigure}

    \begin{subfigure}[b]{\textwidth}
        \centering
        \includegraphics[width=\textwidth]{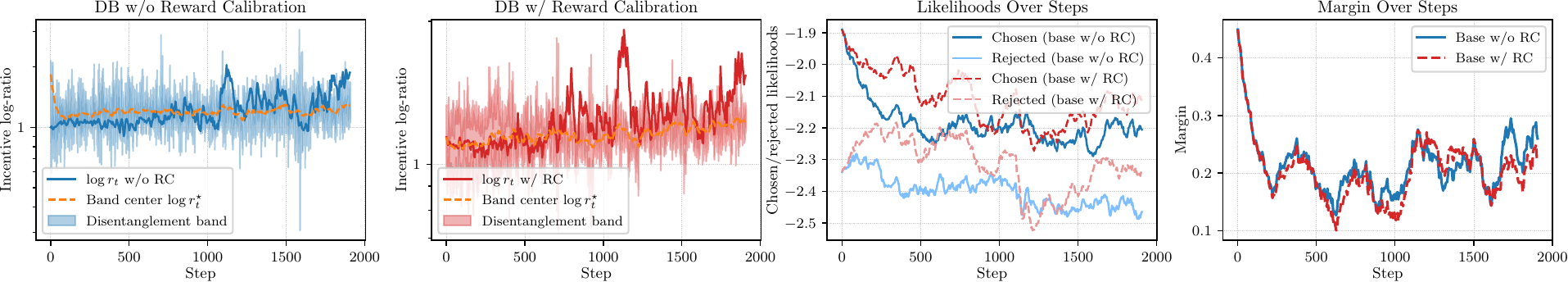}
        \caption{DIL-LSIF.}
        \label{subfig:comprehensive_lsif_pythia-2b}
    \end{subfigure}

    \begin{subfigure}[b]{\textwidth}
        \centering
        \includegraphics[width=\textwidth]{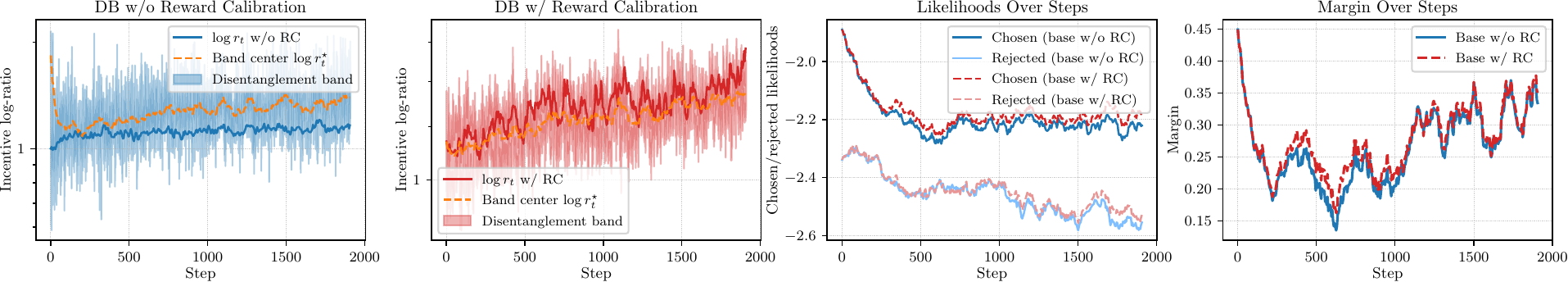}
        \caption{DIL-UKL.}
        \label{subfig:comprehensive_ukl_pythia-2b}
    \end{subfigure}
    \label{fig:comprehensive-disentangled-pythia-2b}
    \caption{Preference optimization dynamics of Pythia-2.8B under disentangled density ratio-based objectives.}
\end{figure}

Upon applying RC, Pathway (iii) is often recovered across objectives in our evaluated settings.
RC centers $\log r_t$ closer to $\log r_t^\star$ and keeps it within DB for a larger fraction of steps across objectives.
The likelihood trajectories then more frequently exhibit the desired profile: the chosen likelihood is preserved (not down overall) while the rejected likelihood is suppressed (see \cref{fig:main_dynamics_pythia2b} and \cref{fig:comprehensive-disentangled-pythia-2b}).
In terms of margins, RC typically maintains steady margin growth, though its magnitude can differ from the uncalibrated baseline, reflecting a reallocation of update emphasis toward DB-feasible suppression.

\paragraph{Detailed Analysis of the Preference Optimization Dynamics on Mistral-7B.}
\label{app:dyn_mistral7b}

Mistral-7B largely follows the same DB logic.
For most objectives, RC improves ratio feasibility and promotes Pathway (iii).
For DPO and the DIL-BCE family, RC visibly pulls $\log r_t$ toward $\log r_t^\star$ and stabilizes the chosen likelihood compared to the baseline (\cref{subfig:comprehensive_dpo_mistral-7b,subfig:comprehensive_bce_mistral-7b}).
Similar improvements appear for SimPO, DDRO, and LSIF, where RC reduces boundary-hugging behavior and yields trajectories closer to Pathway (iii) (\cref{subfig:comprehensive_simpo_mistral-7b,subfig:comprehensive_ddro_mistral-7b,subfig:comprehensive_lsif_mistral-7b}).

For CPO on Mistral-7B (\cref{subfig:comprehensive_cpo_mistral-7b}), RC still centers the realized log-ratio near $\log r_t^\star$ and keeps it largely within DB.
In this setting, the likelihood traces can exhibit short intervals where both likelihoods move downward together, which is consistent with our regime-based interpretation: DB characterizes an instantaneous feasibility condition, and the desired Pathway (iii) behavior is best understood as a training-time regime that the trajectory can enter and revisit (often after an early stage), rather than a guarantee of monotone per-step drift over the entire run.
Empirically, with RC the trajectory spends substantially more of training in the DB-feasible region and shows a more Pathway (iii)-like profile overall, compared to the uncalibrated baseline.

Across Pythia-410M, Pythia-2.8B, and Mistral-7B, we observe a consistent empirical pattern:
when $\log r_t$ stays inside DB and remains away from its boundaries for substantial portions of training, the dynamics tend to exhibit Pathway (iii)-like behavior; when $\log r_t$ persistently violates DB, the trajectories more often resemble Pathway (i) or Pathway (ii).
RC provides a simple and generally effective mechanism to improve this DB feasibility by rebalancing chosen vs.\ rejected updates, making it more likely for training to enter and stay in the desired Pathway (iii) regime.

\paragraph{Detailed Analysis of the Preference Optimization Dynamics on Qwen2.5-7B.}
Qwen2.5-7B shows the same DB-based picture, but in a milder form than Mistral-7B.
In \cref{fig:comprehensive-entangled-qwen-7b}, for the entangled DPO objective, the uncalibrated run shows visible fluctuation around the DB and a gradual downward drift of the chosen likelihood.
After applying RC, the realized log-ratio is pulled closer to the band center, the likelihood trajectories become more stable, and the margin is still preserved.

In \cref{fig:comprehensive-disentangled-qwen-7b}, for the disentangled DIL-BCE objective, the baseline is already relatively stable, so the effect of RC is smaller.
Even so, RC still keeps the realized log-ratio closer to the DB center and slightly reduces likelihood drift, while leaving the margin trajectory largely unchanged.

Overall, the Qwen2.5-7B results suggest that when the base dynamics are already fairly stable, RC is mostly non-disruptive, while still helping training stay closer to a Pathway~(iii)-like regime.

\begin{figure}[htbp]
    \centering
    
    \begin{subfigure}[b]{\textwidth}
        \centering
        \includegraphics[width=\textwidth]{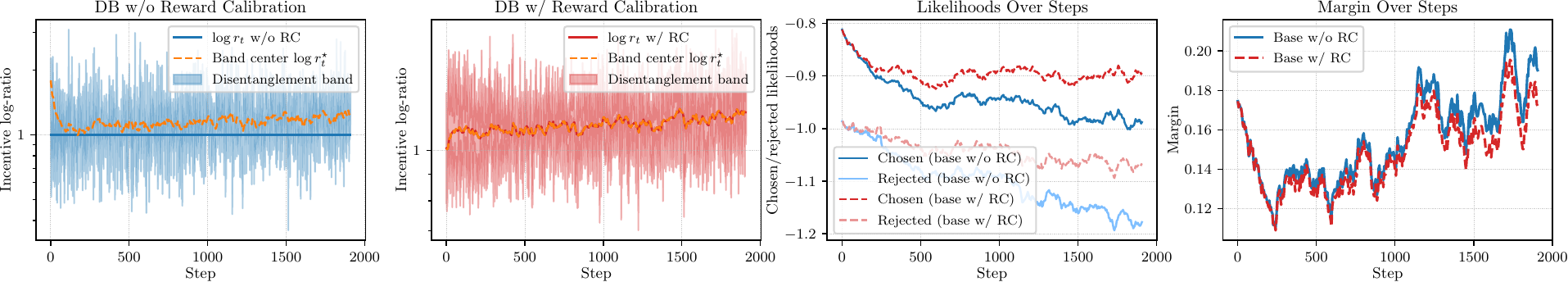}
        \caption{DPO.}
        \label{subfig:comprehensive_dpo_mistral-7b}
    \end{subfigure}

    \begin{subfigure}[b]{\textwidth}
        \centering
        \includegraphics[width=\textwidth]{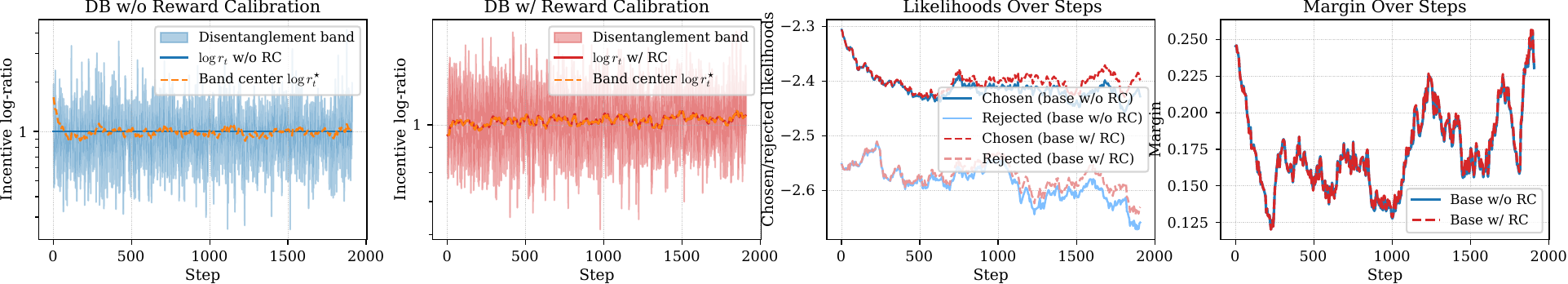}
        \caption{TI-DPO.}
        \label{subfig:comprehensive_tidpo_mistral-7b}
    \end{subfigure}

    \begin{subfigure}[b]{\textwidth}
        \centering
        \includegraphics[width=\textwidth]{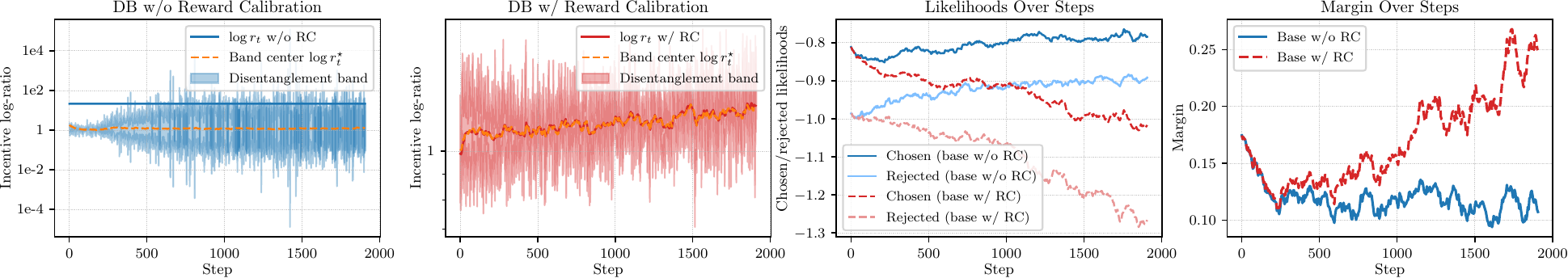}
        \caption{ CPO.}
        \label{subfig:comprehensive_cpo_mistral-7b}
    \end{subfigure}

    \begin{subfigure}[b]{\textwidth}
        \centering
        \includegraphics[width=\textwidth]{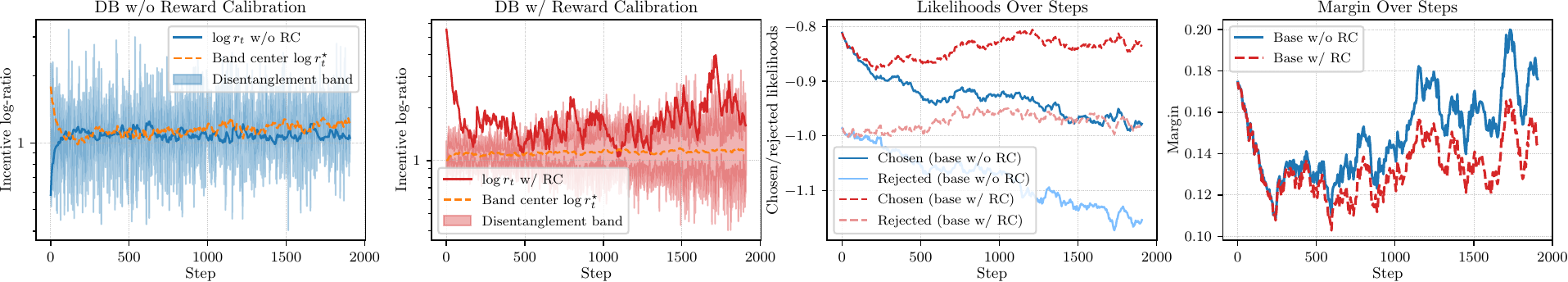}
        \caption{SimPO.}
        \label{subfig:comprehensive_simpo_mistral-7b}
    \end{subfigure}

    \label{fig:comprehensive-entangled-mistral-7b}
    \caption{Preference optimization dynamics of Mistral-7B under entangled objectives. }
\end{figure}

\begin{figure}[htbp]
    \centering

    \begin{subfigure}[b]{\textwidth}
        \centering
        \includegraphics[width=\textwidth]{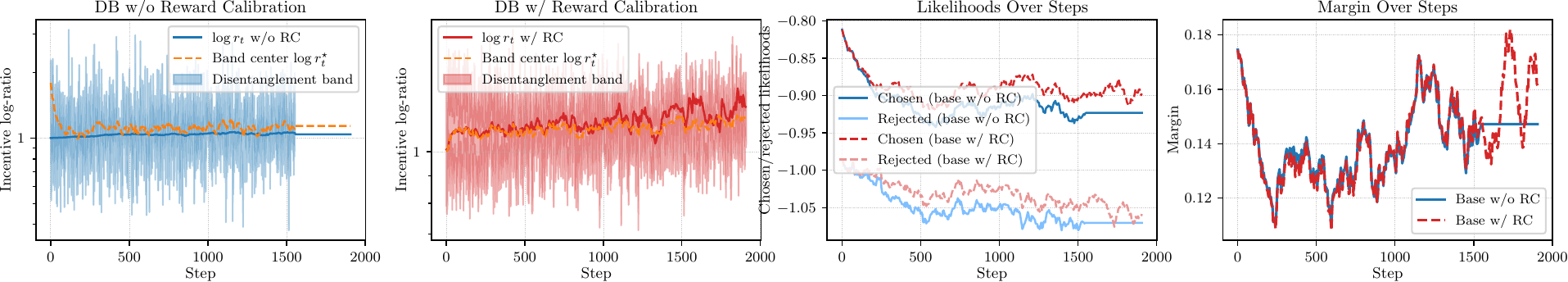}
        \caption{DDRO.}
        \label{subfig:comprehensive_ddro_mistral-7b}
    \end{subfigure}

    \begin{subfigure}[b]{\textwidth}
        \centering
        \includegraphics[width=\textwidth]{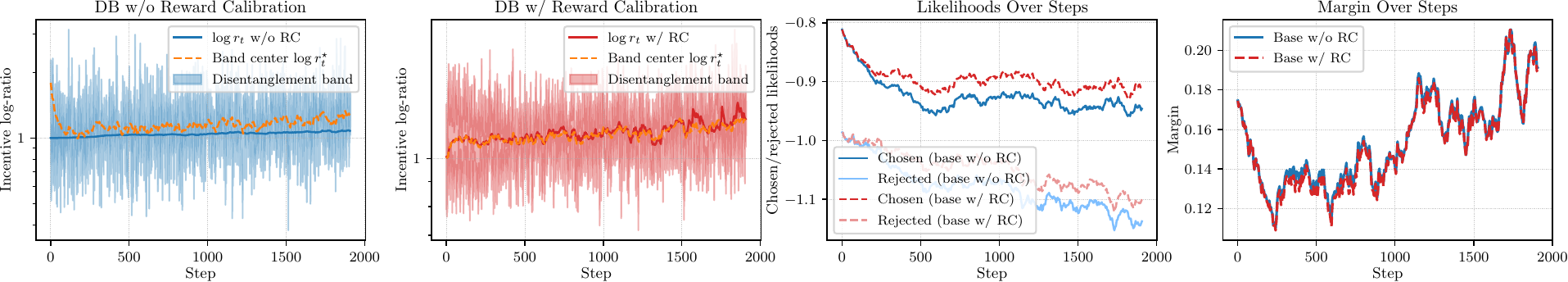}
        \caption{DIL-BCE.}
        \label{subfig:comprehensive_bce_mistral-7b}
    \end{subfigure}
    
    \begin{subfigure}[b]{\textwidth}
        \centering
        \includegraphics[width=\textwidth]{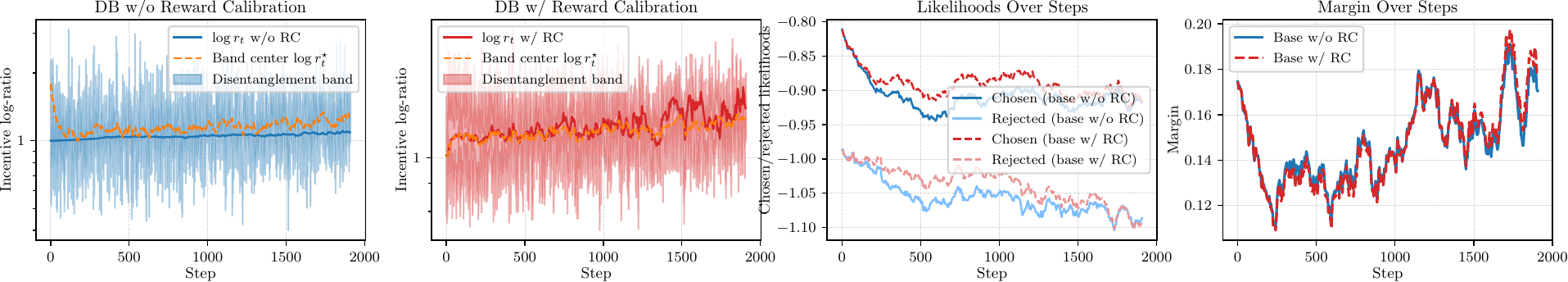}
        \caption{DIL-LSIF.}
        \label{subfig:comprehensive_lsif_mistral-7b}
    \end{subfigure}
    \label{fig:comprehensive-disentangled-mistral-7b}
    \caption{Preference optimization dynamics of Mistral-7B under disentangled objectives. }
\end{figure}

\begin{figure}[ht]
    \centering
    \begin{subfigure}[b]{\linewidth}
        \centering
        \includegraphics[width=\linewidth]{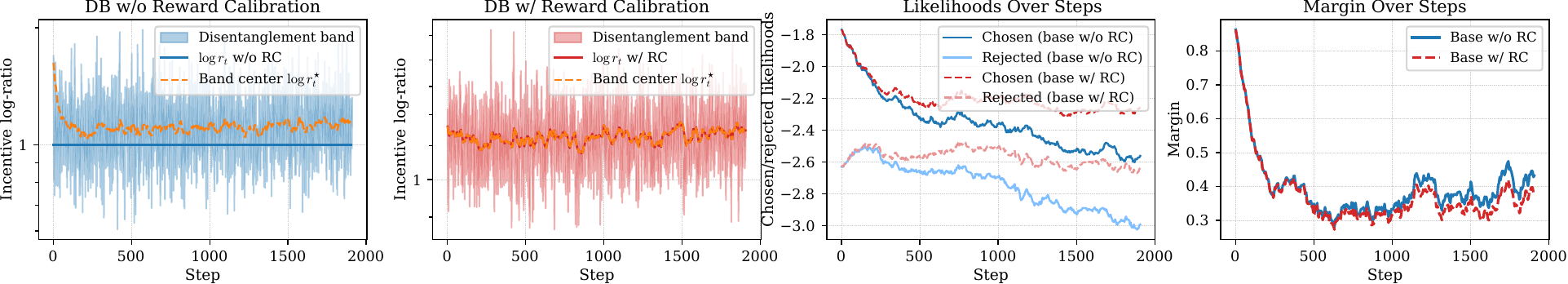} 
        \caption{DPO}
        \label{subfig:comprehensive_dpo_qwen-7b}
    \end{subfigure}

    \begin{subfigure}[b]{\linewidth}
        \centering
        \includegraphics[width=\linewidth]{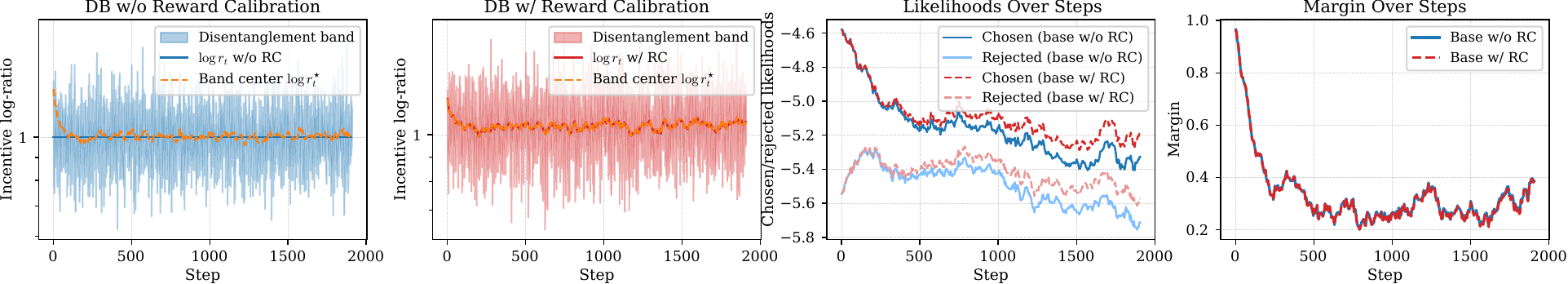} 
        \caption{TI-DPO}
        \label{subfig:comprehensive_tidpo_qwen-7b}
    \end{subfigure}

     \begin{subfigure}[b]{\linewidth}
        \centering
        \includegraphics[width=\linewidth]{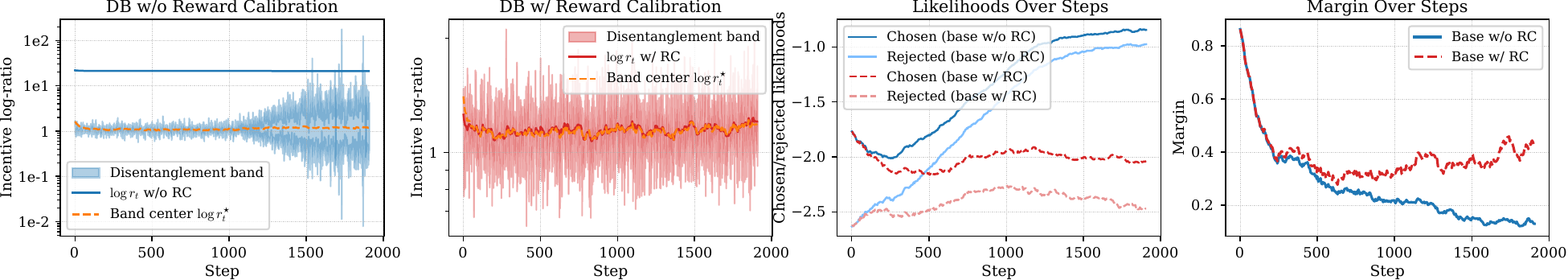} 
        \caption{CPO}
        \label{subfig:comprehensive_cpo_qwen-7b}
    \end{subfigure}

    \begin{subfigure}[b]{\linewidth}
        \centering
        \includegraphics[width=\linewidth]{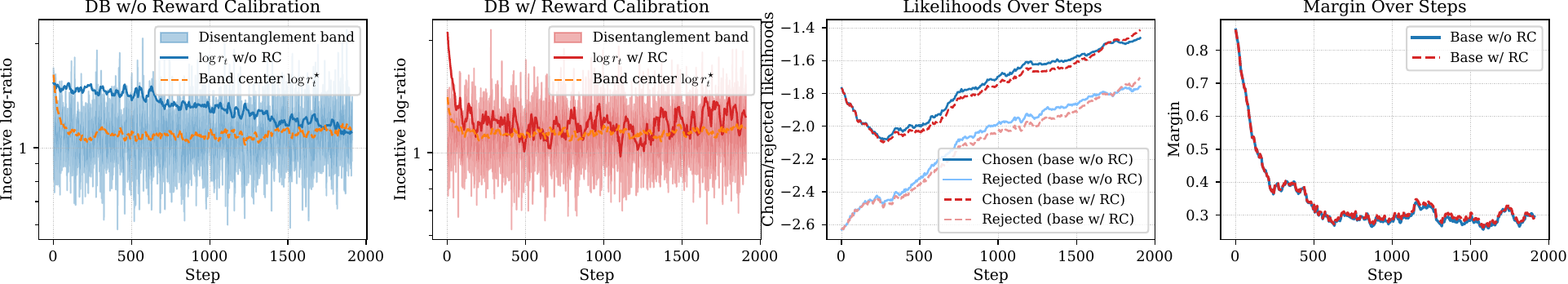} 
        \caption{SimPO}
        \label{subfig:comprehensive_simpo_qwen-7b}
    \end{subfigure}

    \caption{
        Preference optimization dynamics of Qwen-7B under entangled objectives. 
    }
    \label{fig:comprehensive-entangled-qwen-7b}
\end{figure}

\begin{figure}[ht]
    \centering
    \begin{subfigure}[b]{\linewidth}
        \centering
        \includegraphics[width=\linewidth]{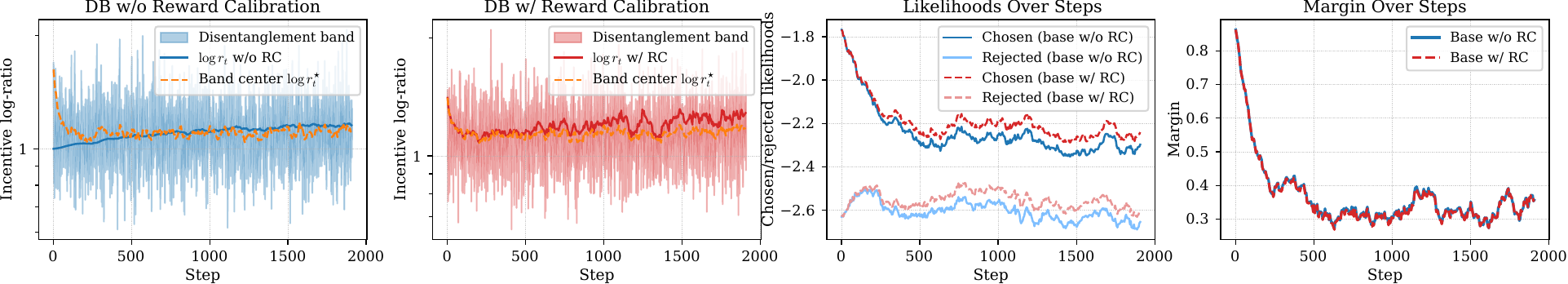} 
        \caption{DDRO}
        \label{subfig:comprehensive_ddro_qwen-7b}
    \end{subfigure}

    \begin{subfigure}[b]{\linewidth}
        \centering
        \includegraphics[width=\linewidth]{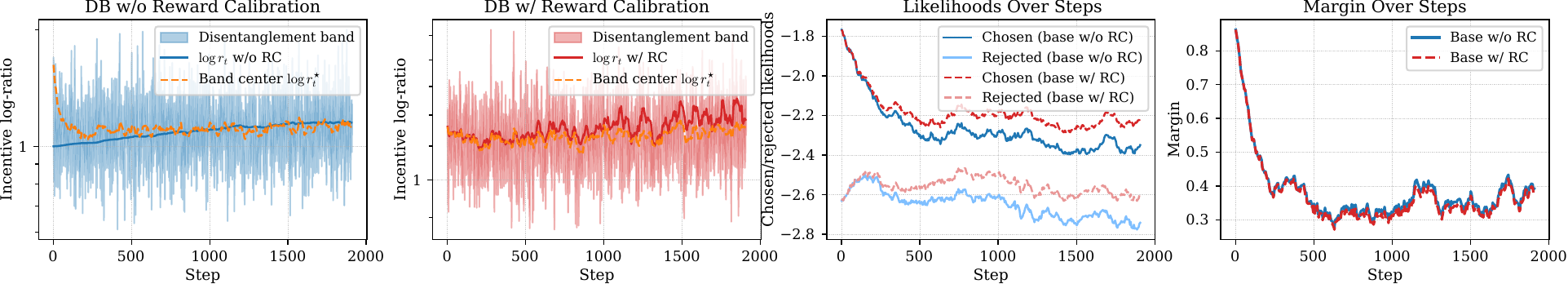}
        \caption{DIL-BCE}
        \label{subfig:comprehensive_bce_qwen-7b}
    \end{subfigure}

    \begin{subfigure}[b]{\linewidth}
        \centering
        \includegraphics[width=\linewidth]{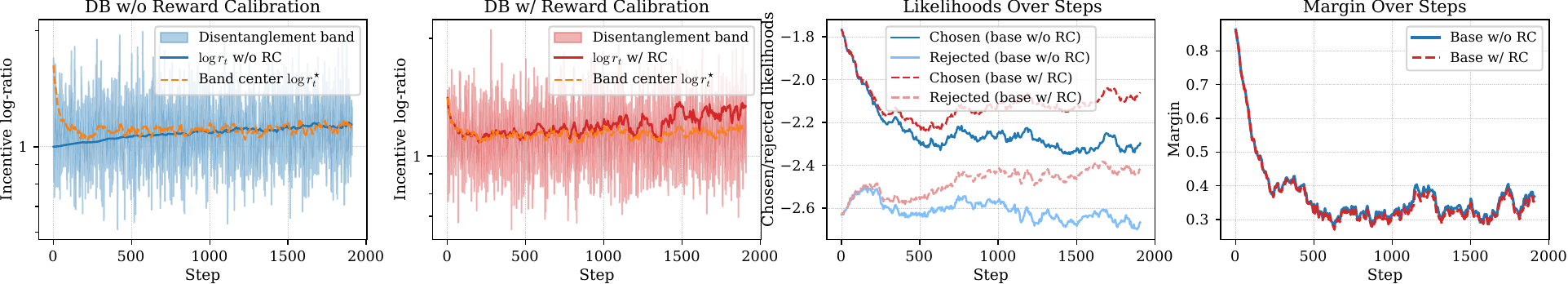}
        \caption{DIL-LSIF}
        \label{subfig:comprehensive_lsif_qwen-7b}
    \end{subfigure}
    
    \caption{
        Preference optimization dynamics of Qwen-7B under disentangled objectives. 
    }
    \label{fig:comprehensive-disentangled-qwen-7b}
\end{figure}

\end{document}